\documentclass{article}



\usepackage[preprint]{neurips_2025}



\usepackage[utf8]{inputenc} 
\usepackage[T1]{fontenc}    
\usepackage{hyperref}       
\usepackage{url}            
\usepackage{booktabs}       
\usepackage{amsfonts}       
\usepackage{nicefrac}       
\usepackage{microtype}      
\usepackage{xcolor}         

\usepackage{star}

\usepackage{amsmath,amsfonts,bm}









\def\eqref#1{equation~\ref{#1}}









\def\1{\bm{1}}










\DeclareMathAlphabet{\mathsfit}{\encodingdefault}{\sfdefault}{m}{sl}
\SetMathAlphabet{\mathsfit}{bold}{\encodingdefault}{\sfdefault}{bx}{n}











\newcommand{\KL}{D_{\mathrm{KL}}}



\DeclareMathOperator*{\argmin}{arg\,min}

\newcommand{\m}[1]{\mathbf{#1}}

\newtheorem{proper}{Criterion}

\usepackage{amsmath}
\usepackage{amssymb}
\usepackage{mathtools}
\usepackage{amsthm}

\usepackage{url}
\usepackage{microtype}
\usepackage{graphicx}
\usepackage{subfigure}
\usepackage{wrapfig,lipsum,booktabs}
\usepackage{booktabs}
\usepackage{hyperref}
\usepackage{multirow}

\usepackage{minitoc}


\newcommand{\unif}{{\textnormal{\small Unif}}}

\renewcommand{\hat}{\widehat}

\newcommand{\tr}{\mathop{\mathrm{tr}}}


\setlength\marginparwidth{3cm}

\usepackage[capitalize,noabbrev]{cleveref}

\theoremstyle{plain}
\theoremstyle{definition}

\usepackage[textsize=tiny]{todonotes}

\title{Enhancing Vector Quantization with Distributional Matching: A Theoretical and Empirical Study}

%
\vspace{-5pt}
\author{
  \vspace{-30pt}\\
  \textbf{Xianghong Fang$^{1}$ \quad Litao Guo$^{2}$ \quad Hengchao Chen$^{1}$ \quad Yuxuan Zhang$^1$ \quad XiaofanXia$^{1}$} \\
  \textbf{\quad Dingjie Song$^4$  \quad Yexin Liu$^{2}$  \quad Hao Wang$^5$  \quad Harry Yang$^{2}$  \quad Yuan Yuan$^{3}$~\footnotemark[1]  \quad Qiang Sun$^{1}$\thanks{Qiang Sun and Yuan Yuan are joint corresponding authors.}} \vspace{3pt} \\
  $^1$University of Toronto ~\quad $^2$The Hong Kong University of Science and Technology
  ~\quad $^3$ Boston College \\
  $^4$ Lehigh University 
  $^5$ Southern University of Science and Technology \\
  \texttt{\small \{xianghong.fang, hengchao.chen, yuxuanz.zhang, xiaofan.xia\}@mail.utoronto.ca} \\ 
  \texttt{\small qiang.sun@utoronto.ca, yuanyua@bc.edu, Yangharry@ust.hk}\\
  Codes and models:~\, \url{https://github.com/VQ-Research/Wasserstein-VQ}}
\vspace{-4pt}


\begin{document}

\maketitle

\begin{abstract}
The success of autoregressive models largely depends on the effectiveness of vector quantization, a technique that discretizes continuous features by mapping them to the nearest code vectors within a learnable codebook. Two critical issues in existing vector quantization methods are training instability and codebook collapse. Training instability arises from the gradient discrepancy introduced by the straight-through estimator, especially in the presence of significant quantization errors, while codebook collapse occurs when only a small subset of code vectors are utilized during training.
A closer examination of these issues reveals that they are primarily driven by a mismatch between the distributions of the features and code vectors, leading to unrepresentative code vectors and significant data information loss during compression. To address this, we employ the Wasserstein distance to align these two distributions, achieving near 100\% codebook utilization and significantly reducing the quantization error. Both empirical and theoretical analyses validate the effectiveness of the proposed approach.
\end{abstract}


\section{Introduction}\label{sec:introduction}


Autoregressive models have re-emerged as a powerful paradigm in visual generation, demonstrating significant advances in image synthesis quality. Recent studies~\citep{Razavi2019GeneratingDH,Esser2020TamingTF,Chang2022MaskGITMG,Lee2022AutoregressiveIG,Tian2024VisualAM,Li2024XQGANAO} highlight that autoregressive approaches now achieve superior results compared to diffusion-based methods~\citep{Ho2020DenoisingDP,Rombach2021HighResolutionIS,Sun2024AutoregressiveMB,Tian2024VisualAM,Ma2024STARST}. The success of autoregressive visual generative models hinges on the effectiveness of vector quantization (VQ)~\citep{Oord2017NeuralDR}, a technique that compresses and discretizes continuous features by mapping them to the nearest code vectors within a learnable codebook. However, VQ continues to face two major challenges: training instability and codebook collapse.

\begin{wrapfigure}[12]{r}{0.46\textwidth}
\small
\vspace{-6mm}
\begin{center}
\includegraphics[width=0.44\textwidth]{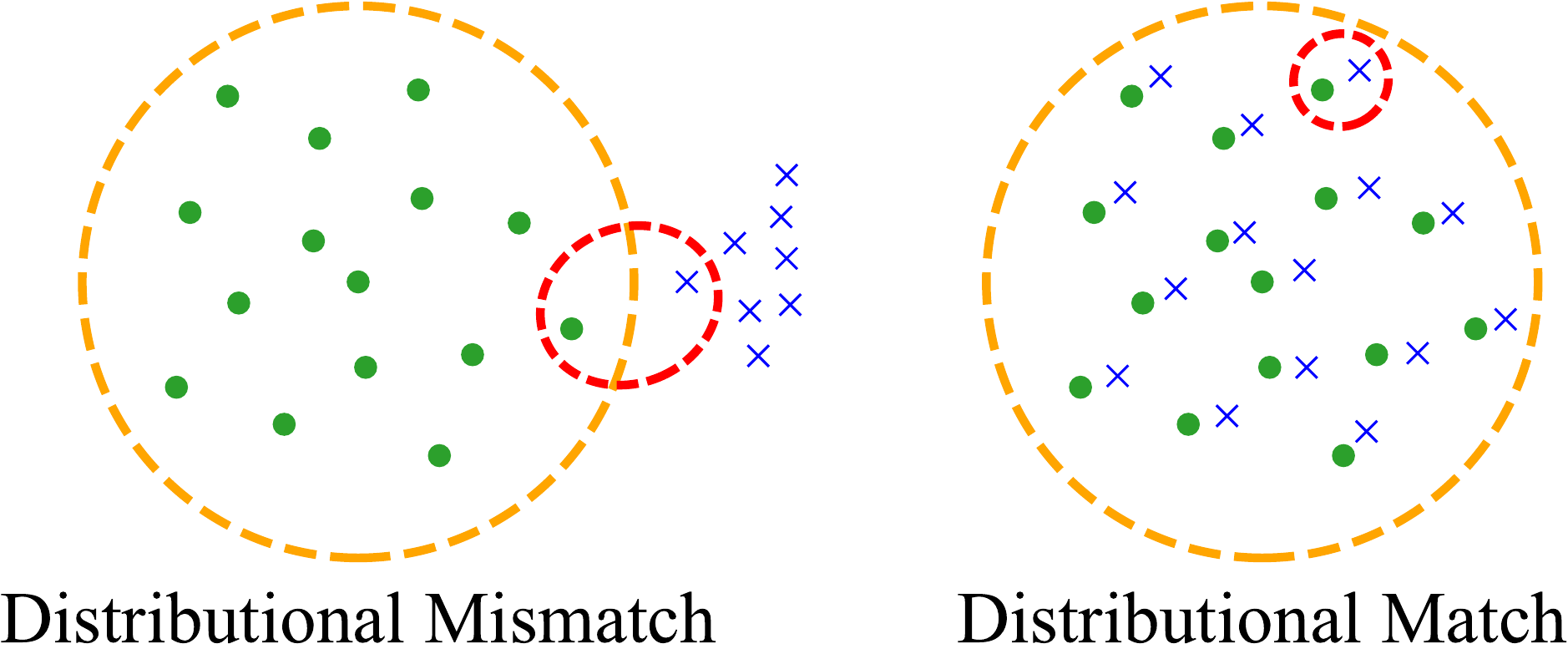}
\vspace{-2mm}
\caption{\small{The symbols $\cdot$ and $\times$ represent the feature and code vectors, respectively. The left figure illustrates the distributional mismatch between the feature and code vectors, while the right figure visualizes their distributional match.}}
\label{fig:distributional match}
\end{center}
\end{wrapfigure} 

The first issue originates from the non-differentiability of VQ, which prevents direct gradient backpropagation from quantized features to their continuous counterparts, thereby hindering effective model optimization. To address this challenge, VQ-VAE~\citep{Oord2017NeuralDR} introduces a straight-through estimator (STE)~\citep{bengio2013estimating}. The STE facilitates gradient propagation by 
copying the gradients from the quantized features to the continuous features. Nevertheless, the effectiveness of this approach is critically contingent upon the magnitude of the quantization error between the continuous and quantized feature vectors. When the quantization error is excessively large, the training process becomes notably unstable~\citep{Lee2022AutoregressiveIG}. 


The latter issue emerges due to the inability of existing VQ methods to ensure that all Voronoi cells\footnote{A comprehensive understanding of codebook collapse through the lens of Voronoi partition is provided in Appendix~\ref{appendix:understanding codebook collapse by Voronoi Partition}.} are assigned feature vectors. When only a minority of Voronoi cells are allocated feature vectors, leaving the majority unutilized and unoptimized, severe codebook collapse ensues~\citep{Zheng2023OnlineCC}. Despite considerable research efforts dedicated to mitigating this problem, these methods still exhibit relatively low utilization of code vectors, particularly in scenarios with large codebook sizes~\citep{Dhariwal2020JukeboxAG,Takida2022SQVAEVB,Yu2021VectorquantizedIM,Lee2022AutoregressiveIG,Zheng2023OnlineCC}. This is due to the fact that, as the codebook size increases, the number of Voronoi cells also increases, significantly raising the challenge of ensuring that every cell is assigned a feature vector.




In this paper, we examine these issues by investigating the distributions of the features and code vectors. To illustrate the idea,  Figure~\ref{fig:distributional match} presents two extreme scenarios: the left panel depicts a significant mismatch between the two distributions, while the right panel shows a match. In the left panel, all features are mapped to a single codeword, resulting in large quantization errors and minimal codebook utilization. In contrast, the right panel demonstrates that a distributional match leads to negligible quantization error and near 100\% codebook utilization. This suggests aligning these two distributions in VQ could potentially address the issues of training instability and codebook collapse. 



To investigate the idea above, we first introduce three principled criteria that a VQ method should satisfy. Guided by this criterion {triple}, we conduct qualitative and quantitative analyses, demonstrating that aligning the distributions of the feature and code vectors results in near 100\% codebook utilization and minimal quantization error. Additionally, our theoretical analysis underscores the importance of distribution matching for vector quantization. To achieve this alignment, we employ the quadratic Wasserstein distance which has a closed-form representation under a Gaussian hypothesis. Our approach effectively mitigates both training instability and codebook collapse, thereby enhancing image reconstruction performance in visual generative tasks.

\section{Understanding Distribution Matching}\label{sec:understanding}

This section introduces a novel distributional perspective for VQ. By defining three principled criteria for VQ evaluation, we empirically and theoretically demonstrate that distribution matching yields an almost optimal VQ solution.


\subsection{An Overview of Vector Quantization}\label{sec:overview vq}
As the core component in visual tokenizer~\citep{Oord2017NeuralDR,Lee2022AutoregressiveIG,Tian2024VisualAM}, VQ acts as a compressor that discretizes continuous latent features into discrete visual tokens by mapping them to the nearest code vectors within a learnable codebook. 


%

\begin{wrapfigure}[10]{r}{0.50\textwidth}
\small
\vspace{-6mm}
\begin{center}
\includegraphics[width=0.48\textwidth]{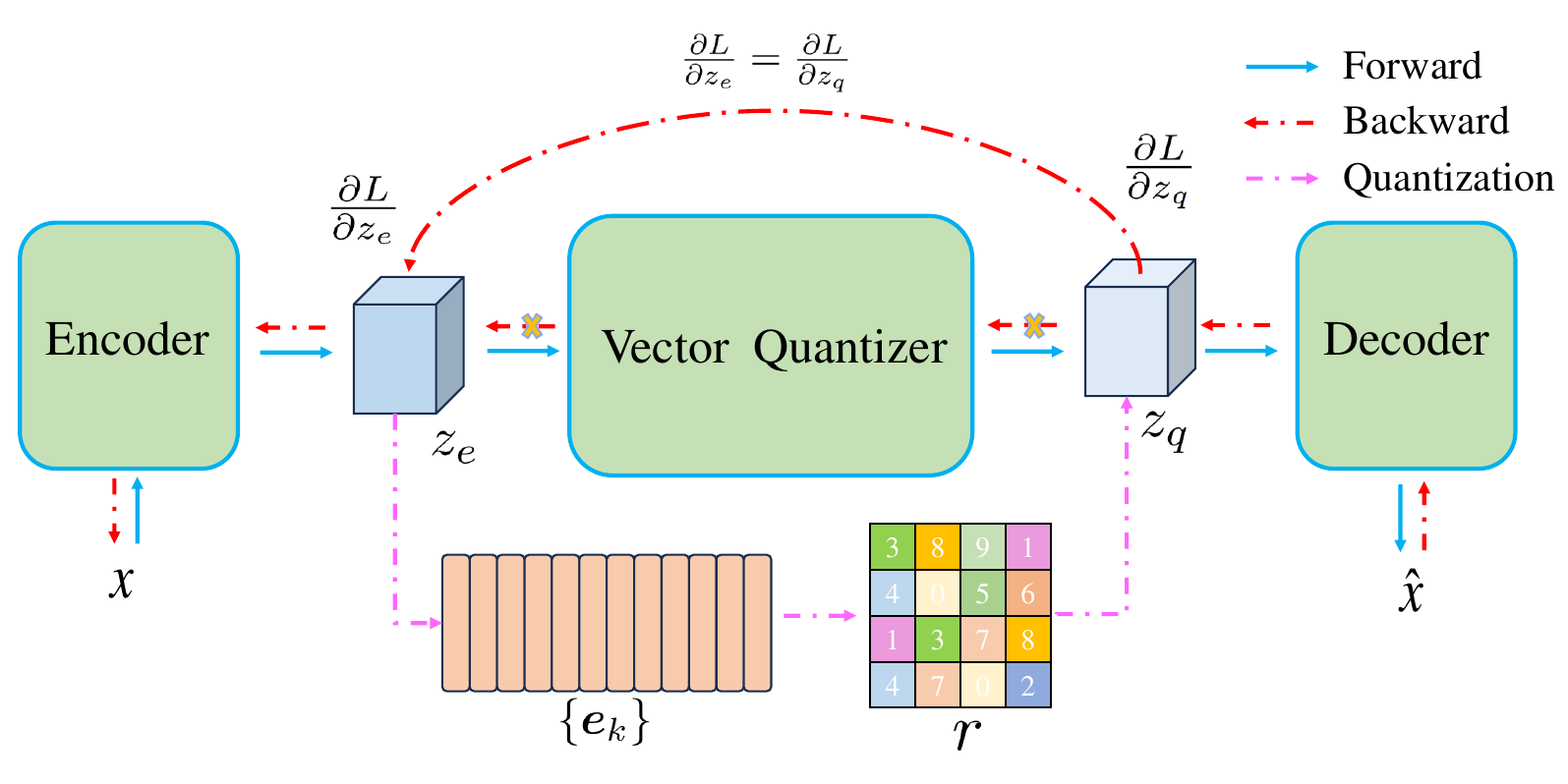}
\vspace{-4mm}
\caption{\small{The illustration of VQ.}}
\label{fig:vq model}
\end{center}
\end{wrapfigure} 

Figure~\ref{fig:vq model} illustrates the classic VQ process~\citep{Oord2017NeuralDR}, which consists of an encoder $E(\cdot)$, a decoder $D(\cdot)$, and an updatable codebook  $\{\m e_k\}_{k=1}^{K} \in \mathbb{R}^d$ containing a finite set of code vectors. Here, $K$ represents the size of the codebook, and $d$ denotes the dimension of the code vectors. Given an image $\bm{x}\in \mathbb{R}^{H\times W \times 3}$, the goal is to derive a spatial collection of codeword IDs $r \in \mathbb{N}^{h\times w}$ as image tokens. This is  achieved by passing the image through the encoder to obtain $\bm{z}_e = E(\bm{x}) \in \mathbb{R}^{h\times w \times d}$, followed by a spatial-wise quantizer $\mathcal{Q}(\cdot)$ that maps each spatial feature $\bm{z}_{e}^{ij}$ to its nearest code vector $\bm{e}_k$:
\begin{equation}
r^{ij} = \argmin_{k} \|\bm{z}_{e}^{ij} - \bm{e}_k\|_2^2.
\end{equation}
These tokens are then used to retrieve the corresponding codebook entries $ \bm{z}_q^{ij} = \mathcal{Q}(\bm{z}_{e}^{ij}) = \bm{e}_{r^{ij}}$, which are subsequently passed through the decoder to reconstruct the image as $\bm{\hat{x}} = D(\bm{z}_q)$. Despite its success in high-fidelity image synthesis~\citep{Oord2017NeuralDR,Razavi2019GeneratingDH,Esser2020TamingTF}, VQ faces two key challenges: training instability and codebook collapse.

\paragraph{Training Instability} 


This issue occurs because during backpropagation, the gradient of $\bm{z}_q$ cannot flow directly to $\bm{z}_e$ due to the non-differentiable function $\mathcal{Q}$. To optimize the encoder's network parameters through backpropagation, VQ-VAE~\citep{Oord2017NeuralDR} employs the straight-through estimator~(STE)~\citep{Bengio2013EstimatingOP}, which copies gradients directly from $\bm{z}_q$ to $\bm{z}_e$. However, this approach carries significant risks—especially when $\bm{z}_q$ and $\bm{z}_e$ are far apart. In these cases, the gradient gap between the representations can grow substantially, destabilizing the training process. In this paper, we tackle the training instability challenge from a distributional viewpoint.


\paragraph{Codebook Collapse} 



Codebook collapse occurs when only a small subset of code vectors receives optimization-useful gradients, while most remain unrepresentative and unupdated~\citep{Dhariwal2020JukeboxAG,Takida2022SQVAEVB,Yu2021VectorquantizedIM,Lee2022AutoregressiveIG,Zheng2023OnlineCC}. Researchers have proposed various solutions to this problem, such as improved codebook initialization~\citep{Zhu2024ScalingTC}, reinitialization strategies~\citep{Dhariwal2020JukeboxAG,Williams2020HierarchicalQA}, and classical clustering algorithms like $k$-means~\citep{Bradley1998RefiningIP} and $k$-means++\citep{Arthur2007kmeansTA} for codebook optimization~\citep{Razavi2019GeneratingDH,Zheng2023OnlineCC}. Beyond these deterministic approaches that select the best-matching token, researchers have also explored stochastic quantization strategies~\citep{Zhang2023RegularizedVQ,Ramesh2021ZeroShotTG,Takida2022SQVAEVB}.

However, these methods still exhibit relatively low utilization of code vectors, particularly with large codebook sizes $K$~\citep{Zheng2023OnlineCC,Mentzer2023FiniteSQ}. In this paper, we address this issue by the distributional matching between feature vectors and code vectors.

\subsection{Evaluation Criteria}\label{sec:distributional formulation}

Given a set of feature vectors $\{\bz_i\}_{i=1}^N$ sampled from feature distribution $\mathcal{P}_A$ and code vectors $\{\be_k\}_{k=1}^K$ sampled from codebook distribution $\mathcal{P}_B$, vector quantization involves finding the nearest, and thus most representative,  code vector for each feature vector: 
\$
\bz_i'=\argmin_{\be\in\{\be_k\}}\norm{\bz_i-\be}.
\$
The original feature vector $\bz_i$ is then quantized to $\bz_i'$. Below, we introduce three key criteria to evaluate this process.

\begin{proper}[Quantization Error]\label{criteria:qe} 
The quantization error measures the average distortion introduced by VQ and is defined as
\vspace{-1.0ex}
\$
\cE(\{\be_k\};\{\bz_i\})=\frac{1}{N}\sum_{i}\norm{\bz_i-\bz_i'}^2.
\$
\end{proper}
A smaller $\cE$ signifies a more accurate quantization of the original feature vectors, resulting in a smaller gradient gap between $\bz_i$ and $\bz_i'$. Consequently, a small $\cE$ suggests that the issue of training instability can be effectively mitigated.

\begin{proper}[Codebook Utilization Rate]\label{criteria:cu} The codebook utilization rate measures the proportion of code vectors used in VQ and is defined as
\$
\cU(\{\be_k\};\{\bz_i\})=\frac{1}{N}\sum_{i=1}^N\one(\be_k=\bz_i'\textnormal{ for some }i).
\$ 
\end{proper}
A higher value of $\cU$ reduces the risk of codebook collapse. Ideally, $\cU$ should reach 100$\%$, indicating that all code vectors are utilized. As discussed in Appendix~\ref{appendix:explanation on criterions}, $\cU$ can only measure the \emph{completeness} of codebook utilization; it does not suffice to evaluate the degree of codebook collapse. This motivates us to introduce the codebook perplexity criterion.

\begin{proper}[Codebook Perplexity]\label{criteria:cp} The codebook perplexity measures the uniformity of codebook utilization in VQ and is defined as
\$
\mathcal{C}(\{\be_k\};\{\bz_i\}) = \exp(-\sum_{k=1}^{K} p_k \log p_k),
\$ 
\end{proper}
where $p_k =\frac{1}{N}\sum_{i=1}^N\one(\bz_i'=\be_k)$. A higher value of $\mathcal{C}$ indicates that code vectors are more uniformly selected in the VQ process. Ideally, $\mathcal{C}$ reaches its maximum at $\mathcal{C}_{0} = \exp(-\sum_{k=1}^{K} \frac{1}{K} \log \frac{1}{K})=K$ when code vectors are completely uniformly utilized. Therefore, as a complementary measure to Criterion~\ref{criteria:cu}, the combination of $\cU$ and $\mathcal{C}$ can effectively evaluate the degree of codebook collapse.

We refer to $(\cE, \cU, \mathcal{C})$ as the criterion triple. When comparing extreme cases of distributional match and mismatch shown in Figure~\ref{fig:distributional match}, we find that distributional matching significantly outperforms mismatching across all three criteria. Using this criterion triple, we present detailed analyses that demonstrate the advantages of distribution matching.

\begin{figure*}[!t]
    \vspace{-2mm}
	\centering
	\subfigure[(1.19, $2\%$, 3.8)]{ 
		\label{fig:diagram 1 group 1}  
		\includegraphics[width=0.22\textwidth]{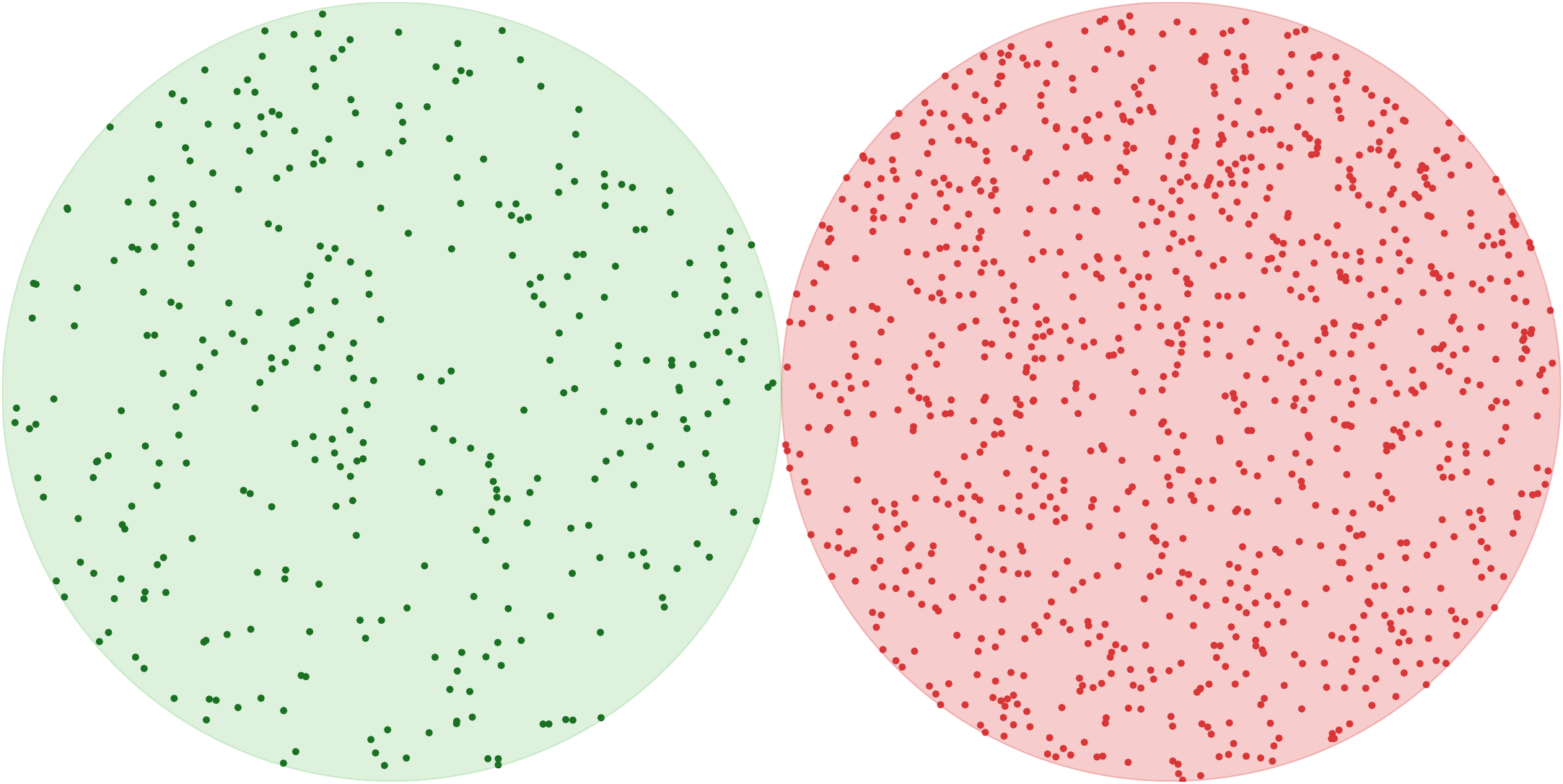}
        }
	\hspace{0.03cm}
	\subfigure[(0.70, $20.8\%$, 16.5)]{
		\label{fig:diagram 2 group 1}
		\includegraphics[width=0.22\textwidth]{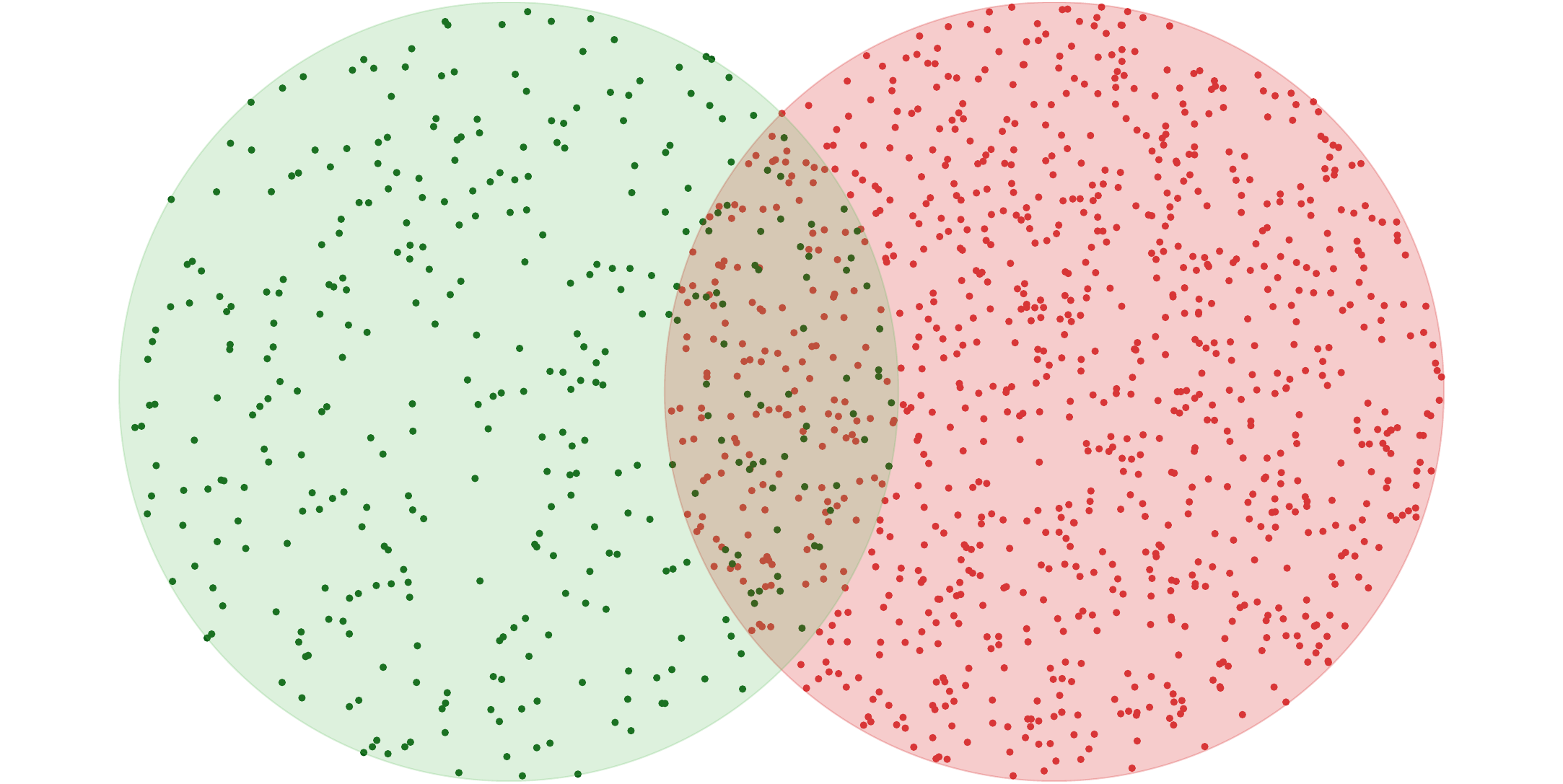}
	}
        \hspace{0.03cm}
	\subfigure[(0.26, $57.8\%$, 96.9)]{ 
		\label{fig:diagram 3 group 1}  
		\includegraphics[width=0.22\textwidth]{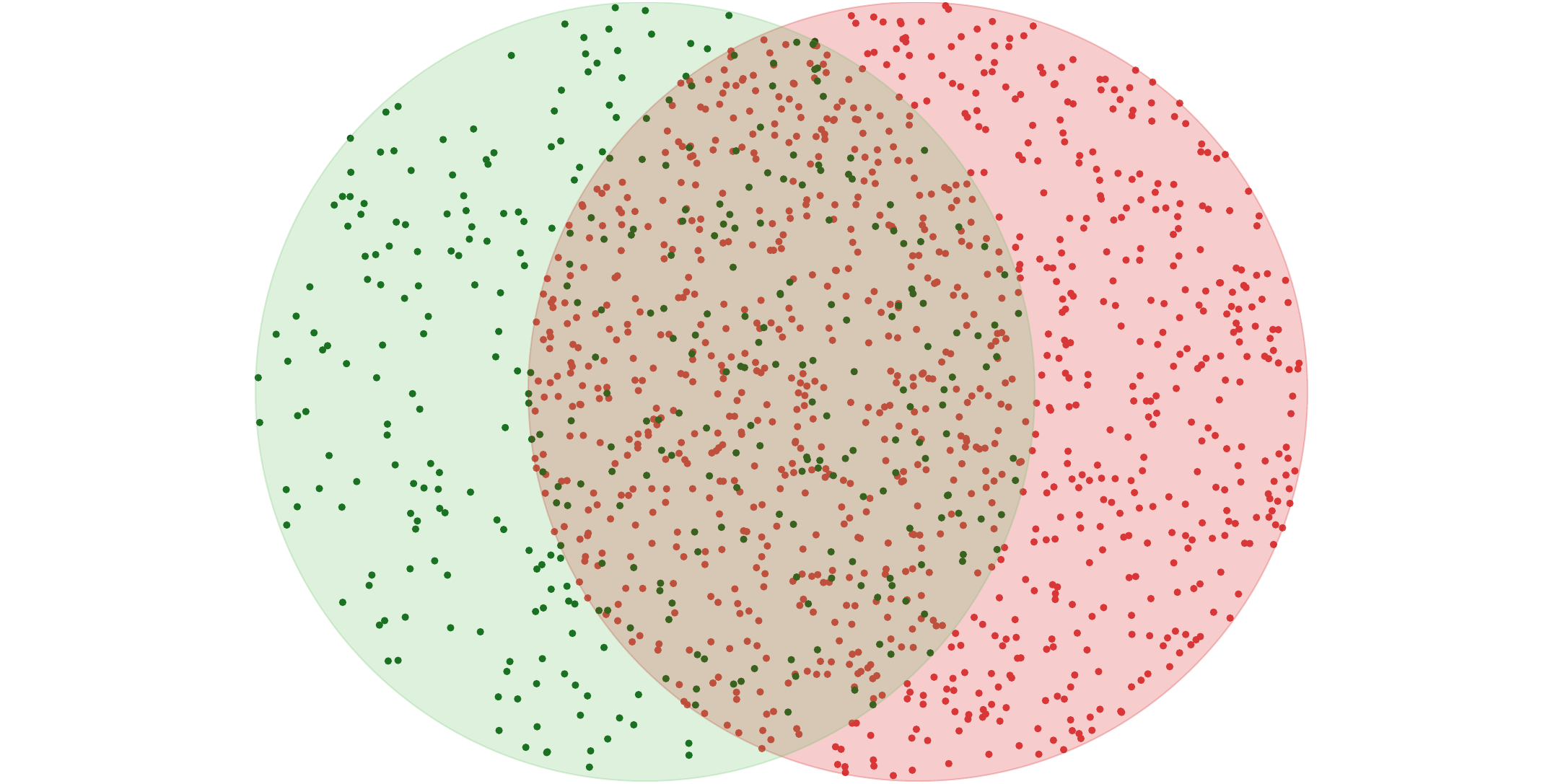}
	}
        \hspace{0.03cm}
	\subfigure[(0.05, $100\%$, 344.9)]{
		\label{fig:diagram 4 group 1}
		\includegraphics[width=0.22\textwidth]{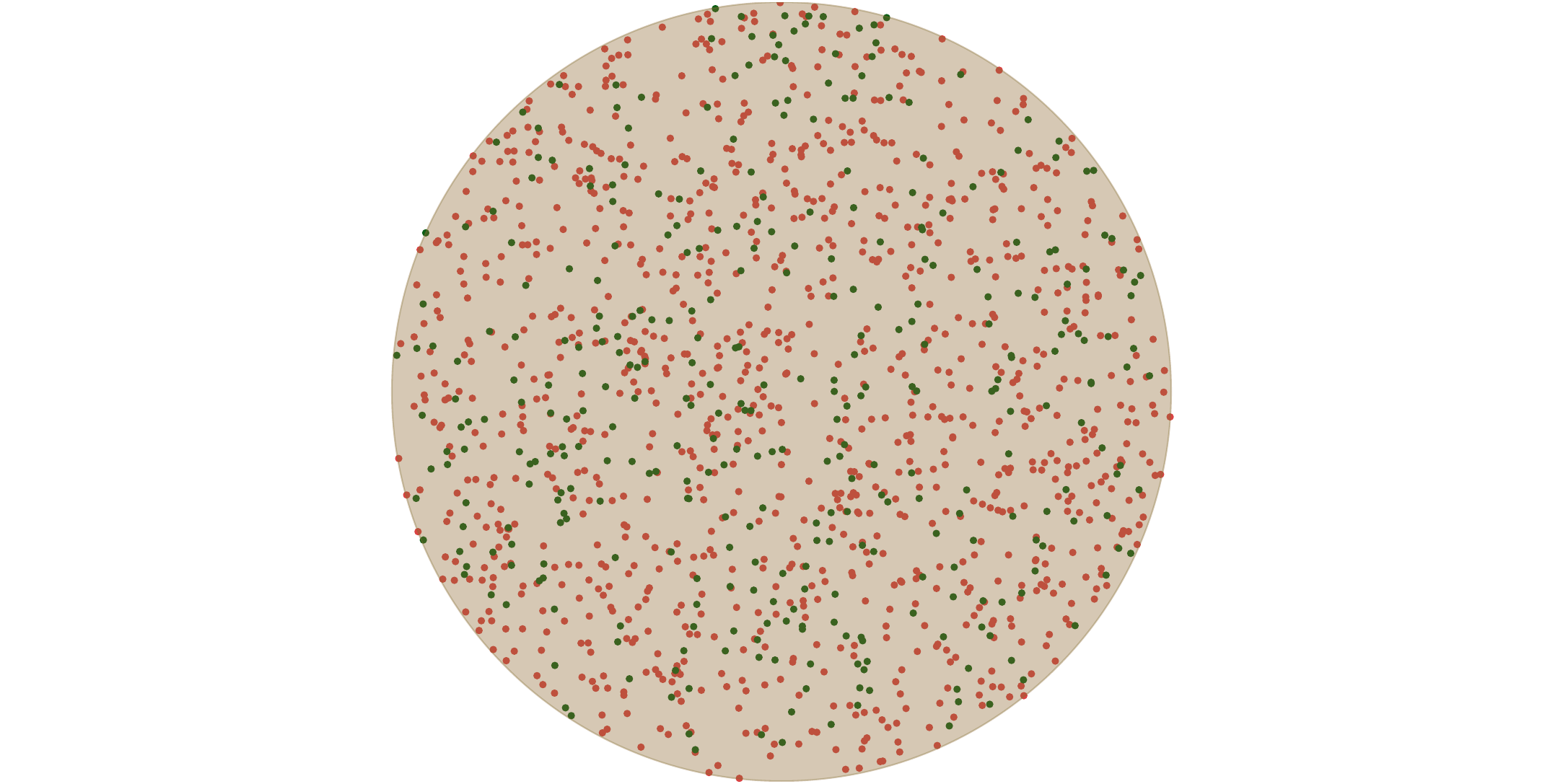}
	}
	\subfigure[(0.36, $93.3\%$,          
               63.2)]{
		\label{fig:diagram 1 group 2}
		\includegraphics[width=0.22\textwidth]{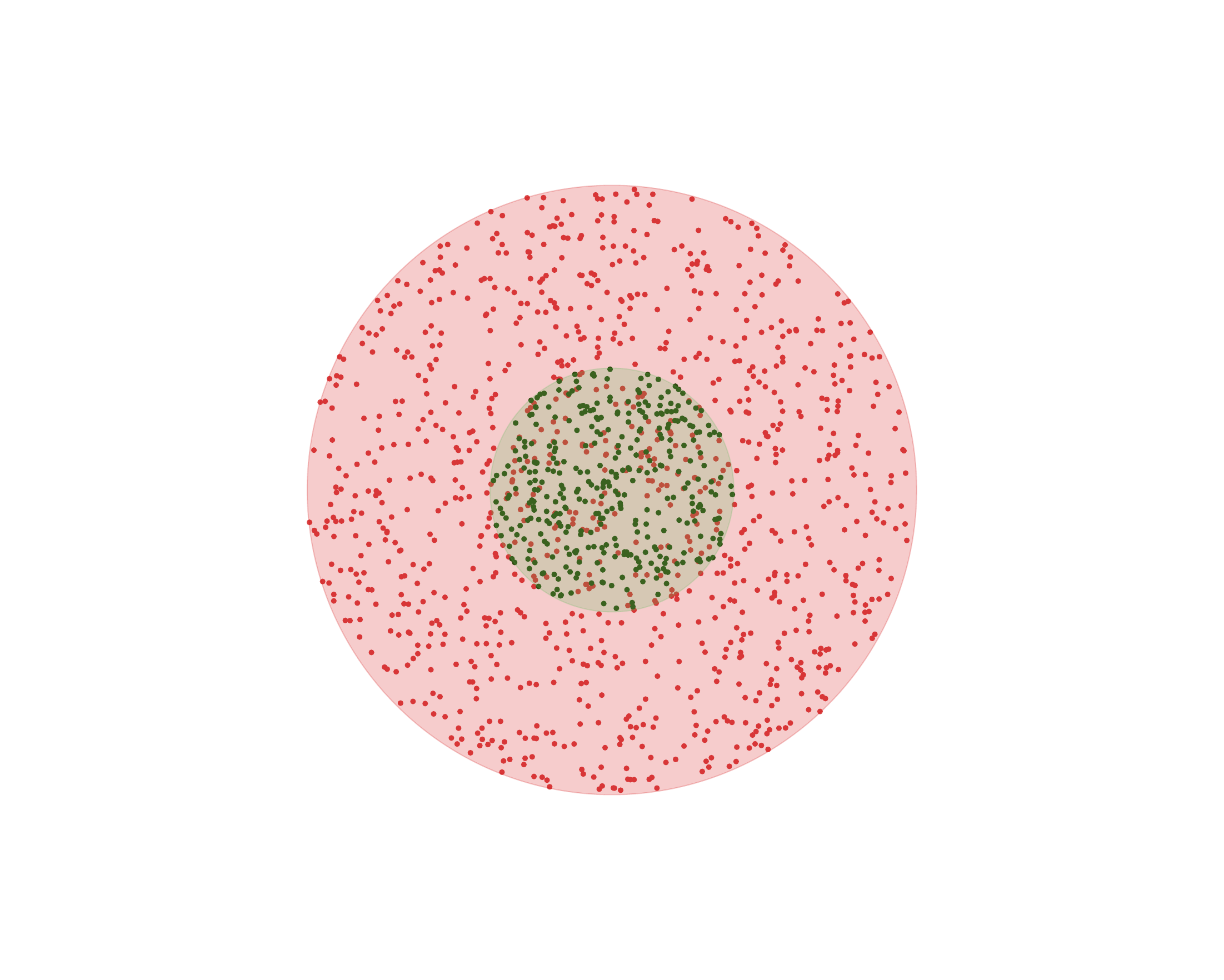}
	}
        \hspace{0.03cm}
	\subfigure[(0.10, $99.8\%$, 250.5)]{
		\label{fig:diagram 2 group 2}
		\includegraphics[width=0.22\textwidth]{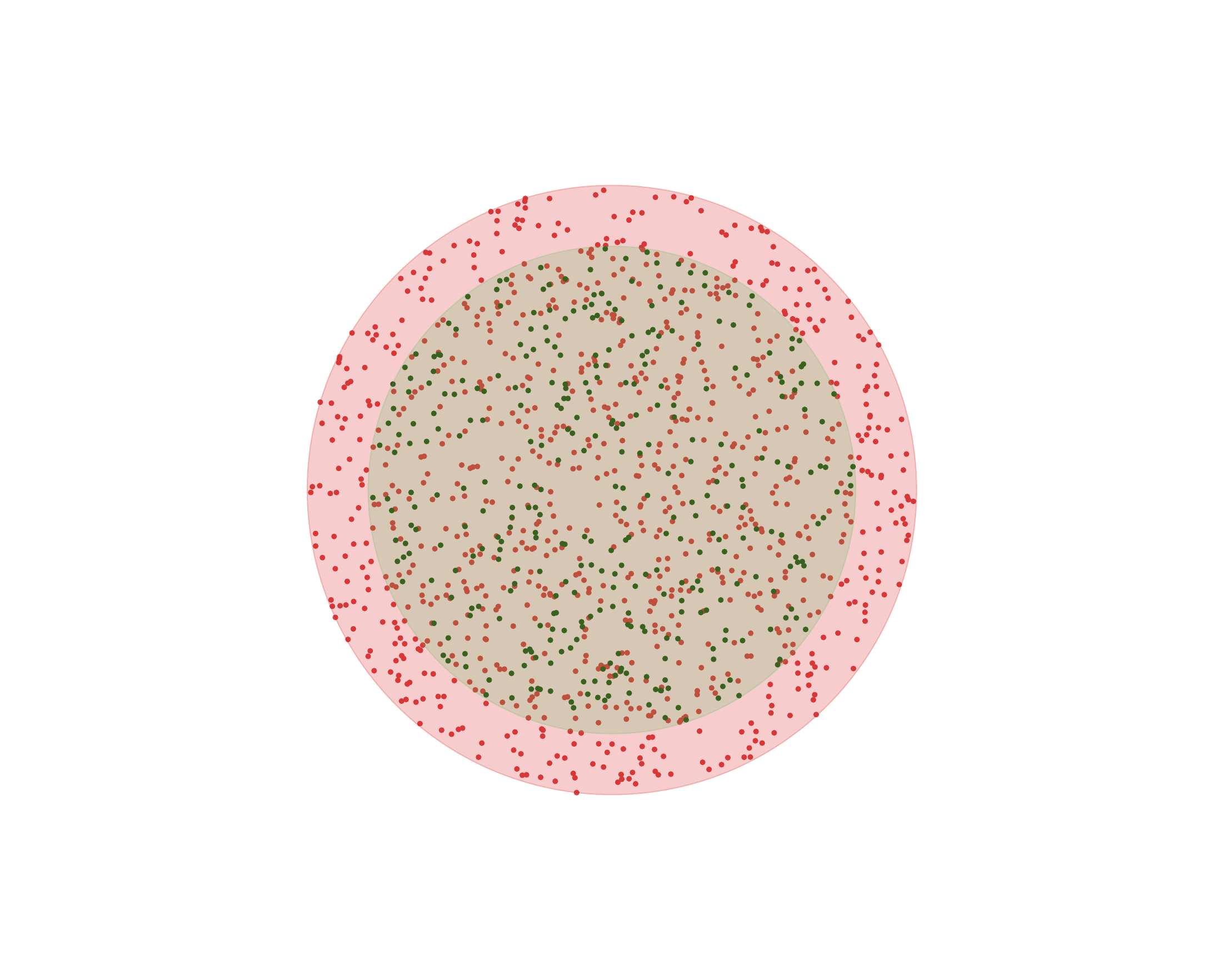}
	}
        \hspace{0.03cm}
	\subfigure[(0.07, $61.3\%$, 199.7)]{
		\label{fig:diagram 3 group 2}
		\includegraphics[width=0.22\textwidth]{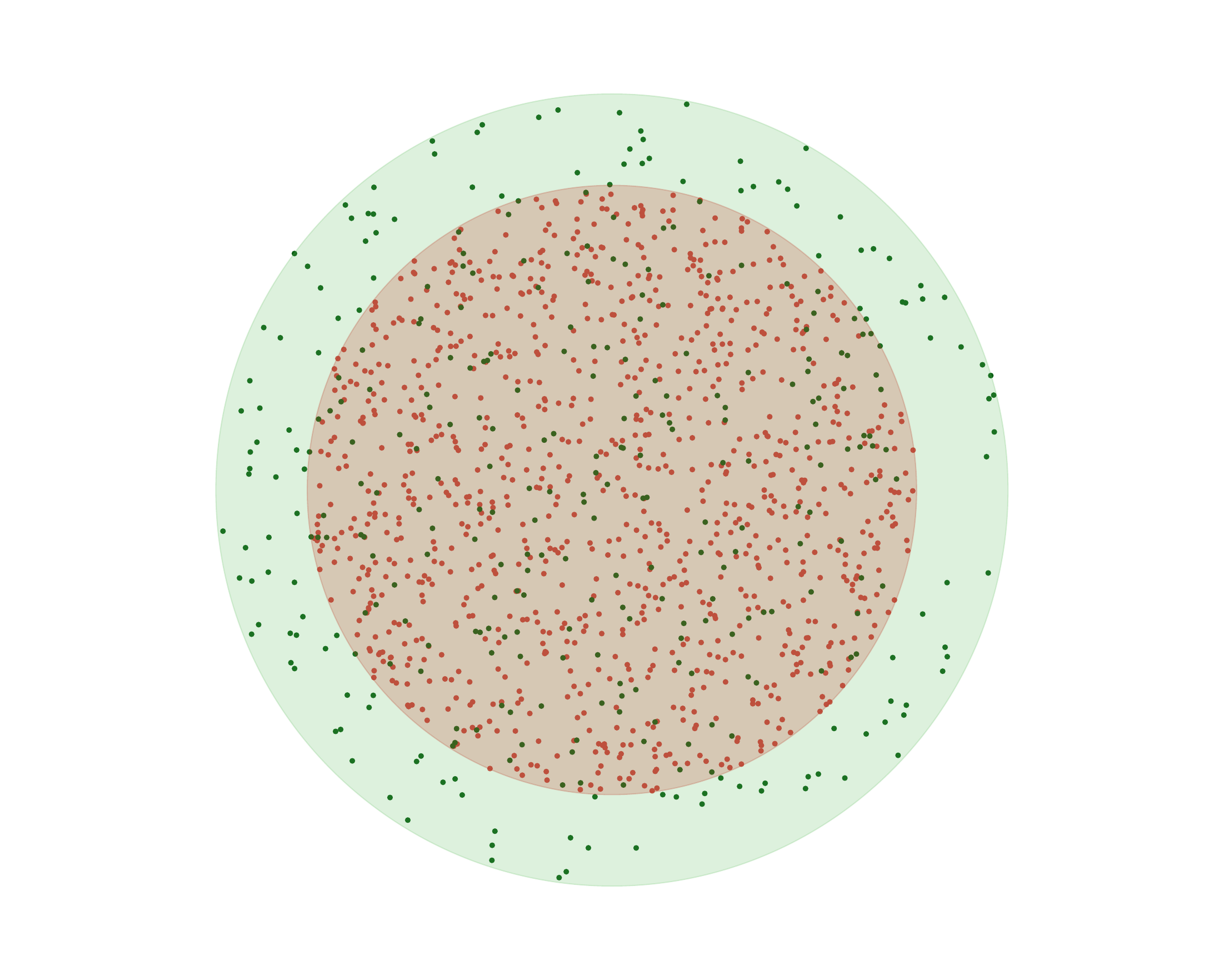}
	}
	\hspace{0.03cm}
	\subfigure[(0.08, $45.3\%$, 151.5)]{
		\label{fig:diagram 4 group 2}
		\includegraphics[width=0.22\textwidth]{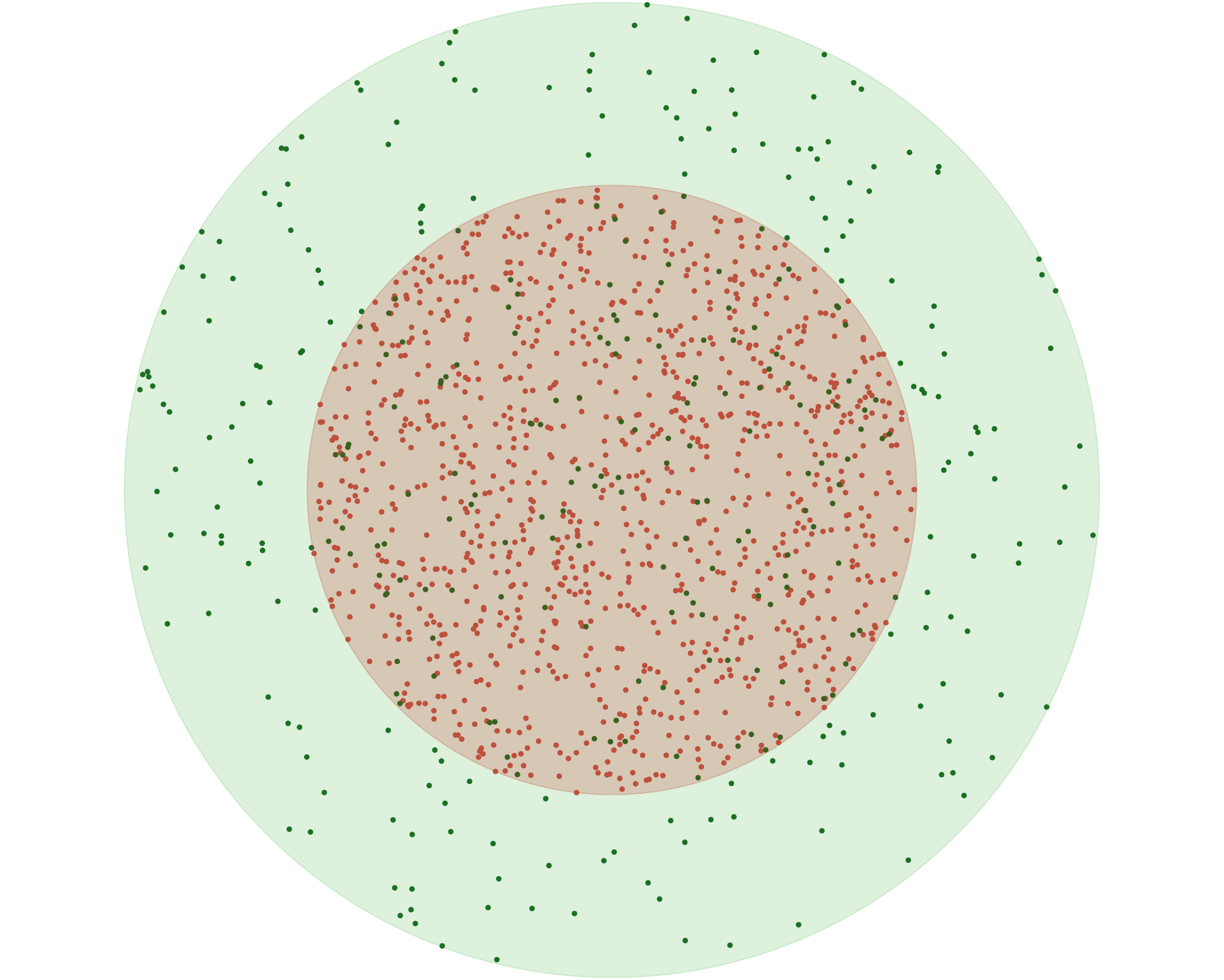}
	}
    \vspace{-2mm}
	\caption{\small{Qualitative analyses of the criterion triple $(\cE, \cU, \cC)$: The red and green disks represent the uniform distributions of feature vectors and code vectors, respectively.}}
	\label{fig:criterion triple analysis}
    \vspace{-4mm}
\end{figure*}

\subsection{The Effects of Distribution Matching} \label{sec:effects of distribution matching}
We conduct a simple synthetic experiment to provide intuitive insights (See experimental details in Appendix~\ref{appendix:details part1}). 
Specifically, we assume that the distributions $\mathcal{P}_A$ and $\mathcal{P}_B$ are uniform distributions confined within two distinct disks, as depicted in Figure~\ref{fig:criterion triple analysis}. We then sample a set of feature vectors $\{\bz_i\}_{i=1}^N$ uniformly from the red disk, and a set of code vectors $\{\be_k\}_{k=1}^K$ uniformly from the green circle. The criterion triple $(\cE, \cU, \mathcal{C})$ is then calculated based on the definitions in Criteria~\ref{criteria:qe} to~\ref{criteria:cp}.

We examine two cases. The first involves two disks with identical radii but different centers. As shown in Figures~\ref{fig:diagram 1 group 1} to~\ref{fig:diagram 4 group 1}, when the centers of the disks move closer together, the criterion triple improves toward optimal values. Specifically, $\cE$ decreases from 1.19 to 0.05, $\cU$ rises from $2\%$ to $100\%$, and $\mathcal{C}$ increases from 3.8 to 344.9.

The second case shows two distributions with identical centers but different radii. When the codebook distribution's support lies within the feature distribution's support (as shown in Figures~\ref{fig:diagram 1 group 2} and~\ref{fig:diagram 2 group 2}), it results in a notably larger $\cE$, slightly lower $\cU$, and significantly smaller $\mathcal{C}$ compared to the aligned distributions shown in Figure~\ref{fig:diagram 4 group 1}. Conversely, when the codebook distribution's support extends beyond the feature distribution's support, $\cE$ shows a modest increase while both $\cU$ and $\mathcal{C}$ decrease significantly, as illustrated in Figures~\ref{fig:diagram 3 group 2} and~\ref{fig:diagram 4 group 2}. We provide detailed explanations of these experimental results in Appendix~\ref{appendix:prototypical study}. 

From both cases, we can conclude that the VQ achieves the optimal criterion triple when the feature and codebook distributions are identical. This observation will be further supported by more quantitative analyses in Appendix~\ref{appendix:supplementary comprehensive quantitative analyses}.

\subsection{Theoretical Analyses} \label{sec:theoretical analysis}

In this section, we provide theoretical evidence to support our empirical observations. 
Let the code vectors $\{\be_k\}_{k=1}^K$ and  feature vectors $\{\bz_i\}_{i=1}^N$ be independently and identically drawn from $\cP_B$ and $\cP_A$, respectively.
We say a codebook $\{\be_k\}_{k=1}^K$ attains full utilization asymptotically with respect to  $\{\bz_i\}_{i=1}^N$ if the codebook utilization rate $\cU(\{\be_k\}_{k=1}^K;\{\bz_i\}_{i=1}^N)$ tends to 1 in probability as $N$ approaches infinity:
\$
\cU(\{\be_k\}_{k=1}^K;\{\bz_k\}_{i=1}^N)\overset{p}{\to}1,\quad \textnormal{as }N\to\infty.
\$
For the codebook distribution $\cP_B$, we say it attains full utilization asymptotically with respect to $\cP_A$ if, with probability 1, the randomly generated codebook $\{\be_k\}_{k=1}^K$ achieves full utilization asymptotically.

Additionally, a codebook distribution $\cP_B$ is said to have vanishing quantization error asymptotically with respect to a domain $\Omega\subseteq\RR^d$ if the quantization error over all data of size $N$  tends to zero in probability as $K$ approaches infinity:
\#\label{equ:W25}
\sup_{\{\bz_i\}\subseteq\Omega}\cE(\{\be_k\}_{k=1}^K;\{\bz_i\}_{i=1}^N)\overset{p}{\to}0,\quad \textnormal{as }K\to\infty.
\#
Our first theorem shows that $\overline{\supp(\cP_A)}=\overline{\supp(\cP_B)}$ is sufficient and necessary  for the codebook distribution $\cP_B$ to attain both full utilization and vanishing quantization error asymptotically. For simplicity, $\cP_A$ is assumed to have a density function $f_A$ with bounded support $\Omega\subseteq\RR^d$. 

\begin{theorem}\label{thm:1}
    Assume $\Omega=\supp(\cP_A)$ is a bounded open area. The codebook distribution $\cP_B$ attains full utilization and vanishing quantization error asymptotically if and only if $\overline{\supp(\cP_B)}=\overline{\supp(\cP_A)}$, where $\overline{\cS}$ denotes the closure of the set $\cS$. 
\end{theorem}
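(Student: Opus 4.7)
The plan is to reduce both asymptotic conditions to clean geometric statements about the code vectors. Observe that $\cE(\{\be_k\};\{\bz_i\})\le R_K^2$ where $R_K:=\sup_{\bz\in\Omega}\min_k\|\bz-\be_k\|$ is the covering radius, with equality attained when the data set degenerates to a singleton at the worst point; hence ``vanishing quantization error asymptotically'' is equivalent to $R_K\overset{p}{\to}0$ as $K\to\infty$. For utilization, once a codebook $\{\be_k\}_{k=1}^K$ is fixed, a law of large numbers plus union bound shows that $\cU\to 1$ as $N\to\infty$ if and only if every Voronoi cell $V_k=\{\bz:\|\bz-\be_k\|\le\|\bz-\be_j\|\text{ for all }j\}$ has strictly positive $\cP_A$-mass.

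For the sufficiency direction, assume $\overline{\supp(\cP_B)}=\overline{\supp(\cP_A)}=\overline{\Omega}$. To bound the covering radius, fix $\epsilon>0$ and cover the compact set $\overline{\Omega}$ by finitely many balls $B_{\epsilon/2}(\bx_j)$, $j=1,\ldots,M$, whose centers lie in $\overline{\supp(\cP_B)}$; each such ball has positive $\cP_B$-mass $p_j>0$, so by independence $P(R_K\ge\epsilon)\le M(1-\min_j p_j)^K\to 0$. For full utilization at each fixed $K$, each $\be_k\sim\cP_B$ lies in $\overline{\supp(\cP_A)}$ almost surely, so every open ball around $\be_k$ has positive $\cP_A$-mass; shrinking its radius below $\tfrac12\min_{j\neq k}\|\be_k-\be_j\|$ confines the ball to $V_k$, giving $\cP_A(V_k)>0$ and hence $\cU\to 1$ a.s.\ by the equivalence above.

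For the necessity direction, I would argue by contrapositive in two cases. If $\overline{\Omega}\not\subseteq\overline{\supp(\cP_B)}$, pick $\bx\in\overline{\Omega}$ and $\epsilon>0$ with $B_\epsilon(\bx)\cap\supp(\cP_B)=\emptyset$; since $\Omega$ is open, some $\bx'\in B_{\epsilon/2}(\bx)\cap\Omega$ exists, and every $\be_k$ satisfies $\|\bx'-\be_k\|\ge\epsilon/2$ almost surely, contradicting $R_K\overset{p}{\to}0$. If $\overline{\supp(\cP_B)}\not\subseteq\overline{\Omega}$, pick $\by\in\supp(\cP_B)$ and $\epsilon>0$ with $B_\epsilon(\by)\cap\overline{\Omega}=\emptyset$. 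Then with probability tending to $1$ as $K\to\infty$ some $\be_k$ falls in $B_{\epsilon/2}(\by)$, while under the vanishing quantization error assumption there is simultaneously some $\be_j$ within $\epsilon/4$ of every $\bz\in\Omega$; consequently each such $\bz$ is closer to $\be_j$ than to $\be_k$, giving $V_k\cap\Omega=\emptyset$, so $\cP_A(V_k)=0$ and $\be_k$ is unused regardless of $N$, which rules out full utilization.

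The main obstacle is coordinating the two independent scalings $K\to\infty$ (for quantization error) and $N\to\infty$ (for utilization) while keeping the ``with probability $1$'' and ``in probability'' qualifiers straight. In particular, the second necessity case uses \emph{both} assumed conditions at once: vanishing quantization error forces the ``good'' code vectors to become dense in $\Omega$, which in turn starves any stray code vector drawn from $\supp(\cP_B)\setminus\overline{\Omega}$ of data points. A secondary technicality is ensuring that each code vector's Voronoi cell contains an open ball around its generator, which needs the code vectors to be almost surely distinct (true under mild continuity of $\cP_B$) and a careful handling of codewords landing on $\partial\Omega$ via the positive-mass-of-neighborhoods property inherited from membership in $\overline{\supp(\cP_A)}$.
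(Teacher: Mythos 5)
Your proof is correct and follows the paper's own argument: both directions reduce to the same geometric statements (covering radius over $\overline{\Omega}$ and positive-mass Voronoi cells), and necessity splits into the same two support inclusions, the second of which combines both hypotheses to manufacture an unused code vector. You are somewhat more careful on two details the paper glosses over: the paper's requirement $\{\be_k\}_{k=1}^{K_0}\subseteq\supp(\cP_A)$ in its unused-codeword construction can have probability zero (even when $\supp(\cP_A)\subseteq\overline{\supp(\cP_B)}$), which your coverage-to-within-$\epsilon/4$ version sidesteps, and you correctly flag the implicit need for a.s.\ distinct code vectors so that each Voronoi cell contains an open ball around its generator.
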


Theorem \ref{thm:1} establishes the optimal support of the codebook distribution. The boundedness of $\Omega$ is required as we consider the worst case quantization error in \eqref{equ:W25}. In real applications, when $\cP_A$ follows an absolutely continuous distribution over an unbounded domain, then $\{z_i\}_{i=1}^N$ generated from $\cP_A$ will be bounded with high probability. Thus, Theorem \ref{thm:1} also provides theoretical insights for a target distribution $\cP_A$ with an unbounded domain.

Besides the optimal support, we also determine the optimal density of the codebook distribution by invoking existing results characterizing asymptotic optimal quantizers \citep{graf2000foundations}. Specifically, we consider the case where $N$ approaches to infinity and define the expected quantization error of a codebook $\{\be_k\}$ with respect to $\cP_\cA$ as 
\$
\cE(\{\be_k\}_{k=1}^K;\cP_A)=\EE_{\bz\sim\cP_A}\min_{\be\in\{\be_k\}}\norm{\bz-\be}^2.
\$
A codebook $\{\be^*_k\}_{k=1}^K$ is called the set of optimal centers for $\cP_A$ if it achieves the minimal quantization error:
\$
\cE(\{\be^*_k\}_{k=1}^K;\cP_A)=\min_{\{\be_k\}_{k=1}^K}\cE(\{\be_k\}_{k=1}^K;\cP_A).
\$
Theorem \ref{thm:2} demonstrates that,  under weak regularity conditions,  the empirical measure of the optimal centers for $\cP_A$ converges in distribution to a fixed distribution determined by $\cP_A$. Notably, we do not assume a bounded domain in the following theorem.

\begin{theorem}[Theorem 7.5, \cite{graf2000foundations}]\label{thm:2}
Suppose $Z\sim\cP_A$ is absolutely continuous with respect to the Lesbegue measure in $\RR^d$ and $\EE\norm{Z}^{2+\delta}<\infty$ for some $\delta>0$.
Then the empirical measure of the optimal centers for $\cP_A$, 
\$
\frac{1}{K}\sum_{k=1}^K\delta_{\be^*_k},
\$
converges weakly to a fixed distribution $\cP_A^*$, whose density function $f_A^*$ is proportional to $f_A^{(d+2)/d}$. 
\end{theorem}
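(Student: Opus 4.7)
The plan is to invoke the high-resolution quantization theory of Zador--Bucklew--Wise in the measure-theoretic form developed in \cite{graf2000foundations}. First I would reduce to the case of a bounded target by truncation: the moment hypothesis $\EE\norm{Z}^{2+\delta}<\infty$ implies that the contribution to the optimal quantization error coming from outside a large ball $B_R$ is $o(K^{-2/d})$, so replacing $\cP_A$ with $\cP_A|_{B_R}/\cP_A(B_R)$ perturbs neither the Zador rate nor the limiting empirical distribution of optimal centers as $R\to\infty$. This reduces the analysis to a compact region where $f_A$ is uniformly bounded.

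On the bounded region I would use a two-scale argument. Partition $\RR^d$ into cubes $\{Q_j\}$ of sidelength $\varepsilon\to 0$ and treat $f_A$ as constant $f_A(z_j)$ on each $Q_j$. If $K_j$ of the optimal centers fall in $Q_j$, the classical Zador bound for the uniform distribution on a cube gives a distortion contribution of $C_d\, f_A(z_j)\,|Q_j|^{1+2/d}\,K_j^{-2/d}(1+o(1))$, with $C_d$ a universal dimensional constant. Summing over $j$ and minimizing the resulting functional $\sum_j f_A(z_j)|Q_j|^{1+2/d} K_j^{-2/d}$ under $\sum_j K_j\le K$ via H\"older's inequality pins down the asymptotically optimal allocation $K_j/K \propto f_A(z_j)^{d/(d+2)}|Q_j|$; passing to the continuum limit $\varepsilon\to 0$, the empirical density of centers becomes proportional to the power of $f_A$ asserted in the theorem.

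The main obstacle is upgrading these heuristic counting bounds to a genuine weak-convergence statement for the empirical measure $\hat\mu_K := \tfrac{1}{K}\sum_{k=1}^K \delta_{\be_k^*}$. The upper bound---existence of codebooks realizing the target allocation and matching the Zador rate---is constructive and follows by tiling. The lower bound requires ruling out alternative limit points: given any weak subsequential limit $\hat\mu_{K_n}\rightharpoonup \nu$, a cube-by-cube Zador lower bound yields
\[
\liminf_{n\to\infty} K_n^{2/d}\,\cE\bigl(\{\be_k^*\}_{k=1}^{K_n};\cP_A\bigr) \;\ge\; C_d \int f_A(z)\,g(z)^{-2/d}\,dz,
\]
where $g$ is the Lebesgue density of $\nu$ (with the convention $g^{-2/d}=+\infty$ on the singular part of $\nu$). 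Strict convexity of the functional $g\mapsto \int f_A\,g^{-2/d}$ on probability densities, combined with the matching constructive upper bound, then forces $g$ to equal the unique minimizer and hence $\nu=\cP_A^*$. Tightness of $(\hat\mu_K)$, needed to extract convergent subsequences, follows from the truncation step via a uniform control on the fraction of optimal centers lying outside $B_R$; this uniqueness-in-the-limit argument, together with the cube-by-cube lower bound, is the hardest ingredient and is precisely where the measure-theoretic machinery of \cite{graf2000foundations} does the heavy lifting.
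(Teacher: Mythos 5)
The paper does not actually prove Theorem~\ref{thm:2}; it is cited verbatim as Theorem~7.5 of Graf and Luschgy \cite{graf2000foundations}, so there is no in-paper argument to compare your sketch against. That said, your outline reproduces the standard Bucklew--Wise/Graf--Luschgy high-resolution program --- truncation via the $(2+\delta)$-moment condition to control tails and obtain tightness, local Zador bounds on a fine cube mesh, H\"older optimization of the cubewise allocation, a constructive tiling upper bound, a cube-by-cube $\liminf$ lower bound $\liminf_K K^{2/d}\cE \ge C_d\int f_A\,g^{-2/d}$ in terms of the density $g$ of a subsequential weak limit, and uniqueness of the minimizer via strict convexity --- and your identification of the genuinely hard steps (tightness and the lower bound over arbitrary subsequential limit points) is accurate. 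This is in essence the route the cited source takes.

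However, you should flag a discrepancy you appear to have glossed over rather than noticed. Your H\"older step correctly yields $K_j/K \propto f_A(z_j)^{d/(d+2)}|Q_j|$, hence a limiting center density proportional to $f_A^{d/(d+2)}$. This is what Theorem~7.5 of \cite{graf2000foundations} actually asserts (the general-$r$ form is $f^{d/(d+r)}$; here $r=2$). The paper's statement of Theorem~\ref{thm:2} writes the exponent as $(d+2)/d$, which inverts it --- a typographical error. It is therefore misleading to assert that your derivation reproduces ``the power of $f_A$ asserted in the theorem'': your computation is right, and the paper's stated exponent is not.
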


 
Theorem \ref{thm:2} implies that $\cP_B=\cP_A^*$ is the optimal codebook distribution in the asymptotic regime as $K$ approaches infinity. In high-dimensional spaces with large $d$, this optimal  distribution $\cP_B=\cP_A^*$ closely approximates $\cP_A$. This further motivates us to align the codebook distribution $\cP_B$ with the feature distribution $\cP_A$. 



\begin{figure*}[t]
    \centering
    \includegraphics[width=0.95\linewidth]{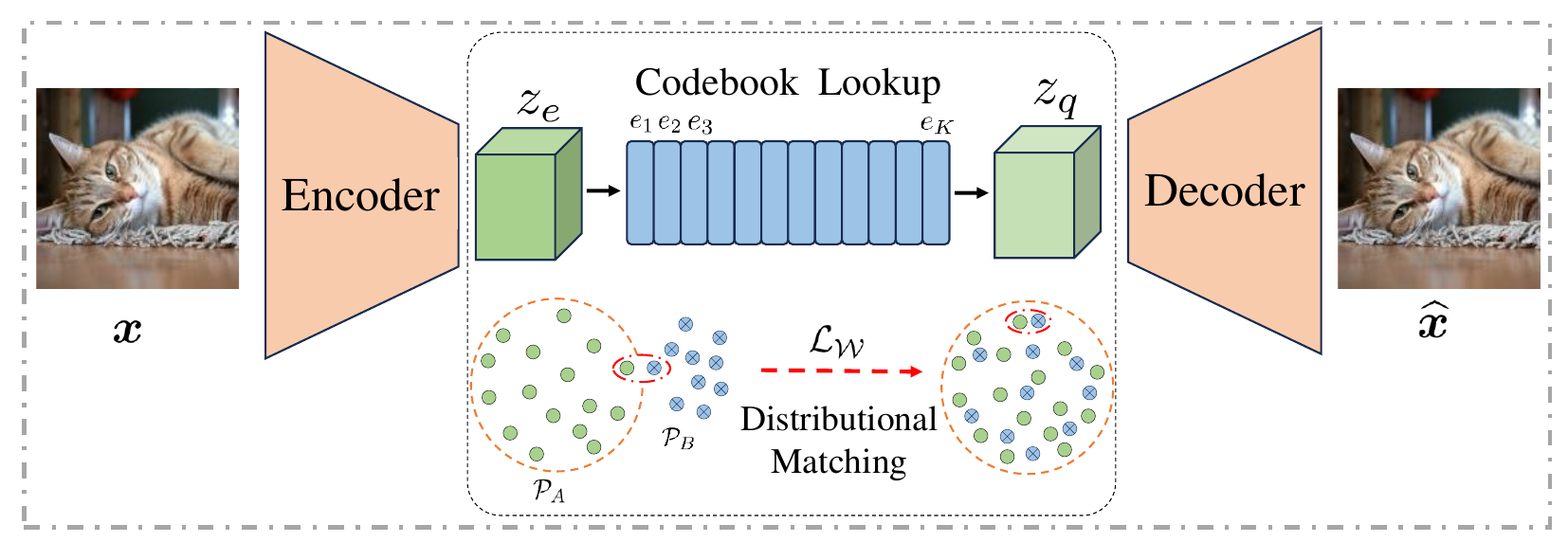}
    \vspace{-2ex}
    \caption{\small{Illustration of the \emph{Wasserstein VQ}. The architecture integrates an encoder-decoder network with a  VQ module. In the VQ module, we augment the vanilla VQ framework~\citep{Oord2017NeuralDR} by incorporating our proposed Wasserstein loss $\mathcal{L}_{\mathcal{W}}$ to achieve distributional matching between features $\bz_e$ ($\bz_e^{ij}\sim \mathcal{P}_{A}$) and the codebook $\be_k$ ($\be_k\sim \mathcal{P}_{B}$). This enhancement leads to 100\% codebook utilization and the minimal achievable quantization error between $\bm{z}_e$ and $\bm{z}_q$.}}
    \label{fig:wasserstein vq}
    \vspace{-1ex}
\end{figure*}

\section{Methodology}
\label{sec:method}

In this section, we introduce the quadratic Wasserstein distance for distributional matching between features and the codebook. We then apply this technique to two frameworks.


\subsection{Distribution Matching via Wasserstein Distance}
\label{sec:wasserstein distance}

We assume a Gaussian hypothesis for the distributions of both the feature and code vectors. For computational efficiency, we employ the quadratic Wasserstein distance, as defined in Appendix~\ref{appendix: distribution distance},  to align these two distributions. Although other statistical distances, such as the Kullback-Leibler divergence \citep{Kingma2013AutoEncodingVB, Ho2020DenoisingDP}, are viable alternatives, they lack simple closed-form representations, making them computationally expensive. The following lemma provides the closed-form representation for the quadratic Wasserstein distance between two Gaussian distributions.  

\begin{lemma}[\citep{Olkin1982TheDB}]\label{theorem:Wasserstein Distance}
The quadratic Wasserstein distance between $\mathcal{N}(\bm \mu_1, \m \Sigma_1)$ and $\mathcal{N}(\bm \mu_2, \m \Sigma_2)$ 
\begin{equation}\label{eq:wasserstein distance definition}
\sqrt{\Vert \bm \mu_1 - \bm \mu_2 \Vert^2_2 + \tr( {\m \Sigma_1}+ {\m \Sigma_2} - 2 (\Sigma_1^{\frac{1}{2}} {\m \Sigma_2} \m \Sigma_1^{\frac{1}{2}})^{\frac{1}{2}})}.
\end{equation}
\end{lemma}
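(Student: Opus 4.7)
The plan is to prove the identity by a two-step reduction followed by a covariance optimization. First, I would exploit translation invariance of the quadratic Wasserstein distance: if $(X, Y)$ is a coupling of $\mathcal{N}(\bm \mu_1,\m\Sigma_1)$ and $\mathcal{N}(\bm\mu_2,\m\Sigma_2)$, then $(X-\bm\mu_1, Y-\bm\mu_2)$ is a coupling of the centered Gaussians, and $\mathbb{E}\|X-Y\|^2 = \|\bm\mu_1-\bm\mu_2\|^2 + \mathbb{E}\|(X-\bm\mu_1)-(Y-\bm\mu_2)\|^2 + 2(\bm\mu_1-\bm\mu_2)^\T \mathbb{E}[(X-\bm\mu_1)-(Y-\bm\mu_2)]$, and the cross term vanishes because the centered marginals have mean zero. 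This isolates the mean contribution $\|\bm\mu_1-\bm\mu_2\|^2$ and reduces the problem to showing that for centered Gaussians the optimal transport cost equals $\tr(\m\Sigma_1+\m\Sigma_2-2(\m\Sigma_1^{1/2}\m\Sigma_2\m\Sigma_1^{1/2})^{1/2})$.

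Next, I would restrict to Gaussian couplings and argue afterward that this restriction is without loss of generality. A centered jointly Gaussian coupling $(X, Y)$ is characterized by the block covariance
\begin{equation*}
\m C = \begin{pmatrix} \m\Sigma_1 & \m K \\ \m K^\T & \m\Sigma_2 \end{pmatrix} \succeq 0,\qquad \m K = \mathbb{E}[XY^\T],
\end{equation*}
and the Schur complement condition yields the PSD constraint $\m K^\T \m\Sigma_1^{-1}\m K \preceq \m\Sigma_2$ (assuming $\m\Sigma_1 \succ 0$; the singular case follows by a limiting argument). The expected squared distance simplifies to $\tr(\m\Sigma_1) + \tr(\m\Sigma_2) - 2\tr(\m K)$, so the problem becomes maximizing $\tr(\m K)$ subject to the PSD constraint. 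The key step is to establish the trace inequality $\tr(\m K) \le \tr((\m\Sigma_1^{1/2}\m\Sigma_2\m\Sigma_1^{1/2})^{1/2})$. I would prove this by writing $\m M = \m\Sigma_1^{-1/2}\m K$, noting $\m M\m M^\T \preceq \m\Sigma_1^{-1/2}\m\Sigma_2\m\Sigma_1^{-1/2}$ after suitable rearrangement, and applying $\tr(\m A\m B) \le \tr((\m B^\T \m A^\T \m A\m B)^{1/2})$ combined with the fact that $\tr$ of a square root is monotone under the Loewner order on PSD matrices.

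Then I would exhibit an explicit coupling attaining the bound, namely the linear map $Y = \m T X$ with
\begin{equation*}
\m T = \m\Sigma_1^{-1/2}(\m\Sigma_1^{1/2}\m\Sigma_2\m\Sigma_1^{1/2})^{1/2}\m\Sigma_1^{-1/2},
\end{equation*}
and verify that (i) $\m T\m\Sigma_1\m T = \m\Sigma_2$ so that $Y\sim\mathcal{N}(\bm 0,\m\Sigma_2)$, and (ii) $\tr(\m K) = \tr(\m T\m\Sigma_1) = \tr((\m\Sigma_1^{1/2}\m\Sigma_2\m\Sigma_1^{1/2})^{1/2})$ using the cyclic property of trace. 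Finally, to remove the restriction to Gaussian couplings, I would invoke Brenier's theorem, which guarantees that the optimal coupling for the quadratic cost is induced by the gradient of a convex function; between two Gaussians this gradient is precisely the linear map above, so no non-Gaussian coupling can do better.

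The main obstacle is the trace inequality and the identification of the optimizer. The cleanest route handles the degeneracy of $\m\Sigma_1$ by first assuming $\m\Sigma_1 \succ 0$ and then passing to the limit with $\m\Sigma_1+\epsilon\m I$; the resulting closed form is continuous in $(\m\Sigma_1, \m\Sigma_2)$, so the extension is immediate. Once the trace bound and the explicit attaining coupling are in hand, combining them with the mean-variance decomposition from the first step yields \eqref{eq:wasserstein distance definition}.
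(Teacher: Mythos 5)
The paper states this lemma with a citation to Olkin and Pukelsheim (1982) and does not supply a proof, so there is no in-paper argument to compare against; what follows is an assessment of your proposal on its own terms.

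Your overall strategy is the standard one and is sound: center the two laws, observe that the cost of any coupling with cross-covariance $\m K=\mathbb{E}[XY^\T]$ equals $\tr(\m\Sigma_1)+\tr(\m\Sigma_2)-2\tr(\m K)$, establish the trace bound $\tr(\m K)\le\tr\bigl((\m\Sigma_1^{1/2}\m\Sigma_2\m\Sigma_1^{1/2})^{1/2}\bigr)$, and exhibit the linear map $\m T=\m\Sigma_1^{-1/2}(\m\Sigma_1^{1/2}\m\Sigma_2\m\Sigma_1^{1/2})^{1/2}\m\Sigma_1^{-1/2}$ attaining it; all of those pieces check out. However, one intermediate claim is false as written. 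With $\m M=\m\Sigma_1^{-1/2}\m K$, the Schur-complement condition reads $\m M^\T\m M=\m K^\T\m\Sigma_1^{-1}\m K\preceq\m\Sigma_2$, \emph{not} $\m M\m M^\T\preceq\m\Sigma_1^{-1/2}\m\Sigma_2\m\Sigma_1^{-1/2}$; the latter would require $\m K\m K^\T\preceq\m\Sigma_2$, which does not follow. The bound you want comes out cleanly from the correct form: $\tr(\m K)=\tr(\m M\m\Sigma_1^{1/2})\le\|\m M\m\Sigma_1^{1/2}\|_1=\tr\bigl((\m\Sigma_1^{1/2}\m M^\T\m M\m\Sigma_1^{1/2})^{1/2}\bigr)\le\tr\bigl((\m\Sigma_1^{1/2}\m\Sigma_2\m\Sigma_1^{1/2})^{1/2}\bigr)$, the last step using $\m M^\T\m M\preceq\m\Sigma_2$, the congruence, operator monotonicity of the matrix square root, and monotonicity of the trace. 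Finally, the appeal to Brenier's theorem is unnecessary (though not wrong): the constraint $\m K^\T\m\Sigma_1^{-1}\m K\preceq\m\Sigma_2$ is nothing but positive semidefiniteness of the joint covariance of $(X,Y)$, which holds for \emph{every} coupling regardless of Gaussianity, so the trace upper bound already applies to all couplings and the exhibited Gaussian coupling achieves it. That closes the argument without any uniqueness statement about the optimal map.
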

The lemma above indicates that the quadratic Wasserstein distance can be easily computed using the population means and covariance matrices. In practice, we estimate these population quantities, ${\bm \mu}_{1}$, ${\bm \mu}_{2}$, ${\m \Sigma}_{1}$, and ${\m \Sigma}_{2}$, with their sample counterparts: $\hat {\bm \mu}_{1}$, $\hat {\bm \mu}_{2}$, $\hat {\m \Sigma}_{1}$, and $\hat {\m \Sigma}_{2}$. 
The empirical quadratic Wasserstein distance is then used as the optimization objective to align  the feature and codebook distributions:
\begin{equation}\label{eq:wasserstein distance}
\mathcal{L}_{\mathcal{W}}\! = \!\sqrt{\!\Vert \hat {\bm \mu}_{1} \!- \!\hat {\bm \mu}_{2} \Vert^2_2 \!+\! \tr( \hat {\m \Sigma}_{1}\!+\!\hat {\m \Sigma}_{2}\! -\!2(\hat {\m \Sigma}_{1}^{\frac{1}{2}} {\hat {\m \Sigma}_{2}} \hat {\m \Sigma}_{1}^{\frac{1}{2}})^{\frac{1}{2}})}.
\end{equation}  
A smaller value of $\mathcal{L}_{\mathcal{W}}$ indicates stronger alignment  between the feature distribution $\mathcal{P}_A$ and the codebook  distribution $\mathcal{P}_B$. We refer to the VQ algorithm that employs $\mathcal{L}_{\mathcal{W}}$ as \emph{Wasserstein VQ}.

\subsection{Integration into the VQ-VAE Framework}
\label{sec:VQVAE}
We first examine \emph{Wasserstein VQ} within the VQ-VAE framework~\citep{Oord2017NeuralDR}. As illustrated in the Figure~\ref{fig:wasserstein vq}, the VQ-VAE model combines three key components: an encoder $E(\cdot)$, a decoder $D(\cdot)$, a quantizer $\mathcal{Q}(\cdot)$ with a learnable codebook $\{\m e_k\}_{k=1}^{K}$. As described earlier in Section~\ref{sec:overview vq}, for an input image $\bm{x}$, the encoder processes the image to yield a spatial feature $\bm{z}_e = E(\bm{x}) \in \mathbb{R}^{h\times w \times d}$. The quantizer converts $\bm{z}_e$ into a quantized feature $\bm{z}_q$, from which the decoder reconstructs the image as $\bm{\hat{x}} = D(\bm{z}_q)$. By incorporating our proposed Wasserstein loss $\mathcal{L}_{\mathcal{W}}$ into the VQ-VAE framework, the overall loss objective can be formulated as follows:
\begin{align}
\label{eq:vqvae}
\mathcal{L}_{\text{VQ-VAE}} =\| \bm{\hat{x}} - \bm{x} \|^{2}_2 & + \beta \| \textrm{sg}(\bm{z}_q)-\bm{z}_e \|^{2}_2 \\\nonumber
 & + \| \textrm{sg}(\bm{z}_e) -\bm{z}_q \|^{2}_2 + \gamma \mathcal{L}_{\mathcal{W}}.
\end{align} %
where $\textrm{sg}$ denotes the stop-gradient operation. $\beta$ and $\gamma$ are hyper-parameters. We set $\gamma=0.5$ for all experiments.

\subsection{Integration into the VQGAN Framework}
\label{sec:VQGAN}
To ensure high perceptual quality in the reconstructed images, we further investigate \emph{Wasserstein VQ} within the VQGAN framework~\citep{Esser2020TamingTF}. VQGAN extends the VQ-VAE framework by integrating a VGG network~\citep{Simonyan2014VeryDC}  and a patch-based discriminator \citep{Esser2020TamingTF, Johnson2016PerceptualLF}. The overall training objective of VQGAN can be written as follows:
\begin{align}
\label{eq:vqgan}
\mathcal{L}_{\text{VQGAN}} = \mathcal{L}_{\text{VQ-VAE}} + \mathcal{L}_{\text{Per}} + \lambda \mathcal{L}_{\text{GAN}}.
\end{align}
Where $\mathcal{L}_{\text{Per}}$ and $\mathcal{L}_{\text{GAN}}$ denote the VGG-based perceptual loss~\citep{Zhang2018TheUE}, and GAN loss~\citep{Isola2016ImagetoImageTW, Lim2017GeometricG}, respectively. We set $\lambda=0.2$ for all experiments. 



\section{Experiments}\label{sec:experiments}

In this section, we empirically demonstrate the effectiveness of our proposed \emph{Wasserstein VQ} algorithm in visual tokenization tasks. Our experiments are conducted within the frameworks of VQ-VAE~\citep{Oord2017NeuralDR} and VQGAN~\citep{Esser2020TamingTF}. The PyTorch code, including training environment, scripts and logs, will be made publicly available.

\subsection{Evaluation on VQ-VAE Framework}
\label{sec:vqvae experiments}
\paragraph{Datasets and Baselines} Experiments are conducted on four benchmark datasets: two low-resolution datasets, i.e., CIFAR-10~\citep{Krizhevsky2009LearningML} and SVHN~\citep{Netzer2011ReadingDI},  and two high-resolution datasets FFHQ~\citep{Karras2018ASG} and ImageNet~\citep{Deng2009ImageNetAL}. We evaluated our approach against several representative VQ methods:  Vanilla VQ~\citep{Oord2017NeuralDR}, EMA VQ~\citep{Razavi2019GeneratingDH}, which uses exponential moving average updates and is also referred to as $k$-means, Online VQ, which employs $k$-means++ in CVQ-VAE~\citep{Zheng2023OnlineCC}. For detailed experimental settings, please refer to Appendix~\ref{appendix:experimental details}. 

\paragraph{Metrics}
We employ multiple evaluation metrics, including the Codebook Utilization Rate ($\mathcal{U}$), Codebook Perplexity ($\mathcal{C}$), peak signal-to-noise ratio (PSNR), patch-level structural similarity index (SSIM), and pixel-level reconstruction loss (Rec. Loss). We exclude the quantization error ($\mathcal{E}$) from our reported results, as it is highly sensitive to distribution variances—a factor analyzed in Appendix~\ref{appendix:Impact of Distribution Variance}. Since these distribution variances remain uncontrolled in our experiments, fair comparison based on 
($\mathcal{E}$) would be unreliable. To ensure an equitable assessment, Appendix~\ref{appendix:discussion} provides an atomic setting where distribution variances are fully controlled and identical across all VQ variants.

\begin{table*}[t]
  \centering
  \caption{\small{Comparison of VQ-VAEs trained on FFHQ dataset following~\citep{Oord2017NeuralDR}.}}
  \label{tab:vqvae ffhq}
  \resizebox{0.98\textwidth}{!}{%
  \begin{tabular}{lccccccc} \hline
    Approach & Tokens & Codebook Size & $\mathcal{U}$ ($\uparrow$) & $\mathcal{C}$ ($\uparrow$) & PSNR($\uparrow$) & SSIM($\uparrow$) & Rec. Loss ($\downarrow$) \\\hline
    Vanilla VQ  & 256 & $16384$ & 3.8\% & 527.2 & 27.83 & 73.8 & 0.0119 \\
    EMA VQ & 256 & $16384$ & 14.0\% & 1795.7 & 28.39 & 74.8 & 0.0106 \\
    Online VQ  & 256 & $16384$ & 11.7\% & 1115.3 & 27.68 & 72.6 & 0.0125 \\
    \textbf{Wasserstein VQ} & 256 & $16384$ & \textbf{100\%} &  \textbf{15713.3} & \textbf{29.03} & \textbf{76.6} & \textbf{0.0093} \\\hline
    Vanilla VQ & 256 & $50000$ & 1.2\% & 516.8 & 27.83 & 73.6 & 0.0120  \\
    EMA VQ & 256 & $50000$ & 10.3\% & 4075.7 & 28.61 & 75.3 & 0.0101 \\
    Online VQ & 256 & $50000$ & 6.0\% & 1642.9 & 28.37 & 74.6 & 0.0107  \\
    \textbf{Wasserstein VQ} & 256 & $50000$ & \textbf{100\%} & \textbf{47496.4} & \textbf{29.24} & \textbf{77.0} & \textbf{0.0089} \\\hline
    Vanilla VQ  & 256 & $100000$ & 0.6\% & 481.0 & 27.86 & 74.2 & 0.0118  \\
    EMA VQ & 256 & $100000$ & 2.7\% & 2087.5 & 28.43 & 74.8 & 0.0105  \\
    Online VQ & 256 & $100000$ & 3.6\% & 1556.8 & 27.12 & 71.1 & 0.0142 \\
    \textbf{Wasserstein VQ} & 256 & $100000$ & \textbf{100\%} & \textbf{93152.7} & \textbf{29.53} & \textbf{78.0} & \textbf{0.0083} \\\hline
  \end{tabular}
}
\end{table*}

\begin{table*}[!t]
  \centering
  \caption{\small{Comparison of VQ-VAEs trained on ImageNet dataset following~\citep{Oord2017NeuralDR}.}}
  \label{tab:vqvae imagenet}
  \resizebox{0.98\textwidth}{!}{%
  \begin{tabular}{lcccccccc} \hline
    Approach & Tokens & Codebook Size & $\mathcal{U}$ ($\uparrow$) & $\mathcal{C}$ ($\uparrow$) & PSNR($\uparrow$) & SSIM($\uparrow$) & Rec. Loss ($\downarrow$) \\\hline
    Vanilla VQ & 256 & $16384$ & 2.5\% & 360.7 & 24.44 & 57.5 & 0.0294 \\
    EMA VQ & 256 & $16384$ & 14.5\% & 1861.5 & 24.98 & 59.2 & 0.0267  \\
    Online VQ & 256 & $16384$ & 22.2\% & 1465.6 & 24.88 & 58.6 & 0.0273 \\
    \textbf{Wasserstein VQ} & 256 & $16384$ & \textbf{100\%} & \textbf{15539.1} & \textbf{25.47} & \textbf{61.2} & \textbf{0.0242} \\\hline
    Vanilla VQ  & 256 & $50000$ & 0.9\% & 378.7 & 24.40 & 57.7 & 0.0295 \\
    EMA VQ  & 256 & $50000$ & 16.8\% & 6139.3 & 25.37 & 60.9 & 0.0246 \\
    Online VQ & 256 & $50000$ & 9.9\% & 2241.7 & 25.09 & 59.7 & 0.0260  \\
    \textbf{Wasserstein VQ} & 256 & $50000$ & \textbf{100\%} & \textbf{46133.2} & \textbf{25.72} & \textbf{62.3} & \textbf{0.0230} \\\hline
    Vanilla VQ & 256 & $100000$  & 0.4\% & 337.0 & 24.43 & 57.4 & 0.0295 \\
    EMA VQ  & 256 & $100000$ & 3.0\% & 2170.0 & 25.13 & 60.1 & 0.0257\\
    Online VQ  & 256 & $100000$ & 4.1\% & 1709.9 & 24.95 & 59.1 & 0.0267  \\
    \textbf{Wasserstein VQ} & 256 & $100000$ & \textbf{100\%} & \textbf{93264.7} & \textbf{25.88} & \textbf{63.0} & \textbf{0.0223} \\\hline
  \end{tabular}
}
\end{table*}

\paragraph{Main Results} As shown in Tables~\ref{tab:vqvae ffhq},~\ref{tab:vqvae imagenet}, and Tables~\ref{tab:vqvae cifar10},~\ref{tab:vqvae svhn} in the Appendix~\ref{appendix:vqvae cifar-10 and svhn}, our proposed \emph{Wasserstein VQ} outperforms all baselines on both datasets, achieving superior performance across almost all evaluation metrics under various experimental settings. The underlying reason is that VQ inherently functions as a compressor, transitioning from a continuous latent space to a discrete space, where minimal information loss indicates improved expressivity. Our proposed \emph{Wasserstein VQ} employs explicit distribution matching constraints, thereby achieving a more favorable alignment between the feature vectors and code vectors. This results in nearly 100\% codebook utilization and almost minimal quantization error, leading to the lowest Rec. Loss among all settings. 

\paragraph{Representation Visualization}
To visualize the distributions of feature vectors and code vectors across different VQ methods trained on the FFHQ dataset (with a fixed codebook size of $8192$), we randomly sample 3000 feature vectors and 1000 code vectors and plot their scatter diagrams. As shown in Figure~\ref{fig:codebook_feature 1} and Figure~\ref{fig:codebook_feature 2}, in Vanilla VQ and EMA VQ, the majority of code vectors are clustered near the zero point, rendering them effectively unusable. While Online VQ avoids this central clustering issue, most of its code vectors are distributed at the two extremes of the feature space, as illustrated in Figure~\ref{fig:codebook_feature 3}. This distributional mismatch leads to increased information loss and reduced codebook utilization. In contrast to these three VQ methods, \emph{Wasserstein VQ} demonstrates significantly better distributional matching between feature vectors and code vectors. This alignment substantially minimizes information loss and enhances codebook utilization.

\begin{figure*}[!t]
    \vspace{-2mm}
	\centering
	\subfigure[Vanilla VQ]{ 
		\label{fig:codebook_feature 1}  
		\includegraphics[width=0.22\textwidth]{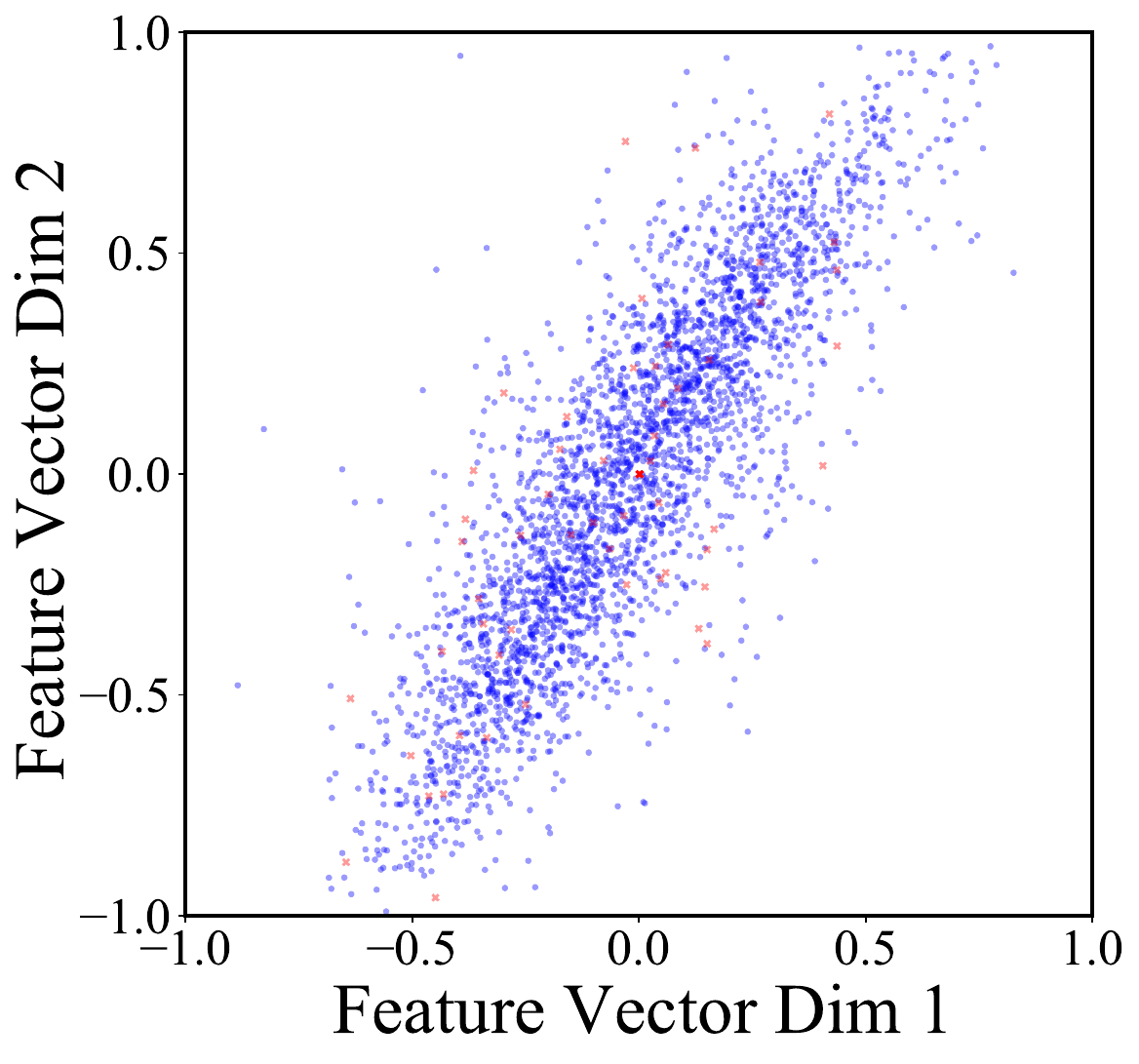}
        }
	\hspace{0.03cm}
	\subfigure[EMA VQ]{
		\label{fig:codebook_feature 2}
		\includegraphics[width=0.22\textwidth]{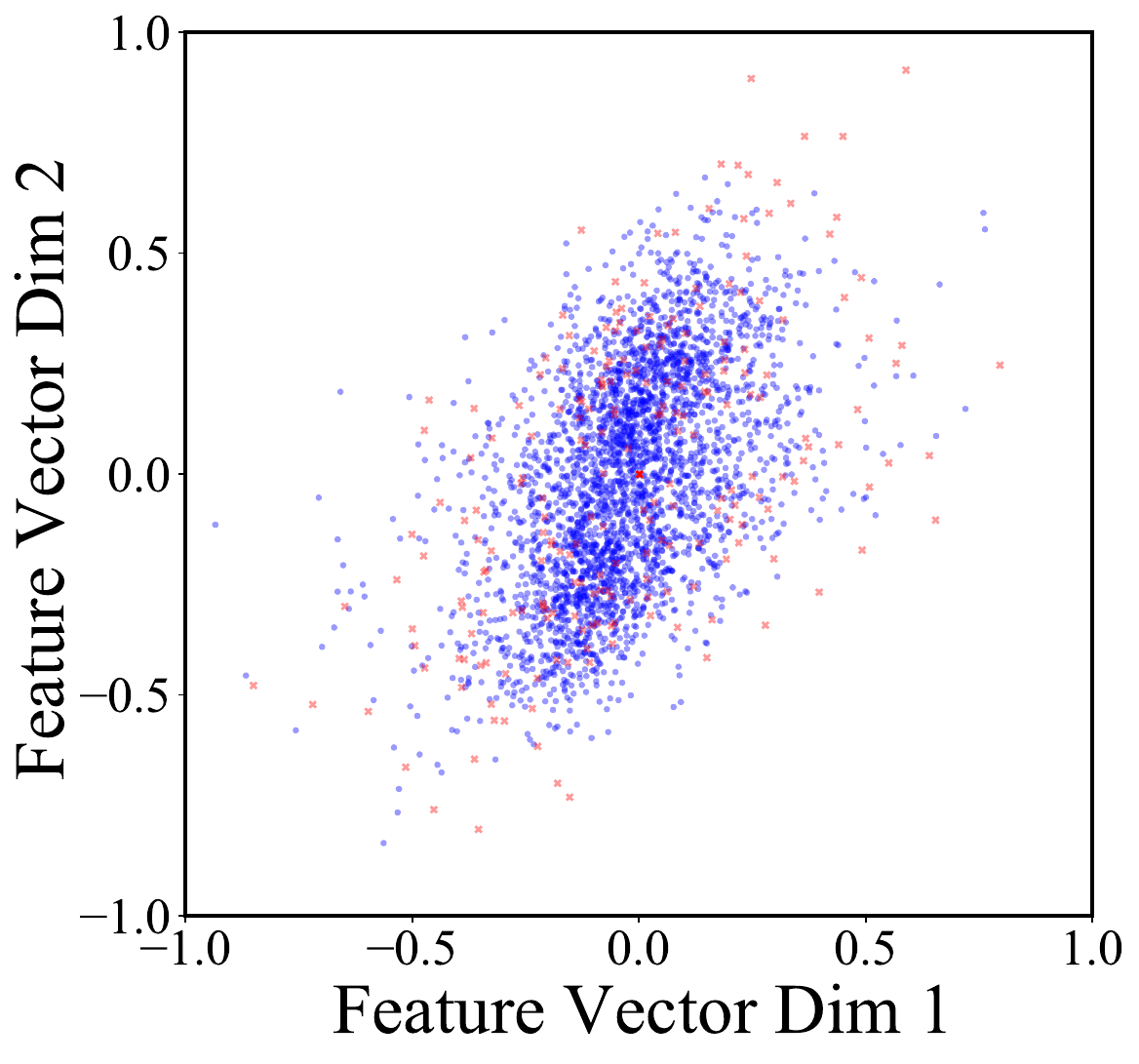}
	}
        \hspace{0.03cm}
	\subfigure[Online VQ]{ 
		\label{fig:codebook_feature 3}  
		\includegraphics[width=0.22\textwidth]{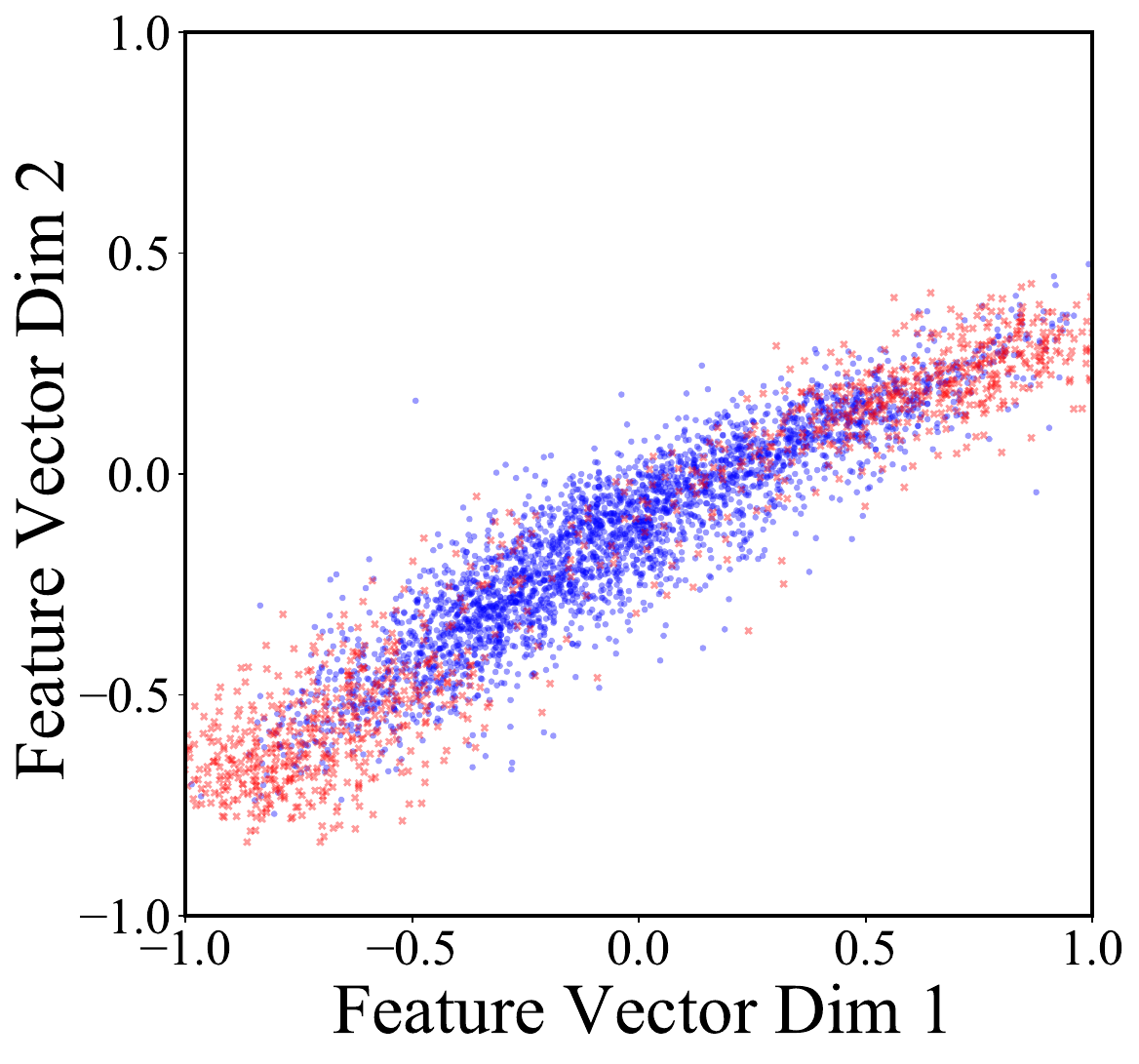}
	}
        \hspace{0.03cm}
	\subfigure[Wasserstein VQ]{
		\label{fig:codebook_feature 4}
		\includegraphics[width=0.22\textwidth]{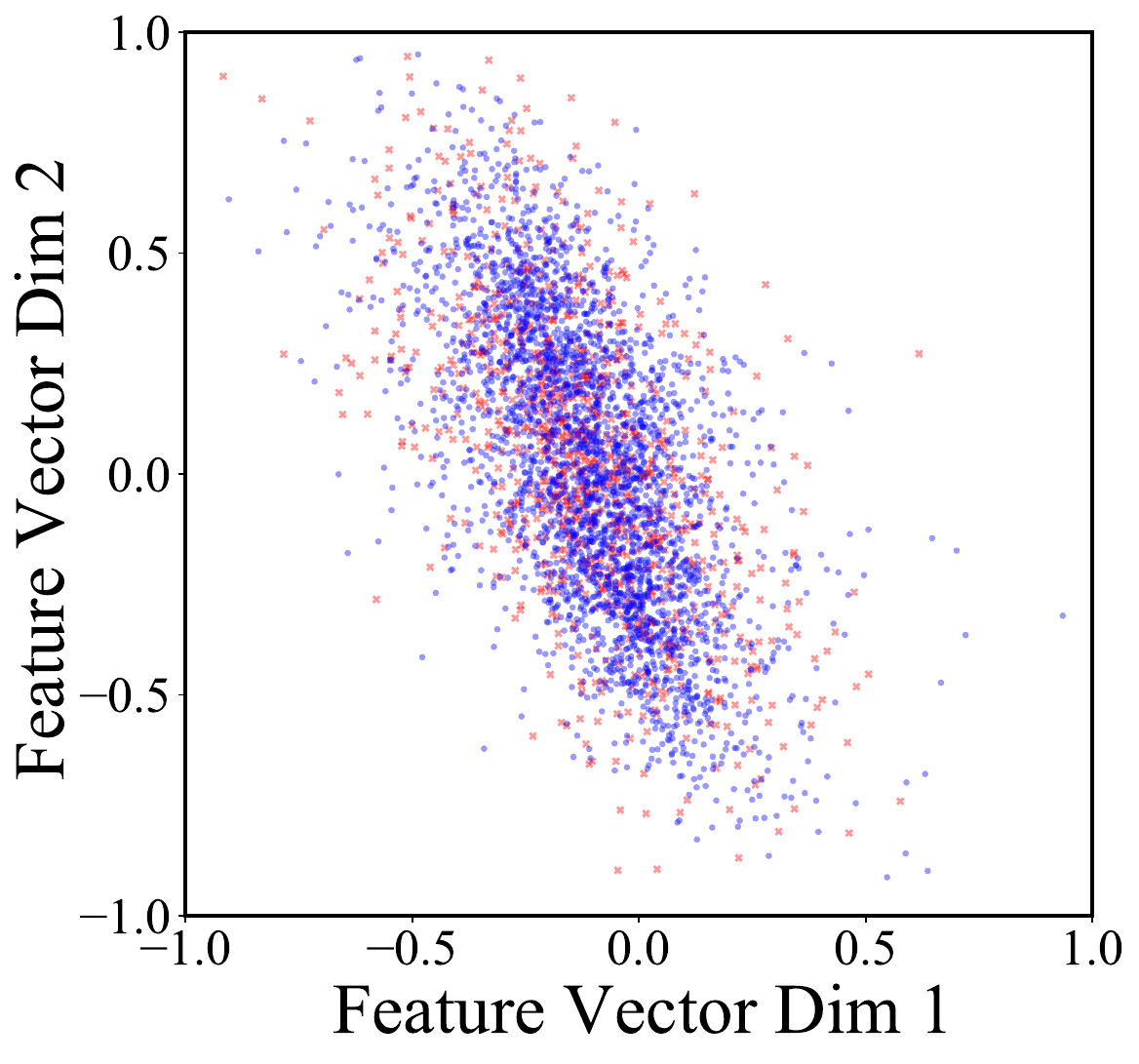}}
	\caption{\small{Visualization of feature and codebook distributions. The symbols blue $\cdot$ and red $\times$ represent the feature and code vectors, respectively.}}
	\label{fig:feature codebook visualization}
    \vspace{-4mm}
\end{figure*}

\begin{wrapfigure}[12]{r}{0.56\textwidth}
\begin{center}
    \vspace{-7mm}
    \subfigure[\small{Feature Density}]{ 
    \label{fig:density}  
    \includegraphics[width=0.25\textwidth]         {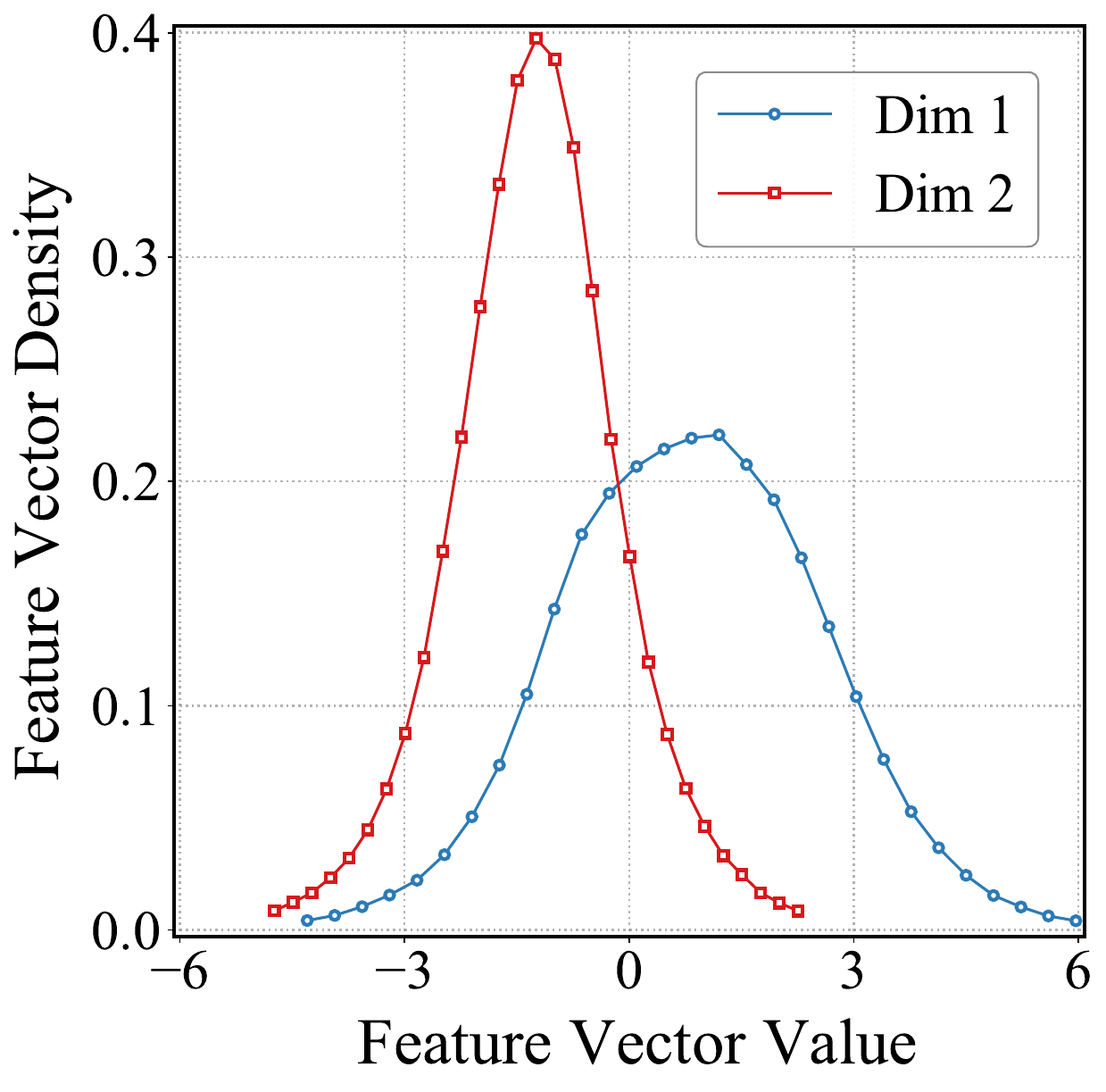}}
    \subfigure[\small{Feature Visualization}]{
    \label{fig:feature visualization}
    \includegraphics[width=0.25\textwidth]{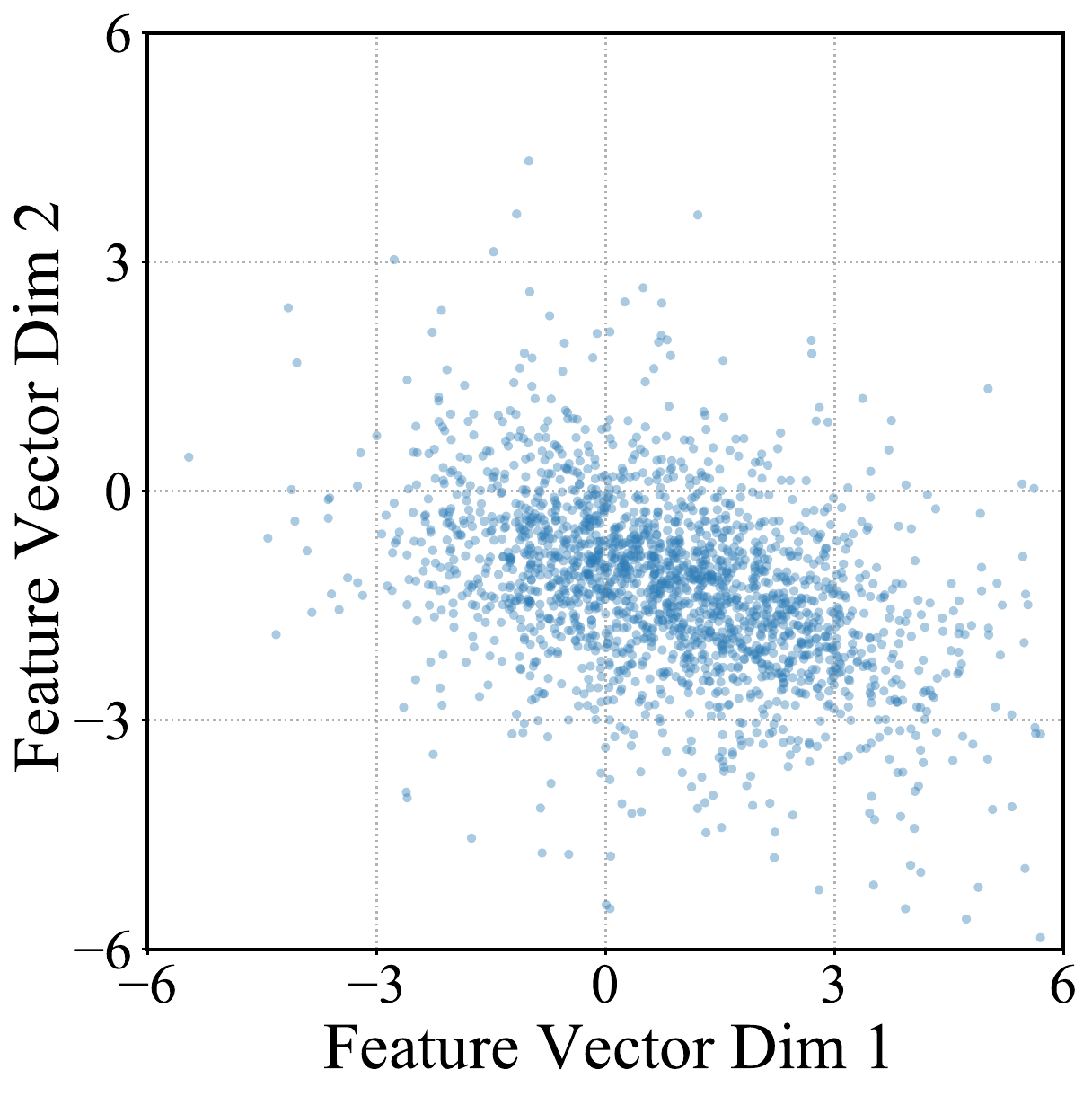}
	}
    \vspace{-3mm}
    \caption{\small{Visualization of feature vectors.}}
    \label{fig:gaussian assumptation}
    \vspace{-5mm}
\end{center}
\end{wrapfigure} 
\paragraph{Gaussian Hypothesis Justification} To justify the reasonableness of the Gaussian assumption, we extract feature vectors from the encoder and computed the density of arbitrary two dimensions by binning the data points into 29 groups, as visualized in Figure~\ref{fig:density}. Furthermore, we randomly selected 2000 data points from any two dimensions and plotted them in a scatter plot, as shown in Figure~\ref{fig:feature visualization}. It is evident that the feature vectors exhibit Gaussian-like characteristics. The underlying reason for this behavior can be attributed to the central limit theorem, which posits that learned feature vectors and code vectors will approximate a Gaussian distribution given a sufficiently large sample size and a relatively low-dimensional space, i.e., $d=8$.

\paragraph{Analyses of Codebook Size} We investigate the impact of the codebook size $K$ on VQ performance, as presented in Table~\ref{tab:vqvae ffhq}, and Table~\ref{tab:vqvae ffhq codebook size} in Appendix~\ref{appendix:ablation study}. Vanilla VQ suffers from severe codebook collapse even with a small $K$, such as $K=1024$. In contrast, improved algorithms, such as EMA VQ and Online VQ, also experience codebook collapse when $K$ is very large, e.g., $K\geq 50000$. Notably, \emph{Wasserstein VQ} consistently maintains 100\% codebook utilization, regardless of the codebook size. This demonstrates that distributional matching by quadratic Wasserstein distance effectively resolves the issue of codebook collapse.

\paragraph{Analyses of Codebook Dimensionality}
We further investigate the impact of codebook dimensionality $d$ on VQ performance. We conduct experiments on CIFAR-10 dataset and range $d$ from 2 to 32. As shown in Table 9 in Appendix~\ref{appendix:ablation study}  our proposed Wasserstein VQ consistently outperforms all baselines regardless of dimensionality. Notably, we observe the curse of dimensionality phenomenon—performance degrades as dimensionality increases. Vanilla VQ exhibits the most severe degradation, followed by EMA VQ and Online VQ, while our Wasserstein VQ shows only minimal codebook utilization reduction.

\subsection{Evaluation on VQGAN Framework}
\paragraph{Dataset, Baselines, and Metrics}
We evaluated our approach against following methods on the FFHQ dataset: RQVAE~\citep{Lee2022AutoregressiveIG}, VQGAN~\citep{Esser2020TamingTF}, VQGAN-FC~\citep{Yu2021VectorquantizedIM}, VQGAN-EMA~\citep{Razavi2019GeneratingDH}, VQ-WAE~\citep{Vuong2023VectorQW}, MQVAE~\citep{Huang2023NotAI}, 
and VQGAN-LC~\citep{Zhu2024ScalingTC}. Following VQGAN-LC~\citep{Zhu2024ScalingTC}, we employ the Fréchet Inception Distance (r-FID)~\citep{Heusel2017GANsTB}, Learned Perceptual Image Patch Similarity (LPIPS)~\citep{Zhang2018TheUE}, PSNR, and SSIM to evaluate visual reconstruction quality.

\paragraph{Main Results} 
As presented in Table~\ref{tab:vqgan ffhq}, our proposed \emph{Wasserstein VQ} outperforms all baselines across all evaluation metrics within the VQGAN framework. This superior performance stems from its VQ system that minimizes information loss, as discussed in Section~\ref{sec:vqvae experiments}, thereby achieving optimal reconstruction fidelity and visual perceptual quality. Notably, when integrating VAR’s multi-scale VQ~\cite{Tian2024VisualAM} with our \emph{Wasserstein VQ}, we observe a significant improvement in rFID (reduced from 2.71 to 1.79 with codebook size $K=100000$). Figure~\ref{fig:visual reconstruction performance} demonstrates that \emph{Wasserstein VQ}'s reconstructed images exhibit only minimal differences from the inputs, confirming its exceptional visual tokenization capability.





\begin{table}[!t]
\centering
\caption{\small{Comparison of VQGAN trained on FFHQ dataset following~\citep{Esser2020TamingTF}.}}
\vspace{-1ex}
\resizebox{0.98\textwidth}{!}{
\begin{tabular}{lccccccc}
\toprule
Method & Tokens  & Codebook Size & Utilization ($\%$) $\uparrow$ & rFID $\downarrow$ & LPIPS $\downarrow$ & PSNR $\uparrow$ & SSIM $\uparrow$\\
\midrule
RQVAE$^{\dagger}$~\citep{Lee2022AutoregressiveIG} & 256 & 2,048 & - & 7.04  & 0.13 & 22.9 & 67.0 \\
VQ-WAE$^{\dagger}$~\citep{Vuong2023VectorQW} & 256 & 1,024 & - & 4.20 & 0.12 & 22.5 & 66.5 \\
MQVAE$^{\dagger}$~\citep{Huang2023NotAI} & 256 & 1,024 & 78.2 & 4.55 & - & - & - \\
\midrule
VQGAN$^{\dagger}$~\citep{Esser2020TamingTF} & 256 & 16,384 & 2.3 & 5.25 & 0.12 & 24.4 & 63.3\\
VQGAN-FC$^{\dagger}$~\citep{Yu2021VectorquantizedIM} & 256 & 16,384& 10.9 & 4.86 & 0.11 & 24.8 & 64.6\\
VQGAN-EMA$^{\dagger}$~\citep{Razavi2019GeneratingDH} & 256 & 16,384 & 68.2 & 4.79 & 0.10 & 25.4 & 66.1 \\
VQGAN-LC$^{\dagger}$~\citep{Zhu2024ScalingTC} & 256 & 100,000 & 99.5 & 3.81 & 0.08 & 26.1 & 69.4 \\
\midrule
\multirow{3}{*}{Wasserstein VQ$^\star$}  & 256 & 16,384 & 100 & 3.08 & 0.08 & 26.3 & 70.4 \\
& 256 & 50,000 & 100 & 2.96 & 0.08 & 26.5 & 71.4 \\
& 256 & 100,000 & 100  &  \textbf{2.71} & \textbf{0.07} & \textbf{26.6} & \textbf{71.9}  \\\midrule
\multirow{3}{*}{Multi-scale Wasserstein VQ$^\star$}  & 680 & 16,384 & 100 & 2.48 & 0.06 & 27.4 & 74.0 \\
& 680 & 50,000 & 100 & 2.07 & 0.06 & 27.6 & 74.6 \\
& 680 & 100,000 & 100  & \textbf{1.79} & \textbf{0.05} & \textbf{27.9} & \textbf{75.4} \\
\bottomrule
\end{tabular}}
\vspace{-2mm}
\label{tab:vqgan ffhq}
\end{table}

\begin{figure}[!h]
    \centering
    \includegraphics[width=0.98\linewidth]{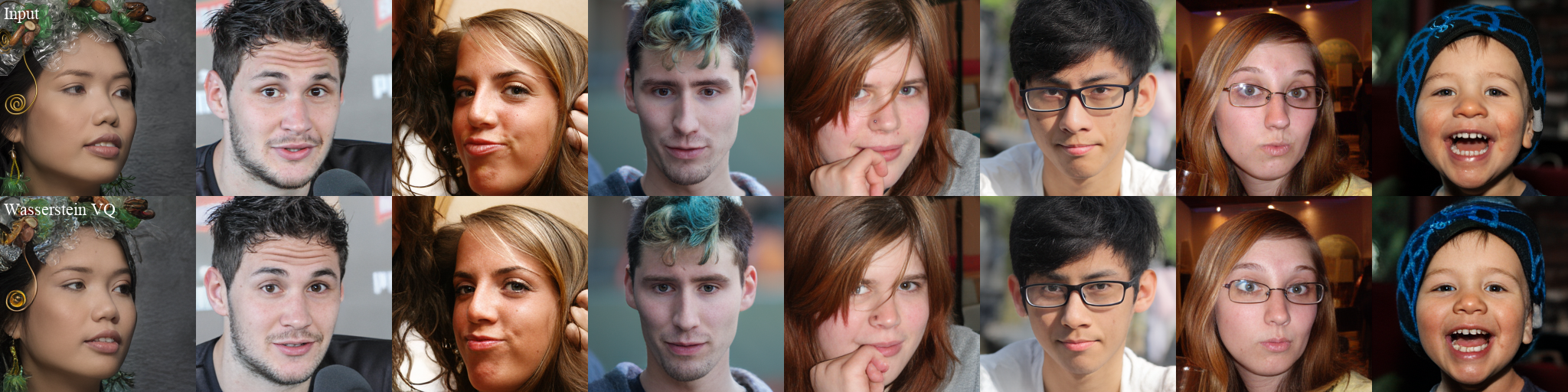}
    \vspace{-1ex}
    \caption{\small{Visualization of reconstructed Images. The top row displays the original input images with a resolution of $256 \times 256$ pixels, while the bottom row shows the reconstructed images from the \emph{Wasserstein VQ}.}}
    \label{fig:visual reconstruction performance}
    \vspace{-3ex}
\end{figure}


\section{Conclusion}\label{sec:conclusion}
This paper examines vector quantization (VQ) from a distributional perspective, introducing three key evaluation criteria. Empirical results demonstrate that optimal VQ results are achieved when the distributions of continuous feature vectors and code vectors are identical. Our theoretical analysis confirms this finding, emphasizing the crucial role of distributional alignment in effective VQ. Based on these insights, we propose using the quadratic Wasserstein distance to achieve alignment, leveraging its computational efficiency under a Gaussian hypothesis. This approach achieves near-full codebook utilization while significantly reducing quantization error. Our method successfully addresses both training instability and codebook collapse, leading to improved downstream image reconstruction performance. A limitation of this work, however, is that our proposed distributional matching approach relies on the assumption of Gaussian distribution, which may not strictly hold in all scenarios. In future work, we aim to develop methods that do not depend on this assumption, thereby broadening the applicability and robustness of our VQ framework.

\newpage
\bibliographystyle{plain}
\bibliography{neurips_2025}

\begin{thebibliography}{10}

\bibitem{Arthur2007kmeansTA}
David Arthur and Sergei Vassilvitskii.
\newblock k-means++: the advantages of careful seeding.
\newblock In {\em ACM-SIAM Symposium on Discrete Algorithms}, 2007.

\bibitem{bengio2013estimating}
Yoshua Bengio, Nicholas L{\'e}onard, and Aaron Courville.
\newblock Estimating or propagating gradients through stochastic neurons for conditional computation.
\newblock {\em ArXiv}, 2013.

\bibitem{Bengio2013EstimatingOP}
Yoshua Bengio, Nicholas L{\'e}onard, and Aaron~C. Courville.
\newblock Estimating or propagating gradients through stochastic neurons for conditional computation.
\newblock {\em ArXiv}, 2013.

\bibitem{Bhattacharyya1943OnAM}
A.~Bhattacharyya.
\newblock On a measure of divergence between two statistical populations defined by their probability distributions.
\newblock {\em Bulletin of the Calcutta Mathematical Society}, 1943.

\bibitem{Bradley1998RefiningIP}
Paul~S. Bradley and Usama~M. Fayyad.
\newblock Refining initial points for k-means clustering.
\newblock In {\em ICML}, 1998.

\bibitem{Chang2022MaskGITMG}
Huiwen Chang, Han Zhang, Lu~Jiang, Ce~Liu, and William~T. Freeman.
\newblock Maskgit: Masked generative image transformer.
\newblock In {\em CVPR}, 2022.

\bibitem{Deng2009ImageNetAL}
Jia Deng, Wei Dong, Richard Socher, Li-Jia Li, K.~Li, and Li~Fei-Fei.
\newblock Imagenet: A large-scale hierarchical image database.
\newblock In {\em CVPR}, 2009.

\bibitem{Dhariwal2020JukeboxAG}
Prafulla Dhariwal, Heewoo Jun, Christine Payne, Jong~Wook Kim, Alec Radford, and Ilya Sutskever.
\newblock Jukebox: A generative model for music.
\newblock {\em ArXiv}, 2020.

\bibitem{Esser2020TamingTF}
Patrick Esser, Robin Rombach, and Bj{\"o}rn Ommer.
\newblock Taming transformers for high-resolution image synthesis.
\newblock In {\em CVPR}, 2021.

\bibitem{graf2000foundations}
Siegfried Graf and Harald Luschgy.
\newblock {\em Foundations of quantization for probability distributions}.
\newblock Springer Science \& Business Media, 2000.

\bibitem{Heusel2017GANsTB}
Martin Heusel, Hubert Ramsauer, Thomas Unterthiner, Bernhard Nessler, and Sepp Hochreiter.
\newblock Gans trained by a two time-scale update rule converge to a local nash equilibrium.
\newblock In {\em NeurIPS}, 2017.

\bibitem{Ho2020DenoisingDP}
Jonathan Ho, Ajay Jain, and P.~Abbeel.
\newblock Denoising diffusion probabilistic models.
\newblock In {\em NeurIPS}, 2020.

\bibitem{Huang2023NotAI}
Mengqi Huang, Zhendong Mao, Quang Wang, and Yongdong Zhang.
\newblock Not all image regions matter: Masked vector quantization for autoregressive image generation.
\newblock In {\em CVPR}, 2023.

\bibitem{Isola2016ImagetoImageTW}
Phillip Isola, Jun-Yan Zhu, Tinghui Zhou, and Alexei~A. Efros.
\newblock Image-to-image translation with conditional adversarial networks.
\newblock In {\em CVPR}, 2017.

\bibitem{Johnson2016PerceptualLF}
Justin Johnson, Alexandre Alahi, and Li~Fei-Fei.
\newblock Perceptual losses for real-time style transfer and super-resolution.
\newblock In {\em ECCV}, 2016.

\bibitem{Karras2018ASG}
Tero Karras, Samuli Laine, and Timo Aila.
\newblock A style-based generator architecture for generative adversarial networks.
\newblock In {\em CVPR}, 2019.

\bibitem{Kingma2013AutoEncodingVB}
Diederik~P. Kingma and Max Welling.
\newblock Auto-encoding variational bayes.
\newblock In {\em ICLR}, 2014.

\bibitem{Krizhevsky2009LearningML}
Alex Krizhevsky.
\newblock Learning multiple layers of features from tiny images.
\newblock {\em ArXiv}, 2009.

\bibitem{Lee2022AutoregressiveIG}
Doyup Lee, Chiheon Kim, Saehoon Kim, Minsu Cho, and Wook-Shin Han.
\newblock Autoregressive image generation using residual quantization.
\newblock In {\em CVPR}, 2022.

\bibitem{Li2024XQGANAO}
Xiang Li, Kai Qiu, Hao Chen, Jason Kuen, Jiuxiang Gu, Jindong Wang, Zhe Lin, and Bhiksha Raj.
\newblock Xq-gan: An open-source image tokenization framework for autoregressive generation.
\newblock {\em ArXiv}, 2024.

\bibitem{Lim2017GeometricG}
Jae~Hyun Lim and J.~C. Ye.
\newblock Geometric gan.
\newblock {\em ArXiv}, 2017.

\bibitem{Lindley1959InformationTA}
David Lindley and Solomon Kullback.
\newblock Information theory and statistics.
\newblock {\em Journal of the American Statistical Association}, 1959.

\bibitem{Loshchilov2017DecoupledWD}
Ilya Loshchilov and Frank Hutter.
\newblock Decoupled weight decay regularization.
\newblock In {\em ICLR}, 2019.

\bibitem{Ma2024STARST}
Xiaoxiao Ma, Mohan Zhou, Tao Liang, Yalong Bai, Tiejun Zhao, H.~Chen, and Yi~Jin.
\newblock Star: Scale-wise text-to-image generation via auto-regressive representations.
\newblock {\em ArXiv}, 2024.

\bibitem{Mentzer2023FiniteSQ}
Fabian Mentzer, David~C. Minnen, Eirikur Agustsson, and Michael Tschannen.
\newblock Finite scalar quantization: Vq-vae made simple.
\newblock In {\em ICLR}, 2024.

\bibitem{Netzer2011ReadingDI}
Yuval Netzer, Tao Wang, Adam Coates, A.~Bissacco, Bo~Wu, and A.~Ng.
\newblock Reading digits in natural images with unsupervised feature learning.
\newblock {\em ArXiv}, 2011.

\bibitem{Olkin1982TheDB}
Ingram Olkin and Friedrich Pukelsheim.
\newblock The distance between two random vectors with given dispersion matrices.
\newblock {\em Linear Algebra and its Applications}, 1982.

\bibitem{Ramesh2021ZeroShotTG}
Aditya Ramesh, Mikhail Pavlov, Gabriel Goh, Scott Gray, Chelsea Voss, Alec Radford, Mark Chen, and Ilya Sutskever.
\newblock Zero-shot text-to-image generation.
\newblock In {\em ICML}, 2021.

\bibitem{Razavi2019GeneratingDH}
Ali Razavi, A{\"a}ron van~den Oord, and Oriol Vinyals.
\newblock Generating diverse high-fidelity images with vq-vae-2.
\newblock In {\em NeurIPS}, 2019.

\bibitem{Rombach2021HighResolutionIS}
Robin Rombach, A.~Blattmann, Dominik Lorenz, Patrick Esser, and Bj{\"o}rn Ommer.
\newblock High-resolution image synthesis with latent diffusion models.
\newblock In {\em CVPR}, 2022.

\bibitem{Ronneberger2015UNetCN}
Olaf Ronneberger, Philipp Fischer, and Thomas Brox.
\newblock U-net: Convolutional networks for biomedical image segmentation.
\newblock In {\em MICCAI}, 2015.

\bibitem{Simonyan2014VeryDC}
Karen Simonyan and Andrew Zisserman.
\newblock Very deep convolutional networks for large-scale image recognition.
\newblock In {\em ICLR}, 2015.

\bibitem{Sun2024AutoregressiveMB}
Peize Sun, Yi~Jiang, Shoufa Chen, Shilong Zhang, Bingyue Peng, Ping Luo, and Zehuan Yuan.
\newblock Autoregressive model beats diffusion: Llama for scalable image generation.
\newblock {\em ArXiv}, 2024.

\bibitem{Takida2022SQVAEVB}
Yuhta Takida, Takashi Shibuya, Wei-Hsiang Liao, Chieh-Hsin Lai, Junki Ohmura, Toshimitsu Uesaka, Naoki Murata, Shusuke Takahashi, Toshiyuki Kumakura, and Yuki Mitsufuji.
\newblock Sq-vae: Variational bayes on discrete representation with self-annealed stochastic quantization.
\newblock In {\em ICML}, 2022.

\bibitem{Tian2024VisualAM}
Keyu Tian, Yi~Jiang, Zehuan Yuan, Bingyue Peng, and Liwei Wang.
\newblock Visual autoregressive modeling: Scalable image generation via next-scale prediction.
\newblock In {\em NeurIPS}, 2024.

\bibitem{Oord2017NeuralDR}
A{\"a}ron van~den Oord, Oriol Vinyals, and Koray Kavukcuoglu.
\newblock Neural discrete representation learning.
\newblock In {\em NeurIPS}, 2017.

\bibitem{Vuong2023VectorQW}
Tung-Long Vuong, Trung-Nghia Le, He~Zhao, Chuanxia Zheng, Mehrtash Harandi, Jianfei Cai, and Dinh~Q. Phung.
\newblock Vector quantized wasserstein auto-encoder.
\newblock In {\em ICML}, 2023.

\bibitem{Williams2020HierarchicalQA}
Will Williams, Sam Ringer, Tom Ash, John Hughes, David Macleod, and Jamie Dougherty.
\newblock Hierarchical quantized autoencoders.
\newblock In {\em NeurIPS}, 2020.

\bibitem{Yu2021VectorquantizedIM}
Jiahui Yu, Xin Li, Jing~Yu Koh, Han Zhang, Ruoming Pang, James Qin, Alexander Ku, Yuanzhong Xu, Jason Baldridge, and Yonghui Wu.
\newblock Vector-quantized image modeling with improved vqgan.
\newblock In {\em ICLR}, 2022.

\bibitem{Zhang2023RegularizedVQ}
Jiahui Zhang, Fangneng Zhan, Christian Theobalt, and Shijian Lu.
\newblock Regularized vector quantization for tokenized image synthesis.
\newblock In {\em CVPR}, 2023.

\bibitem{Zhang2018TheUE}
Richard Zhang, Phillip Isola, Alexei~A. Efros, Eli Shechtman, and Oliver Wang.
\newblock The unreasonable effectiveness of deep features as a perceptual metric.
\newblock In {\em CVPR}, 2018.

\bibitem{Zheng2023OnlineCC}
Chuanxia Zheng and Andrea Vedaldi.
\newblock Online clustered codebook.
\newblock In {\em ICCV}, 2023.

\bibitem{Zhu2024ScalingTC}
Lei Zhu, Fangyun Wei, Yanye Lu, and Dong Chen.
\newblock Scaling the codebook size of vqgan to 100,000 with a utilization rate of 99\%.
\newblock {\em ArXiv}, 2024.

\bibitem{Zhu2024AddressingRC}
Yongxin Zhu, Bocheng Li, Yifei Xin, and Linli Xu.
\newblock Addressing representation collapse in vector quantized models with one linear layer.
\newblock {\em ArXiv}, 2024.

\end{thebibliography}


\newpage
\appendix
\appendix
\section{Optimal Support of The Codebook Distribution}

\begin{proof}[Proof of Theorem \ref{thm:1}]
    First, we assume $\overline{\supp(\cP_B)}=\overline{\supp(\cP_A)}$. Then for any $\bz\in \supp(\cP_A)$, there exist a sequence of points in $\supp(\cP_B)$ that converge to $\bz$. Let $\{\be_k\}_{k=1}^K$ be $K$ code vectors independently generated from $\cP_B$. 
    Then the empirical distribution of $\{\be_k\}_{k=1}^K$ tends to $\cP_B$ as the  size $K$ tends to infinity. 
    Since $\Omega=\supp(\cP_A)$ is a bounded region,  we have the following: 
 \$    \sup_{\bz\in\overline{\supp(\cP_A)}}\min_{k}\norm{\bz-\be_k}^2= \sup_{\bz\in\overline{\supp(\cP_B)}}\min_{k}\norm{\bz-\be_k}^2\overset{p}{\to}0,\quad \textnormal{as }K\to\infty.
    \$
    This quantity is an upper bound on the quantization error $\cE(\{\bz_i\};\{\be_k\})$. Thus,
    \$
    \sup_{\{\bz_i\}\subseteq\Omega}\cE\left(\{\bz_i\}_{i=1}^N;\{\be_k\}_{k=1}^K\right)\leq \sup_{\bz\in\overline{\Omega}}\min_{k}\norm{\bz-\be_k}^2\overset{p}{\to}0, \quad \textnormal{as }K\to\infty.
    \$
    This demonstrates that $\cP_B$ has vanishing quantization error asymptotically. Furthermore, for any $K$ code vectors $\{\be_k\}_{k=1}^K$ independently drawn from $\cP_B$, we have $\{\be_k\}_{k=1}^K\subseteq\overline{\Omega}$. Since the empirical distribution of $\{\bz_i\}_{i=1}^N$ tends to $\cP_A$ as the feature sample size $N$ tends to infinity, we can easily show that for any fixed $\{\be_k\}_{k=1}^K\subseteq\overline{\Omega}$, the codebook utility rate satisfies
    \$
    \cU\left(\{\bz_i\}_{i=1}^N,\{\be_k\}_{k=1}^K\right)\overset{p}{\to}1,\quad\textnormal{as }N\to\infty.
    \$
    This shows that $\{\be_k\}_{k=1}^K$ attains full utilization asymptotically, and thus $\cP_B$ attains full utilization asymptotically. 

    On the other hand, we assume $\cP_B$ attains full utilization and vanishing quantization error asymptotically. Then we first claim that $\overline{\supp(\cP_A)}\subseteq\overline{\supp(\cP_B)}$. Since $\cP_B$ has vanishing quantization error asymptotically, then for any $\bz\in\supp(\cP_A)$, there exist a sequence of points in $\supp(\cP_B)$ that converge to $\bz$. This implies that $\supp(\cP_A)\subseteq\overline{\supp(\cP_B)}$ and thus $\overline{\supp(\cP_A)}\subseteq\overline{\supp(\cP_B)}$. 
    
    To show $\overline{\supp(\cP_B)}=\overline{\supp(\cP_A)}$, it remains to show $\supp(\cP_B)\subseteq\overline{\supp(\cP_A)}$. In fact, if $\supp(\cP_B)\subseteq\overline{\supp(\cP_A)}$ does not hold, then there exists an open region $
    \cR\subseteq \supp(\cP_B)-\overline{\supp(\cP_A)}
    $ 
    such that $\cP_B(\cR)>0$ and 
    \$
    \min_{\bz\in\supp(\cP_A),\bz'\in\cR}\norm{\bz-\bz'}\geq\epsilon_0
    \$ 
    for some $\epsilon_0>0$. Since $\supp(\cP_A)\subseteq\overline{\supp(\cP_B)}$, then there exists a sufficiently large $K_0$ such that the event  
    \#\label{equ:codebookselection}
   \!\!\! \left\{\textnormal{Generating}\{\be_k\}_{k=1}^{K_0} \textnormal{ i.i.d. from }\cP_B \textnormal{~s.t.~}  \{\be_k\}\subseteq\supp(\cP_A), \sup_{\bz\in\supp(\cP_A)}\min_k\norm{\bz-\be_k}<\epsilon_0\right\}
    \#
    has some positive probability $C>0$. Then with a positive probability of at least {$C\cdot\cP_B(\cR)$}, we can pick the first $K_0$ code vectors from Equation (\ref{equ:codebookselection}) and the $(K_0+1)$th code vector from $\cR$. For any such codebook of size $K_0+1$, we know the $(K_0+1)$th code vector will never be used regardless of the choice of the feature set $\{\bz_i\}$. Therefore, the codebook utilization 
    \$
    \sup_{\{\bz_i\}}\cU\left(\{\be_k\}_{k=1}^{K_0+1};\{\bz_i\}\right)\leq \frac{K_0}{K_0+1}<1.
    \$
    This contradicts the property that $\cP_B$ attains full utilization asymptotically. Thus, $\supp(\cP_B)\subseteq\overline{\supp(\cP_A)}$ must hold. This concludes the proof.  
\end{proof} 

\section{Statistical Distances over Gaussian Distributions}
\label{appendix: distribution distance}
We first introduce the definition of Wasserstein distance. 
\begin{definition}
The Wasserstein distance or earth-mover distance with $p$ norm is defined as below:
\begin{equation}
W_{p}(\mathbb{P}_r, \mathbb{P}_g) = (\inf_{\gamma \in \Pi(\mathbb{P}_r ,\mathbb{P}_g)} \mathbb{E}_{(x, y) \sim \gamma}\big[\|x - y\|^{p}\big])^{1/p}~.
\label{eq:def wasserstein distance}
\end{equation}
\end{definition}
where $\Pi(\mathcal{P}_r,\mathcal{P}_g)$ denotes the set of all joint distributions $\gamma(x,y)$ whose marginals are $\mathcal{P}_r$ and $\mathcal{P}_g$ respectively. Intuitively, when viewing each distribution as a unit amount of earth/soil, the Wasserstein distance (also known as earth-mover distance) represents the minimum cost of transporting ``mass'' from $x$ to $y$ to transform distribution $\mathcal{P}_r$ into distribution $\mathcal{P}_g$. When $p=2$, this is called the quadratic Wasserstein distance.

In this paper, we achieve distributional matching using the quadratic Wasserstein distance under Gaussian distribution assumptions. We also examine other statistical distribution distances as potential loss functions for distributional matching and compare them with the Wasserstein distance. Specifically, we provide the Kullback-Leibler divergence and the Bhattacharyya distance over Gaussian distributions in Lemma~\ref{theorem:kl distance} and Lemma~\ref{theorem:bd distance}. It can be observed that the KL divergence for two Gaussian distributions involves calculating the determinant of covariance matrices, which is computationally expensive in moderate and high dimensions. Moreover, the calculation of the determinant is sensitive to perturbations and it requires full rank (In the case of not full rank, the determinant is zero, rendering the logarithm of zero undefined), which can be impractical in many cases. Other statistical distances like Bhattacharyya Distance suffer from the same issue. In contrast, quadratic Wasserstein distance does not require the calculation of the determinant and full-rank covariance matrices.

\begin{lemma}[Kullback-Leibler divergence~\citep{Lindley1959InformationTA}] \label{theorem:kl distance} 
Suppose two random variables $\m Z_1 \sim \mathcal{N}(\bm \mu_1, \m \Sigma_1)$ and $\m Z_2 \sim \mathcal{N}(\bm \mu_2, \m \Sigma_2)$ obey multivariate normal distributions, then Kullback-Leibler divergence between $\m Z1$ and $\m Z_2$ is:
\begin{align}
\KL(\m Z_1, \m Z_2) = \frac{1}{2}((\bm \mu_1-\bm \mu_2)^T \m \Sigma_2^{-1}(\bm \mu_1-\bm \mu_2) + \tr(\m \Sigma_2^{-1}\m \Sigma_1-\m I) + \ln{\frac{\det{\m \Sigma_2}}{\det \m \Sigma_1}}). \nonumber
\end{align}
\end{lemma}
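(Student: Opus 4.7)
The plan is to unroll the definition $\KL(\m Z_1, \m Z_2) = \mathbb{E}_{\m X \sim \mathcal{N}(\bm\mu_1, \m\Sigma_1)}[\log p_1(\m X) - \log p_2(\m X)]$ against the explicit Gaussian log-density
\[
\log p_i(\m x) = -\tfrac{d}{2}\log(2\pi) - \tfrac{1}{2}\log\det\m\Sigma_i - \tfrac{1}{2}(\m x-\bm\mu_i)^T\m\Sigma_i^{-1}(\m x-\bm\mu_i),
\]
and then reduce the whole computation to two expectations of quadratic forms. The $(d/2)\log(2\pi)$ constants cancel immediately, and the log-determinant pieces pull out of the expectation to produce the $\tfrac{1}{2}\ln(\det\m\Sigma_2/\det\m\Sigma_1)$ term of the claimed formula.

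The core work is then to evaluate $\tfrac{1}{2}\mathbb{E}[(\m X-\bm\mu_2)^T\m\Sigma_2^{-1}(\m X-\bm\mu_2) - (\m X-\bm\mu_1)^T\m\Sigma_1^{-1}(\m X-\bm\mu_1)]$. For the ``self'' term I would use the trace identity $\mathbb{E}[\m u^T\m A\m u] = \tr(\m A\,\mathbb{E}[\m u\m u^T])$ with $\m u = \m X - \bm\mu_1$, which gives $\tr(\m\Sigma_1^{-1}\m\Sigma_1) = d$. For the ``cross'' term I would split $\m X - \bm\mu_2 = (\m X - \bm\mu_1) + (\bm\mu_1 - \bm\mu_2)$ and expand: the linear cross-piece vanishes under expectation because $\mathbb{E}[\m X - \bm\mu_1] = \bm 0$, the same trace identity applied to the centered quadratic piece yields $\tr(\m\Sigma_2^{-1}\m\Sigma_1)$, and the deterministic remainder contributes the Mahalanobis term $(\bm\mu_1-\bm\mu_2)^T\m\Sigma_2^{-1}(\bm\mu_1-\bm\mu_2)$. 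Combining the cross term with the $-d$ from the self term gives $\tr(\m\Sigma_2^{-1}\m\Sigma_1 - \m I)$, and assembling all three pieces with the overall factor of $\tfrac{1}{2}$ recovers the stated formula.

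There is essentially no genuine obstacle in this proof; the only care needed is to keep track of which covariance appears in which quadratic form, since the expectation is taken under $\cP_1$ while $\m\Sigma_2^{-1}$ drives the nontrivial cross quadratic, which is what produces the asymmetric look of the final answer. Implicit in every step is the assumption that both $\m\Sigma_1$ and $\m\Sigma_2$ are positive definite, so that the densities, the inverses, and the log-determinants are all well defined; this is precisely the regularity restriction whose practical failure motivates the paper's preference for the quadratic Wasserstein form of Lemma~\ref{theorem:Wasserstein Distance}.
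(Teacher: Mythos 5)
Your derivation is correct: expanding $\KL$ under the Gaussian log-densities, cancelling the $(d/2)\log(2\pi)$ constants, and evaluating the two quadratic forms via $\mathbb{E}[\m u^T \m A \m u] = \tr(\m A\,\mathbb{E}[\m u \m u^T])$ together with the split $\m X - \bm\mu_2 = (\m X-\bm\mu_1) + (\bm\mu_1-\bm\mu_2)$ yields exactly $\frac{1}{2}\bigl((\bm\mu_1-\bm\mu_2)^T\m\Sigma_2^{-1}(\bm\mu_1-\bm\mu_2) + \tr(\m\Sigma_2^{-1}\m\Sigma_1-\m I) + \ln\frac{\det\m\Sigma_2}{\det\m\Sigma_1}\bigr)$, since $\tr(\m\Sigma_2^{-1}\m\Sigma_1)-d = \tr(\m\Sigma_2^{-1}\m\Sigma_1-\m I)$. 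Note that the paper itself gives no proof of this lemma --- it is stated as a classical result cited from Lindley (1959) and used only to argue that the KL divergence, unlike the quadratic Wasserstein distance, requires determinants and full-rank covariances --- so there is nothing to compare against; yours is the standard textbook argument, and your closing remark about positive definiteness is precisely the limitation the paper highlights.
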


\begin{lemma}
[Bhattacharyya Distance~\citep{Bhattacharyya1943OnAM}]
\label{theorem:bd distance}
Suppose two random variables $\m Z_1 \sim \mathcal{N}(\bm \mu_1, \m \Sigma_1)$ and $\m Z_2 \sim \mathcal{N}(\bm \mu_2, \m \Sigma_2)$ obey multivariate normal distributions, $\m \Sigma = \frac{1}{2}(\m \Sigma_1 + \m \Sigma_2)$, then bhattacharyya distance between $\m Z1$ and $\m Z_2$ is:
\begin{align}
\mathcal{D}_{\textrm{B}}(\m Z_1, \m Z_2) = \frac{1}{8}(\bm \mu_1-\bm \mu_2)^T \m \Sigma^{-1}(\bm \mu_1-\bm \mu_2) + \frac{1}{2}\ln \frac{\det \m \Sigma}{\sqrt{\det \m \Sigma_1 \det \m \Sigma_2}}. \nonumber
\end{align}
\end{lemma}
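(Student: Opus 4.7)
The plan is to evaluate the Bhattacharyya coefficient $\mathrm{BC}(p_1,p_2) = \int \sqrt{p_1(x)\,p_2(x)}\,dx$, where $p_i$ is the density of $\mathcal{N}(\bm\mu_i,\m\Sigma_i)$, and then set $\mathcal{D}_{\mathrm{B}} = -\ln \mathrm{BC}$. Since $\sqrt{p_1 p_2}$ is proportional to a Gaussian in $x$, the computation reduces to three pieces: (i) collecting the multiplicative prefactor, (ii) completing the square in $x$ to isolate a residual $x$-free constant, and (iii) integrating the remaining Gaussian in $x$.

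Concretely, I would first write
\[
\sqrt{p_1(x)\,p_2(x)} \;=\; \frac{1}{(2\pi)^{d/2}(\det\m\Sigma_1\,\det\m\Sigma_2)^{1/4}}\exp\!\Bigl(-\tfrac{1}{4}Q(x)\Bigr),
\]
with $Q(x) = (x-\bm\mu_1)^{\T}\m\Sigma_1^{-1}(x-\bm\mu_1) + (x-\bm\mu_2)^{\T}\m\Sigma_2^{-1}(x-\bm\mu_2)$. Setting $M = \m\Sigma_1^{-1} + \m\Sigma_2^{-1}$ and $m = \m\Sigma_1^{-1}\bm\mu_1 + \m\Sigma_2^{-1}\bm\mu_2$, the usual completion of squares gives $Q(x) = (x - M^{-1}m)^{\T} M (x - M^{-1}m) + c$ with residual $c = \bm\mu_1^{\T}\m\Sigma_1^{-1}\bm\mu_1 + \bm\mu_2^{\T}\m\Sigma_2^{-1}\bm\mu_2 - m^{\T}M^{-1}m$. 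Integrating the quadratic-in-$x$ factor contributes $(2\pi)^{d/2}\,2^{d/2}(\det M)^{-1/2}$, and the identity $\det M = \det(\m\Sigma_1+\m\Sigma_2)/(\det\m\Sigma_1\,\det\m\Sigma_2) = 2^{d}\det\m\Sigma/(\det\m\Sigma_1\,\det\m\Sigma_2)$ collapses the prefactor into $(\det\m\Sigma_1\,\det\m\Sigma_2)^{1/4}/(\det\m\Sigma)^{1/2}$. Taking $-\ln$ then reproduces the log-determinant term $\tfrac{1}{2}\ln\bigl(\det\m\Sigma/\sqrt{\det\m\Sigma_1\,\det\m\Sigma_2}\bigr)$ in the lemma.

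The main obstacle is step (ii): showing that $c$ simplifies to $\tfrac{1}{2}(\bm\mu_1-\bm\mu_2)^{\T}\m\Sigma^{-1}(\bm\mu_1-\bm\mu_2)$, which after being scaled by the $\tfrac{1}{4}$ in the exponent produces the claimed $\tfrac{1}{8}$ coefficient. I would expand $m^{\T}M^{-1}m$ and use the elementary identities $I - M^{-1}\m\Sigma_1^{-1} = M^{-1}\m\Sigma_2^{-1}$ and $I - M^{-1}\m\Sigma_2^{-1} = M^{-1}\m\Sigma_1^{-1}$, both immediate from the definition of $M$, to telescope the $\bm\mu_i^{\T}\m\Sigma_i^{-1}\bm\mu_i$ terms and arrive at $c = (\bm\mu_1-\bm\mu_2)^{\T}\m\Sigma_1^{-1}M^{-1}\m\Sigma_2^{-1}(\bm\mu_1-\bm\mu_2)$. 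The final identification uses the push-through identity $\m\Sigma_1^{-1}(\m\Sigma_1^{-1}+\m\Sigma_2^{-1})^{-1}\m\Sigma_2^{-1} = (\m\Sigma_1+\m\Sigma_2)^{-1} = \tfrac{1}{2}\m\Sigma^{-1}$, which is readily verified by multiplying both sides by $\m\Sigma_1$ on the left and $\m\Sigma_2$ on the right. Combining the three pieces produces the stated closed form.
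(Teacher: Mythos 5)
The paper states this lemma without proof, citing Bhattacharyya (1943) as a classical reference, so there is no internal argument to compare against. Your derivation is correct and complete: you compute the Bhattacharyya coefficient $\mathrm{BC}=\int\sqrt{p_1 p_2}$ by factoring out the Gaussian normalizers, completing the square in $x$ with $M=\m\Sigma_1^{-1}+\m\Sigma_2^{-1}$ and $m=\m\Sigma_1^{-1}\bm\mu_1+\m\Sigma_2^{-1}\bm\mu_2$, integrating out the remaining Gaussian, and then taking $-\ln$. The two non-trivial simplifications are both handled correctly: the determinant identity $\det M = \det(\m\Sigma_1+\m\Sigma_2)/(\det\m\Sigma_1\det\m\Sigma_2)=2^d\det\m\Sigma/(\det\m\Sigma_1\det\m\Sigma_2)$ (from $M=\m\Sigma_1^{-1}(\m\Sigma_1+\m\Sigma_2)\m\Sigma_2^{-1}$) collapses the prefactors into the log-determinant term, and the telescoping of the residual $c$ via $I-M^{-1}\m\Sigma_i^{-1}=M^{-1}\m\Sigma_j^{-1}$ followed by the push-through identity $\m\Sigma_1^{-1}M^{-1}\m\Sigma_2^{-1}=(\m\Sigma_1+\m\Sigma_2)^{-1}=\tfrac12\m\Sigma^{-1}$ yields $c=\tfrac12(\bm\mu_1-\bm\mu_2)^\T\m\Sigma^{-1}(\bm\mu_1-\bm\mu_2)$, so $c/4$ gives the $\tfrac18$ coefficient exactly as claimed. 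This is the standard derivation of the Gaussian Bhattacharyya distance, and nothing in it is missing or unjustified.
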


\section{Understanding Codebook Collapse Through the Lens of Voronoi Partition}
\label{appendix:understanding codebook collapse by Voronoi Partition}

\begin{figure*}[!h]
    \vspace{-2mm}
	\centering
	\subfigure[]{         
        \label{fig:voronoi partition a}  
		\includegraphics[width=0.226\textwidth]{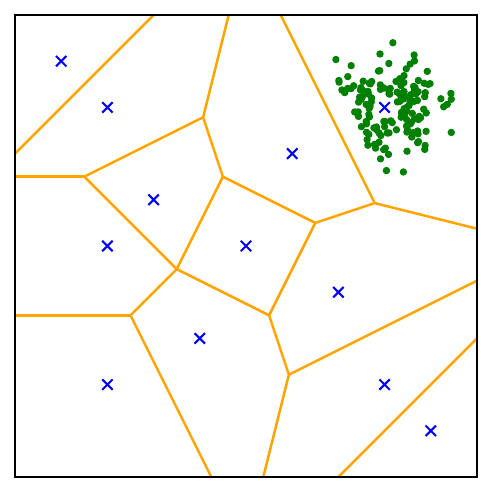}
        }
	\subfigure[]{         
        \label{fig:voronoi partition b}  
		\includegraphics[width=0.226\textwidth]{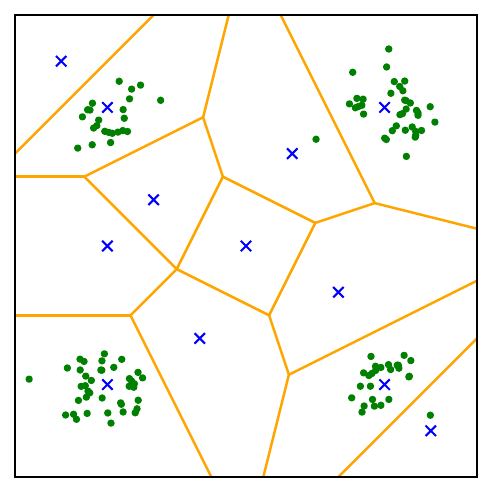}
        }
	\subfigure[]{         
        \label{fig:voronoi partition c}  
		\includegraphics[width=0.226\textwidth]{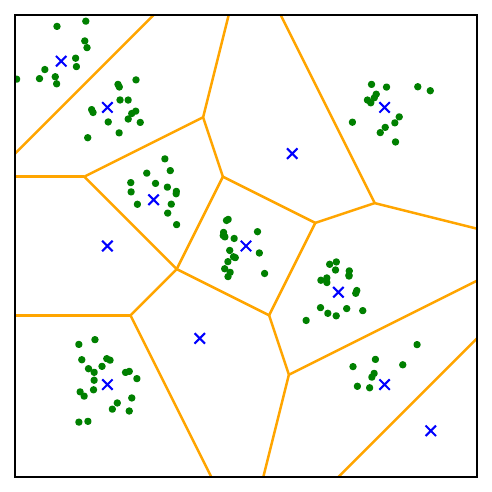}
        }
    \subfigure[]{         
        \label{fig:voronoi partition d}  
		\includegraphics[width=0.226\textwidth]{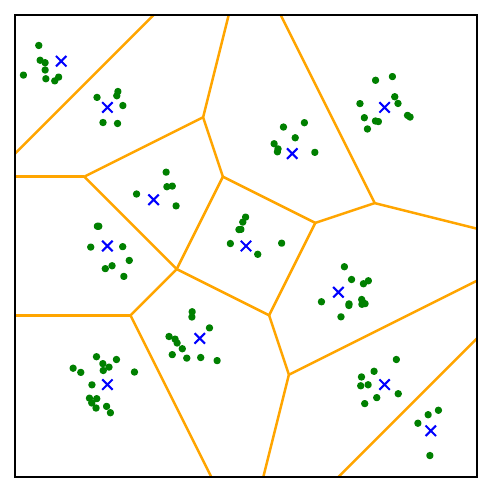}
        }
	\caption{\small{Visualization of the Voronoi partition. The symbols $\cdot$ and $\times$ represent the feature and code vectors, respectively.}}
	\label{fig:visualization of voronoi partition}
\end{figure*}

\subsection{The Definition of Voronoi Partition and Its Connection to Codebook Collapse}
\label{appendix:voronoi partition definition}
Let $\mathcal{X}$ be a metric space with distance function $d(\cdot, \cdot)$, and given a set of code vectors $\{\be_k\}_{k=1}^K$. The Voronoi cell, or Voronoi region, $\mathcal{R}_k$, associated with the code vector $\be_k$ is the set of all points in 
$\mathcal{X}$ whose distance to $\be_k$ is not greater than their distance to the other code vectors $\be_j$, where $j$ is any index different from $k$. Mathematically, this can be expressed as:
\begin{align}
\mathcal{R}_k = \{x \in \mathcal{X}; d(x, \be_k) \leq d(x, \be_j), \forall j \neq k\},
\end{align}
The Voronoi diagram is simply the tuple of cells $\{\mathcal{R}_k\}_{k=1}^K$. As depicted in Figure~\ref{fig:visualization of voronoi partition}, there are 12 code vectors which partition the metric space into 12 regions according to the $\mathcal{R}_k$. When $d$ is a a distance function based on the $\ell_2$ norm, the vector quantization (VQ) process can be equivalently understood through the regions $\mathcal{R}_k$ as:
\begin{align}
\label{eq:vq voronoi partition}
\forall \bz_i \in \mathcal{R}_j, \quad \bz_i' = \argmin_{\be\in\{\be_k\}}\norm{\bz_i-\be} = \be_j
\end{align}
Where $\bz_i$ is an arbitrary feature vector.  Equation~\ref{eq:vq voronoi partition} offers an alternative approach for nearest neighbor search in code vector selection. Specifically, this involves first identifying the partition region $\mathcal{R}_j$ to which the feature vector $\bz_i$ belongs, and then directly obtaining the nearest code vector $\be_j$ based on the region’s id $j$.

\paragraph{Relation to Codebook Collapse} 
The most severe case of codebook collapse occurs when all feature vectors belong to the same partition region. As illustrated in Figure~\ref{fig:voronoi partition a}, all feature vectors are confined to a single partition region in the upper right corner, resulting in the utilization of only one code vector. To prevent codebook collapse, it is crucial for feature vectors to be distributed across all partition regions as evenly as possible, as depicted in Figure~\ref{fig:voronoi partition d}.

\subsection{Why Existing Vector Quantization Strategies Fail to Address Codebook Collapse}
\label{appendix:VQ codebook collapse}

This section offers an in-depth analysis of why existing VQ methods inherently struggle to address codebook collapse. We use Vanilla VQ~\citep{Oord2017NeuralDR} and VQ methods based on the $k$-means algorithm~\citep{Razavi2019GeneratingDH} as illustrative examples. These two approaches share similarities in their assignment step but differ in their update mechanisms. 

\paragraph{Assignment Step} 
Suppose there is a set of feature vectors $\{\bz_i\}_{i=1}^N$ and code vectors $\{\be_k\}_{k=1}^K$. In the $t$-th assignment step, both algorithms partition the feature space into Voronoi cells, based on which we assign all feature vectors to their nearest code vectors as follows: 
\begin{align}
\mathcal{R}_k^{(t-1)} = \{x \in \mathcal{X}; \left\Vert x- \be_k^{(t-1)} \right\Vert_2^{2} \leq \left\Vert x- \be_j^{(t-1)} \right\Vert_2^{2}, \forall j \neq k\}, \quad \mathcal{S}_{k}^{(t)} = \{\bz_i; \bz_i \in \mathcal{R}_k^{(t-1)}\}
\end{align}

\paragraph{Update Step in Vanilla VQ} It updates the code vectors using gradient descent through the loss function provided below.
\begin{align}
\mathcal{L} = \frac{1}{N}\sum_{k=1}^{K}\sum_{z_m \in \mathcal{S}_{k}^{(t)}} \left\Vert \bz_m - \be_k^{(t-1)} \right\Vert_2^{2} 
\end{align} 
 
\paragraph{Update Step in $k$-means-based VQ} It updates the code vectors by using an exponential moving average of the feature vectors assigned to each code vector:
\begin{align}
\be_k^{(t)} = \alpha \be_k^{(t-1)} + (1-\alpha) \frac{1}{|\mathcal{S}_{k}^{(t)}|} \sum_{\bz_m \in \mathcal{S}_{k}^{(t)}} \bz_m
\end{align} 

\paragraph{Codebook Collapse in Two VQs} While both VQ methods employ different update strategies for the code vectors, they still suffer from codebook collapse. This is because, in the assignment step, the learnable Voronoi partition does not guarantee that all Voronoi cells will be assigned feature vectors, as illustrated in Figures~\ref{fig:voronoi partition a} to~\ref{fig:voronoi partition c}. Especially when the codebook size is large, there are more Voronoi cells, and inevitably, some cells remain unassigned. In such cases, the corresponding code vectors remain unupdated and underutilized.

\paragraph{Connection to Distribution Matching and Solutions} In Appendix~\ref{appendix:discussion}, we demonstrated through synthetic experiments that the effectiveness of both VQ methods heavily relies on the codebook initialization. Only when the codebook distribution is initialized to approximate the feature distribution can codebook collapse be effectively mitigated. However, in practical applications, the feature distribution is often unknown and evolves dynamically during training. To address this issue, we propose an explicit distributional matching constraint that ensures the codebook distribution aligns closely with the feature distribution, thereby achieving 100\% codebook utilization.

\section{Complementary Roles of Criterion~\ref{criteria:cu} and~\ref{criteria:cp} in Assessing Codebook Collapse} 
\label{appendix:explanation on criterions}

\begin{figure*}[!h]
	\centering
	\subfigure[$(50\%, 4.92)$]{         
        \label{fig:criterion sample a}  
		\includegraphics[width=0.226\textwidth]{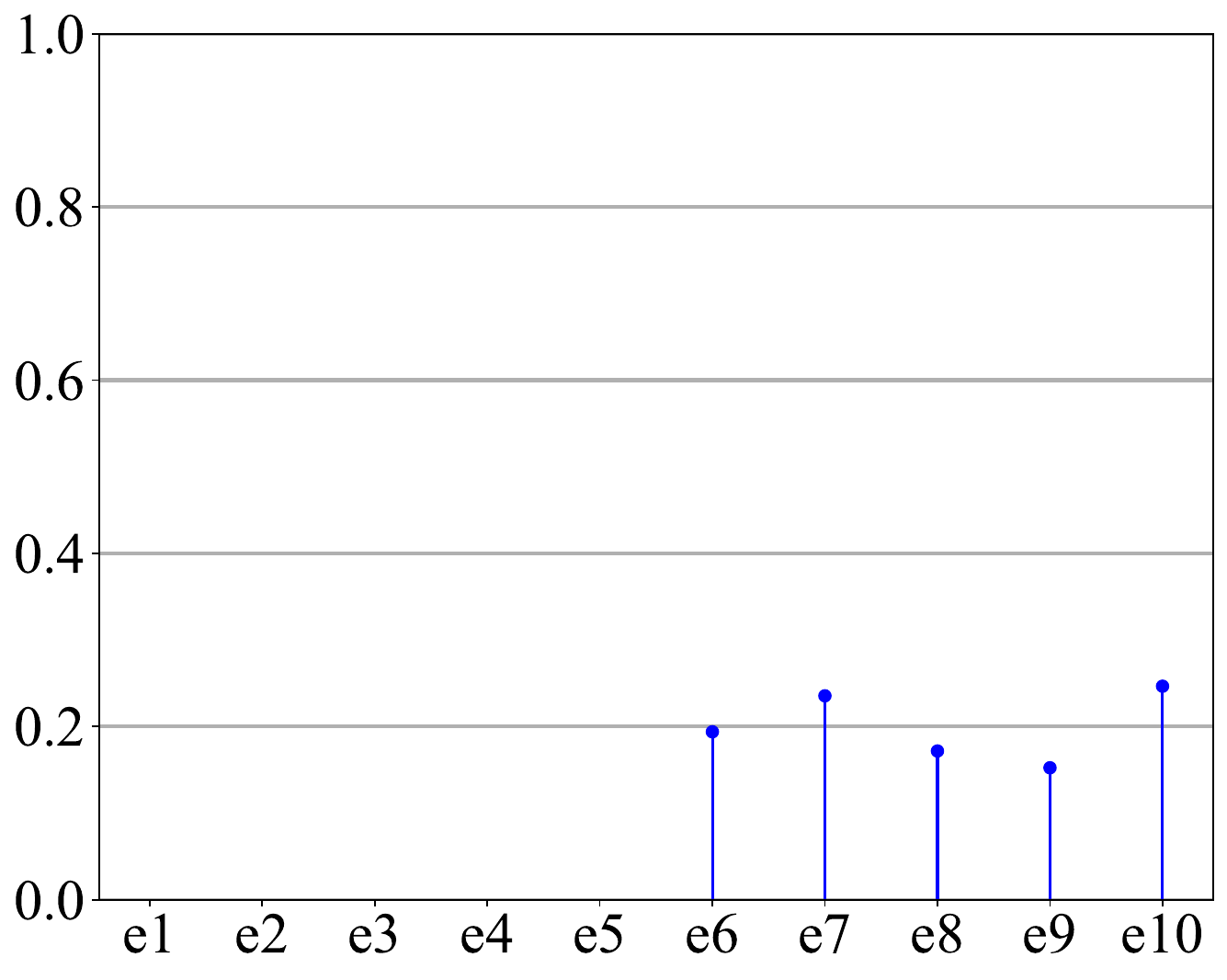}
        }
	\subfigure[$(100\%, 10.00)$]{         
        \label{fig:criterion sample b}  
		\includegraphics[width=0.226\textwidth]{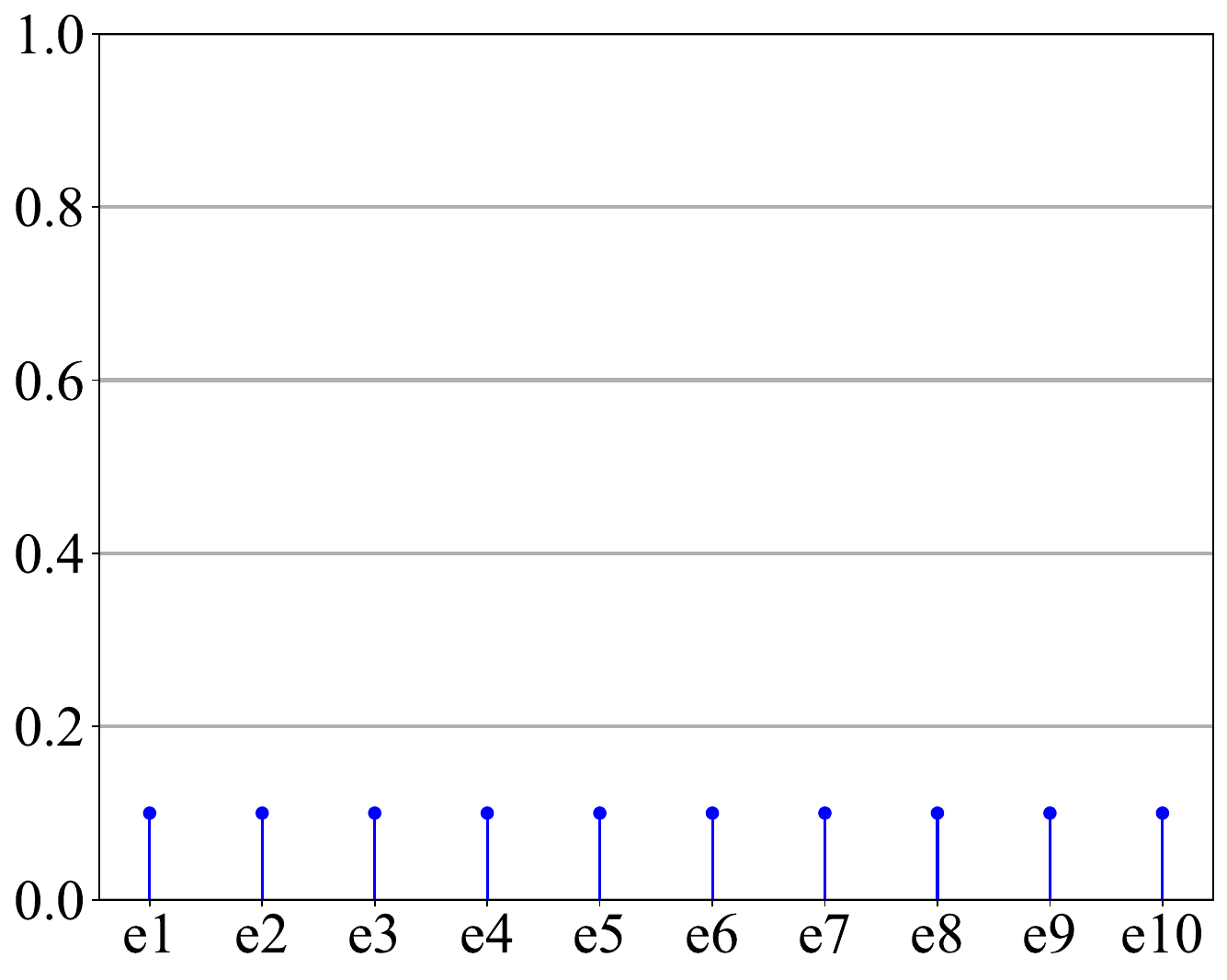}
        }
	\subfigure[$(100\%, 1.02)$]{         
        \label{fig:criterion sample c}  
		\includegraphics[width=0.226\textwidth]{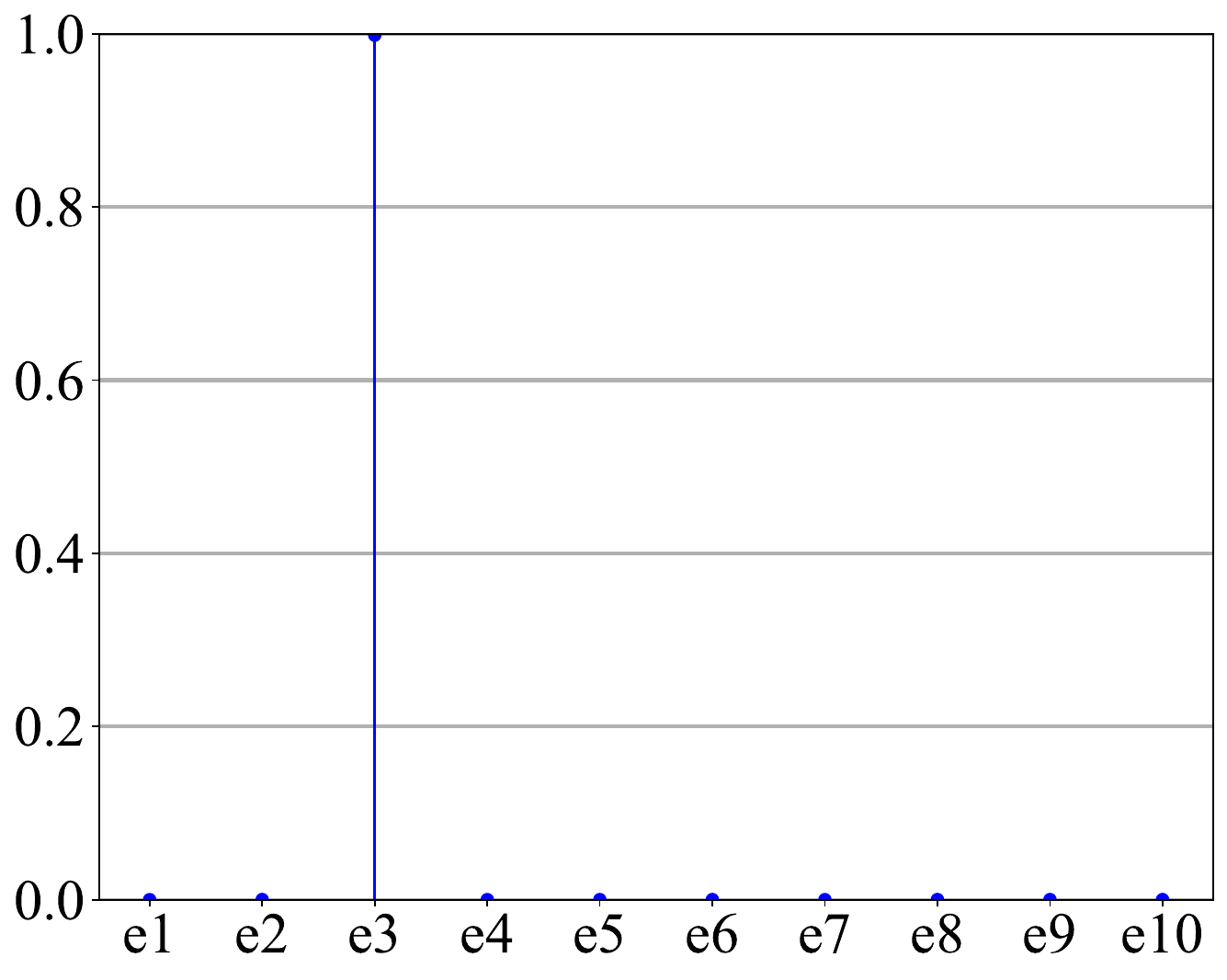}
        }
    \subfigure[$(100\%, 4.92)$]{         
        \label{fig:criterion sample d}  
		\includegraphics[width=0.226\textwidth]{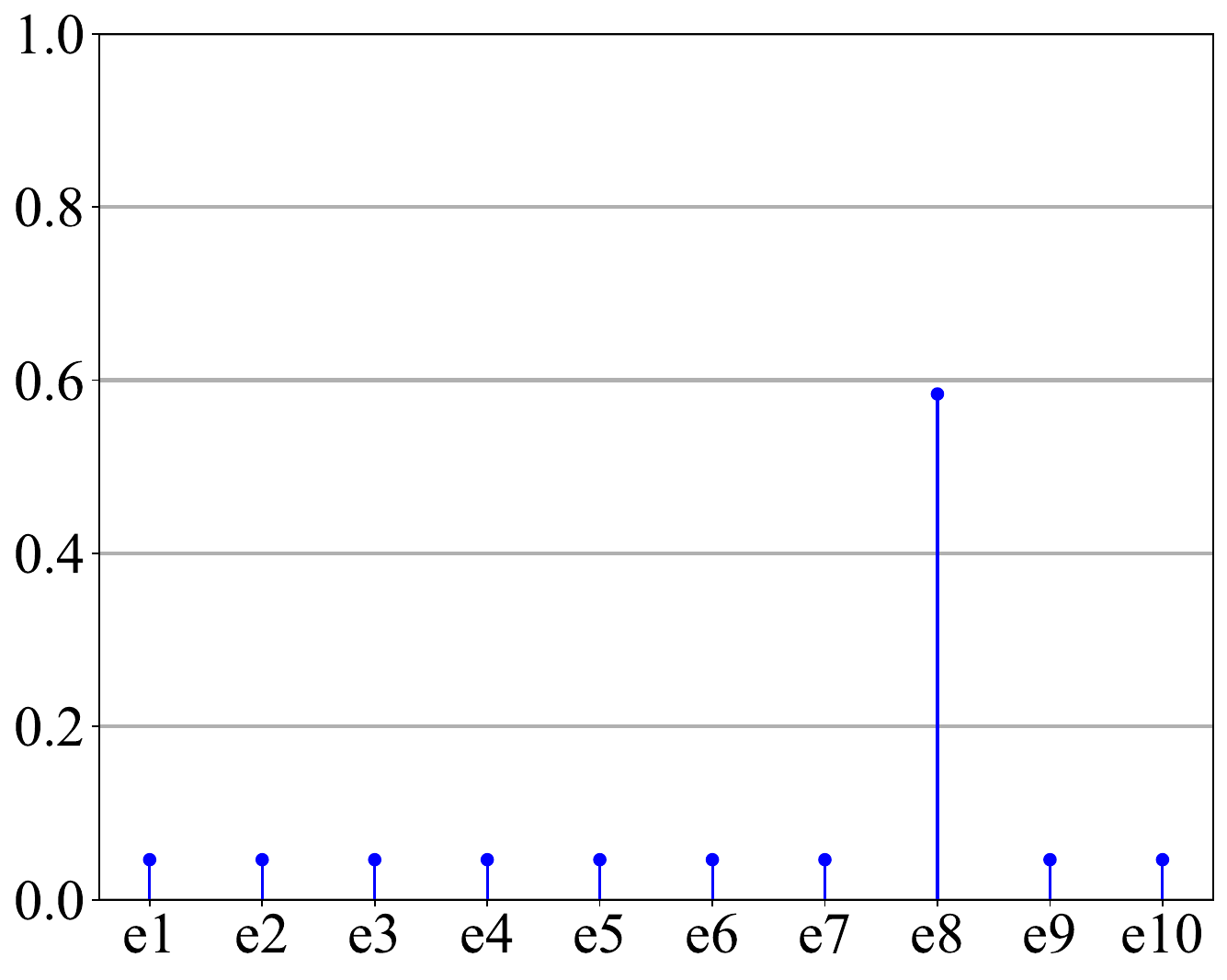}
        }
	\caption{\small{Visualization of the evaluation criteria $(\cU, \cC)$.}}
	\label{fig:visualization of criterion triple}
\end{figure*}

To explain the complementary roles of Criterion~\ref{criteria:cu} and~\ref{criteria:cp} (defined in Section~\ref{sec:distributional formulation}), we provide visual elucidations for enhanced clarity and understanding. The metric $\mathcal{U}$ is capable of quantifying the \emph{completeness} of codebook utilization. As depicted in the Figure~\ref{fig:criterion sample a} and~\ref{fig:criterion sample b}, the values of $\mathcal{U}$ are $50\%$ and $100\%$, respectively\footnote{This discrepancy arises because, in Figure~\ref{fig:criterion sample a} only half of code vectors' utilization $p_k$ (as defined in Criterion~\ref{criteria:cp}) exceeds zero, whereas in Figure~\ref{fig:criterion sample b}, the utilization $p_k$ of of all code vectors surpasses zero.}. However, $\mathcal{U}$ alone is insufficient to evaluate the degree of codebook collapse, as it fails to address the scenario depicted in Figure~\ref{fig:criterion sample c}. Although all code vectors are utilized, the code vector $e_3$ excessively dominates the codebook utilization, resulting in an extreme imbalance. This imbalanced codebook utilization can be considered a form of codebook collapse, despite 
$\mathcal{U}$ reaching its maximum value. This observation motivates the proposal of Criterion~\ref{criteria:cp}, which is capable of gauging the \emph{imbalance} or \emph{uniformity} inherent in codebook utilization.

When compared in Figure~\ref{fig:criterion sample b} and~\ref{fig:criterion sample c}, the value of $\mathcal{C}$ are $10.00$ and $1.02$, respectively, demonstrating that Criterion~\ref{criteria:cp} is capable of distinguishing the imbalance of code vector utilization $p_k$ under conditions where cases share the same $\mathcal{U}$, e.g., $\mathcal{U}=100\%$. Additionally, Criterion~\ref{criteria:cp} categorizes Figure~\ref{fig:criterion sample c} as indicative of codebook collapse, as the value $\mathcal{C}$ nearly reaches its minimum of $1.0$, a result that resonates with our desired interpretation. However, it is essential to note that Criterion~\ref{criteria:cp} alone does not suffice to evaluate the degree of codebook collapse. When scrutinizing Figure~\ref{fig:criterion sample a} and~\ref{fig:criterion sample d}, despite the identical $\mathcal{C}$, there exists a stark disparity in $\mathcal{U}$. This observation underscores that the value of $\mathcal{C}$ is inadequate for quantifying the proportion of actively utilized code vectors.

In this paper, we adopt the combination of Criterion~\ref{criteria:cu} and~\ref{criteria:cp} to quantitatively assess the extent of codebook collapse. A robust mitigation of codebook collapse is indicated solely when both $\mathcal{U}$ and $\mathcal{C}$  exhibit substantial values.

\section{Interpretation of Qualitative Distributional Matching Results in Figure ~\ref{fig:criterion triple analysis}}
\label{appendix:prototypical study}

This section interprets the experimental results presented in Figure~\ref{fig:criterion triple analysis}. The VQ process relies on nearest neighbor search for code vector selection. As evident from Figure~\ref{fig:diagram 1 group 1} to~\ref{fig:diagram 4 group 1}, actively selected code vectors are predominantly those located in close proximity to or within the feature distribution, while distant ones remain unselected. This leads to highly uneven code vector utilization $p_k$, with those closer to the feature distribution being excessively used. This elucidates the significantly low $\mathcal{U}$ and $\mathcal{C}$ observed in Figure~\ref{fig:diagram 1 group 1}. Furthermore, a notable quantization error, e.g., $\mathcal{E}=1.19$ in Figure~\ref{fig:diagram 1 group 1}, arises when the codebook and feature distributions are mismatched, forcing feature vectors outside the codebook to settle for distant code vectors. Conversely, as the disk centers align, leading to a closer match between the two distributions, an increased number of code vectors become actively engaged. Additionally, code vectors are utilized more uniformly, and feature vectors can select nearer counterparts. This accounts for the improvement of criterion triple values towards optimality as the distributions align.

Analogously, we can employ nearest neighbor search to interpret the second case. When code vectors are distributed within the range of feature vectors, as illustrated in Figure~\ref{fig:diagram 1 group 2} and Figure~\ref{fig:diagram 2 group 2}, the majority of code vectors would be actively utilized, ensuring high $\mathcal{U}$. However, the utilization of these code vectors is not uniform; code vectors on the periphery of the codebook distribution are more frequently used, leading to relatively low $\mathcal{C}$. Feature vectors on the periphery will have larger distances to their nearest code vectors, resulting in higher $\mathcal{E}$. Conversely, when feature vectors fall within the range of code vectors, as depicted in Figure~\ref{fig:diagram 3 group 2} and Figure~\ref{fig:diagram 4 group 2}, outer code vectors remain largely unused, leading to a lower $\mathcal{U}$ and $\mathcal{C}$. Since only inner code vectors are active, each feature vector can find a nearby counterpart, maintaining low $\mathcal{E}$.

\section{Supplementary Quantitative Analyses on Distribution Matching: Further Supporting the Main Findings in Section~\ref{sec:effects of distribution matching}}
\label{appendix:supplementary comprehensive quantitative analyses}
To further elucidate the effects of the distributional matching, we conduct more quantitative analyses centered around the criterion triple $(\cE, \cU, \mathcal{C})$.

\subsection{Codebook Distribution and Feature Distribution are Gaussian Distributions}
\label{appendix:analyses under gaussian distribution}

\begin{figure*}[!t]
    \vspace{-2ex}
	\centering
	\subfigure[$\cE$ {w.r.t.} $K$]{ 
		\label{fig:gaussian_codebooksize_mean_error}  
		\includegraphics[width=0.160\textwidth]{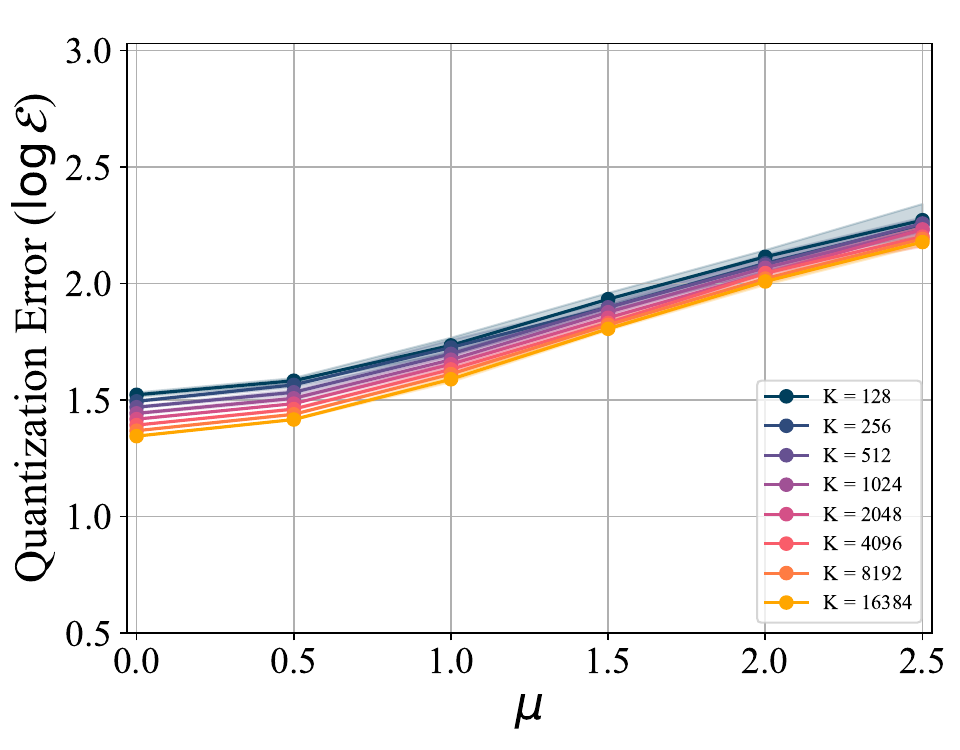}
        }
	\hspace{-0.39cm}
	\subfigure[$\cE$ {w.r.t.} $d$]{
		\label{fig:gaussian_featuredim_mean_error}
		\includegraphics[width=0.160\textwidth]{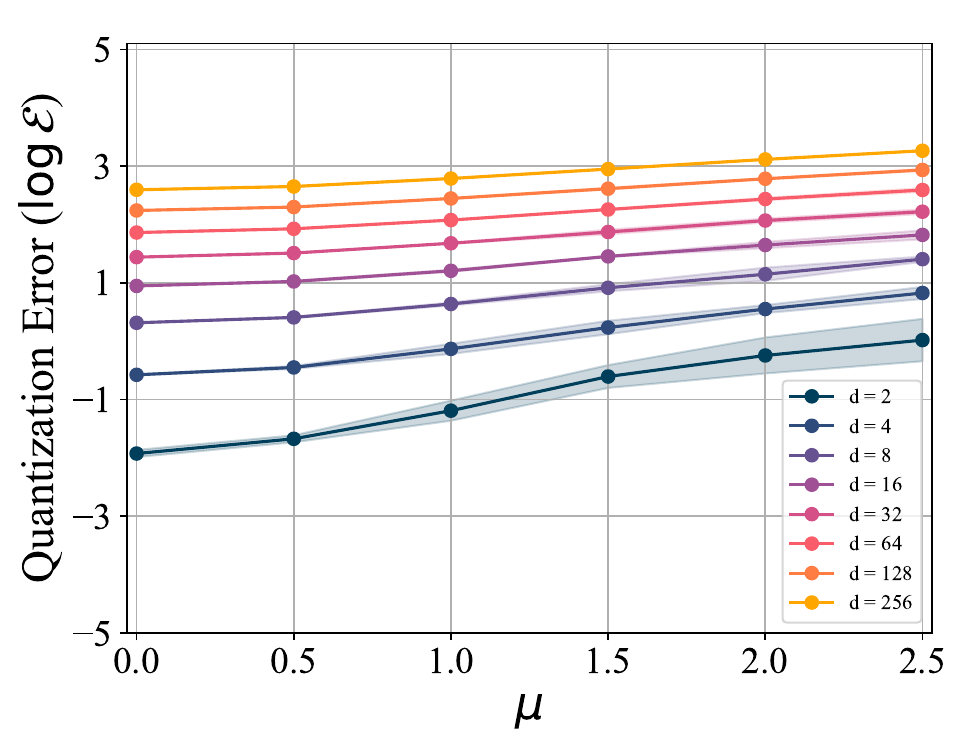}
	}
	\hspace{-0.39cm}
	\subfigure[$\cE$ {w.r.t.} $N$]{
		\label{fig:gaussian_featuresize_mean_error}
		\includegraphics[width=0.160\textwidth]{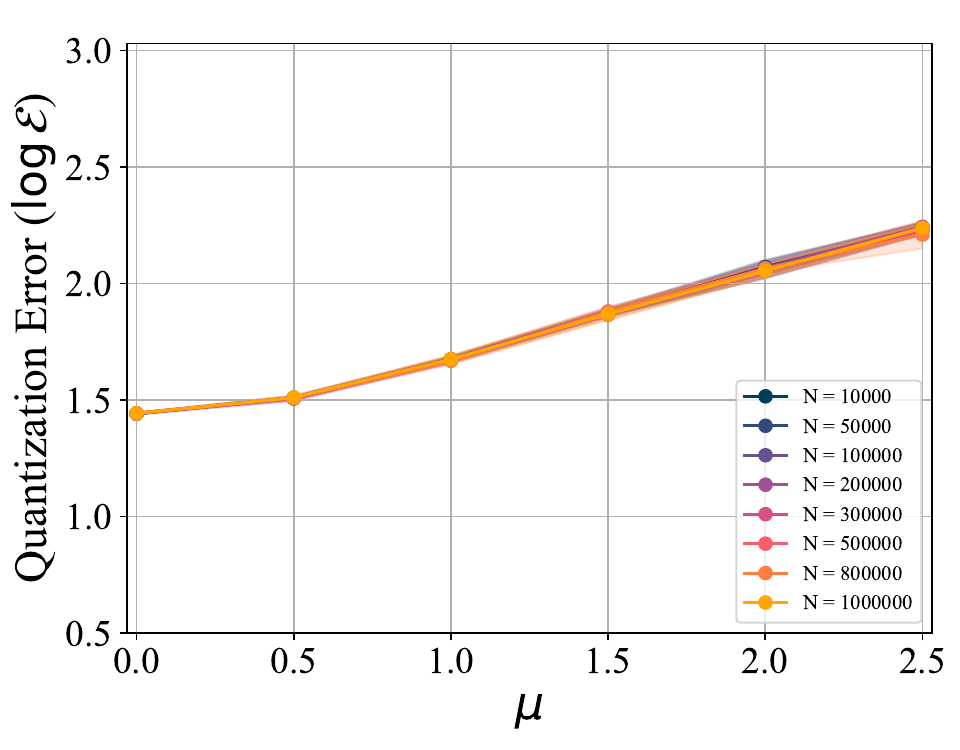}
	}
    \hspace{-0.39cm}
	\subfigure[$\cE$ {w.r.t.} $K$]{ 
		\label{fig:gaussian_codebooksize_sigma_error}  
		\includegraphics[width=0.16\textwidth]{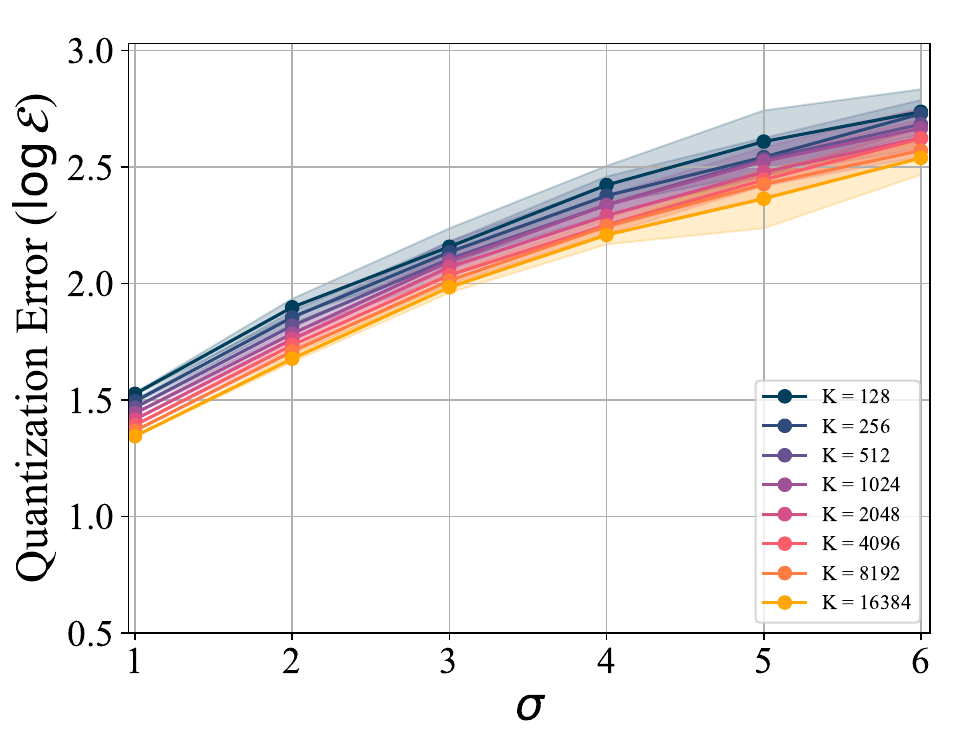}
	}
    \hspace{-0.39cm}
	\subfigure[$\cE$ {w.r.t.} $d$]{
		\label{fig:gaussian_featuredim_sigma_error}
		\includegraphics[width=0.16\textwidth]{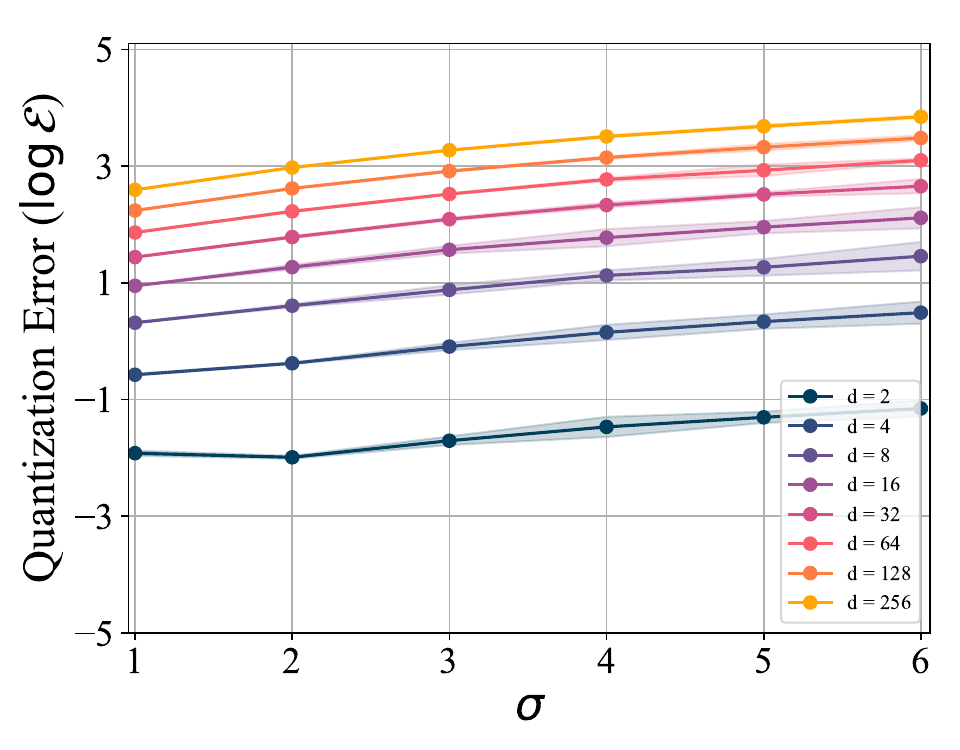}
	}
    \hspace{-0.39cm}
	\subfigure[$\cE$ {w.r.t.} $N$]{
		\label{fig:gaussian_featuresize_sigma_error}
		\includegraphics[width=0.16\textwidth]{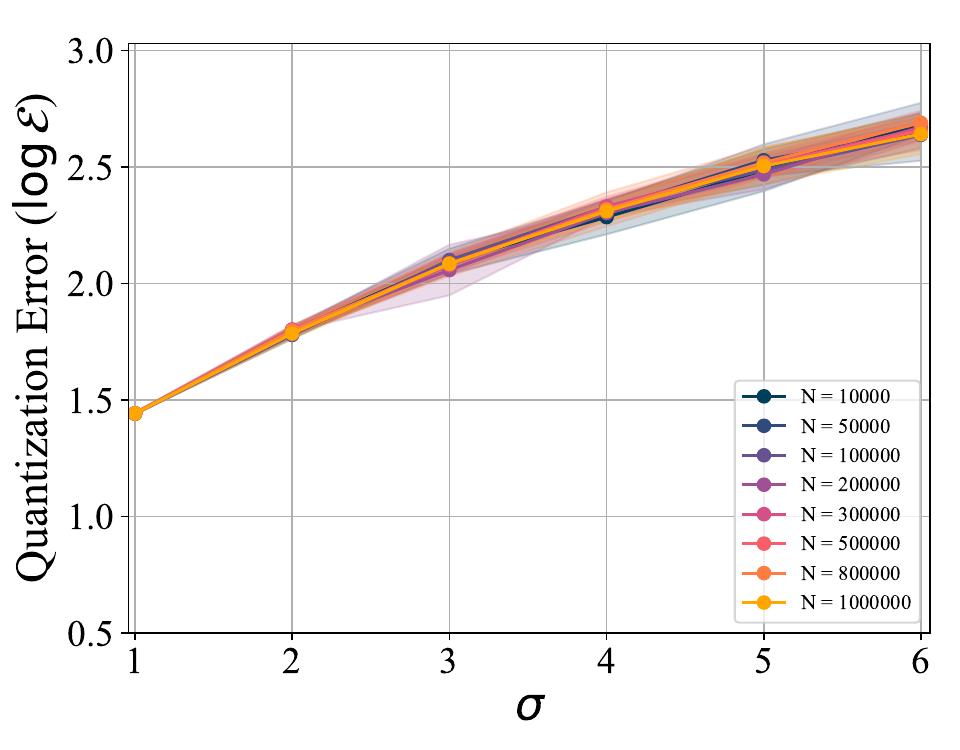}
	}
	\subfigure[$\cU$ {w.r.t.} $K$]{ 
		\label{fig:gaussian_codebooksize_mean_utilization}  
		\includegraphics[width=0.16\textwidth]{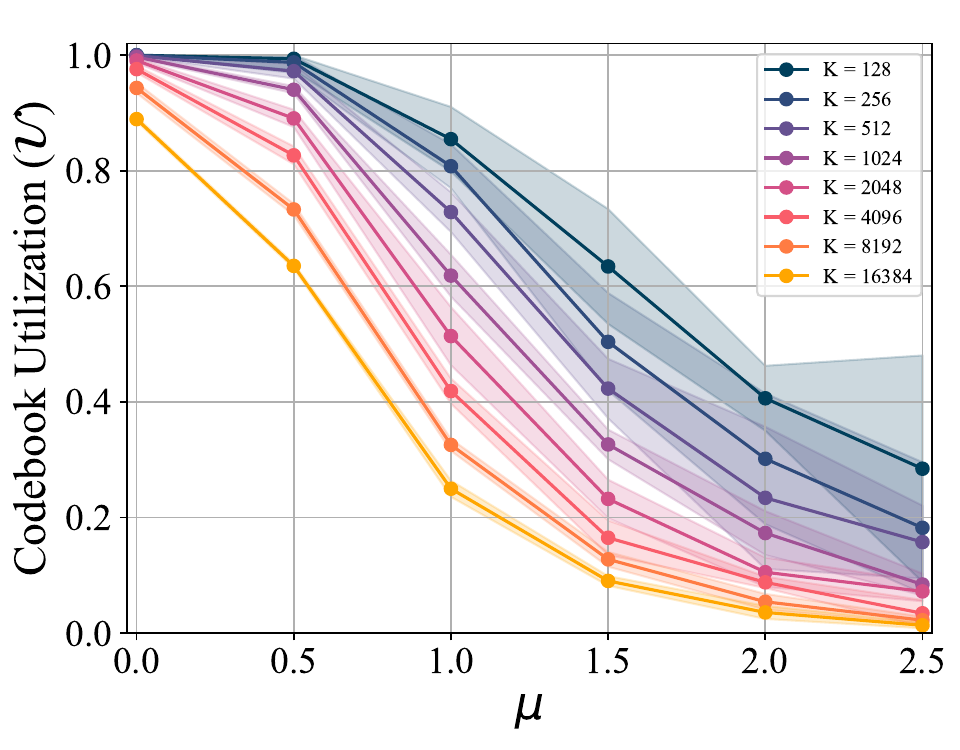}
        }
	\hspace{-0.39cm}
	\subfigure[$\cU$ {w.r.t.} $d$]{
		\label{fig:gaussian_featuredim_mean_utilization}
		\includegraphics[width=0.16\textwidth]{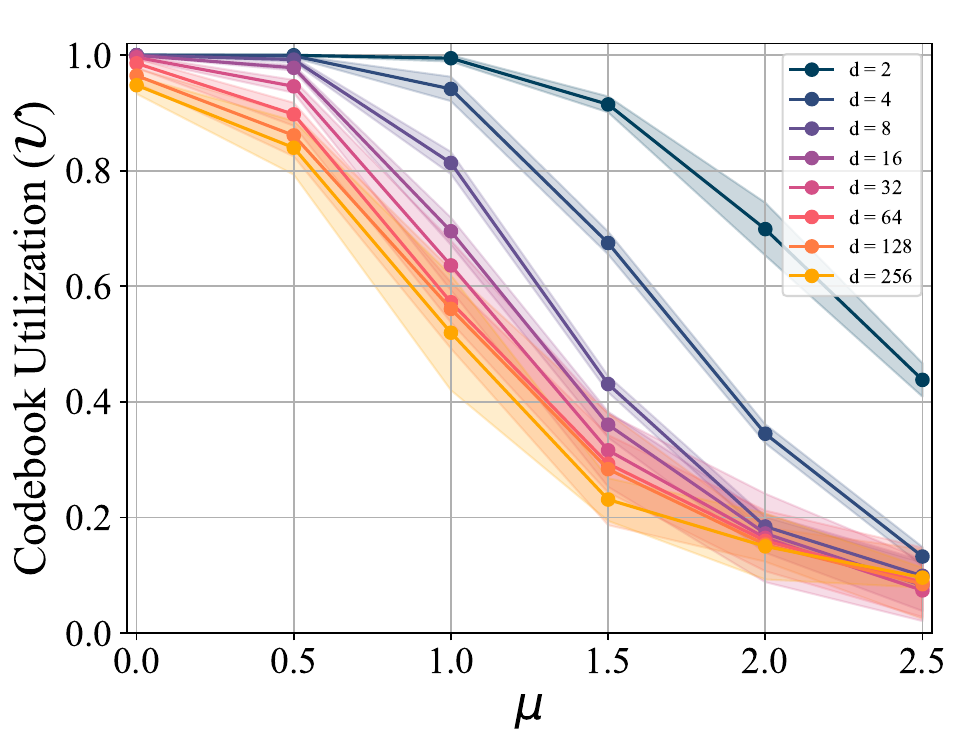}
	}
	\hspace{-0.39cm}
	\subfigure[$\cU$ {w.r.t.} $N$]{
		\label{fig:gaussian_featuresize_mean_utilization}
		\includegraphics[width=0.16\textwidth]{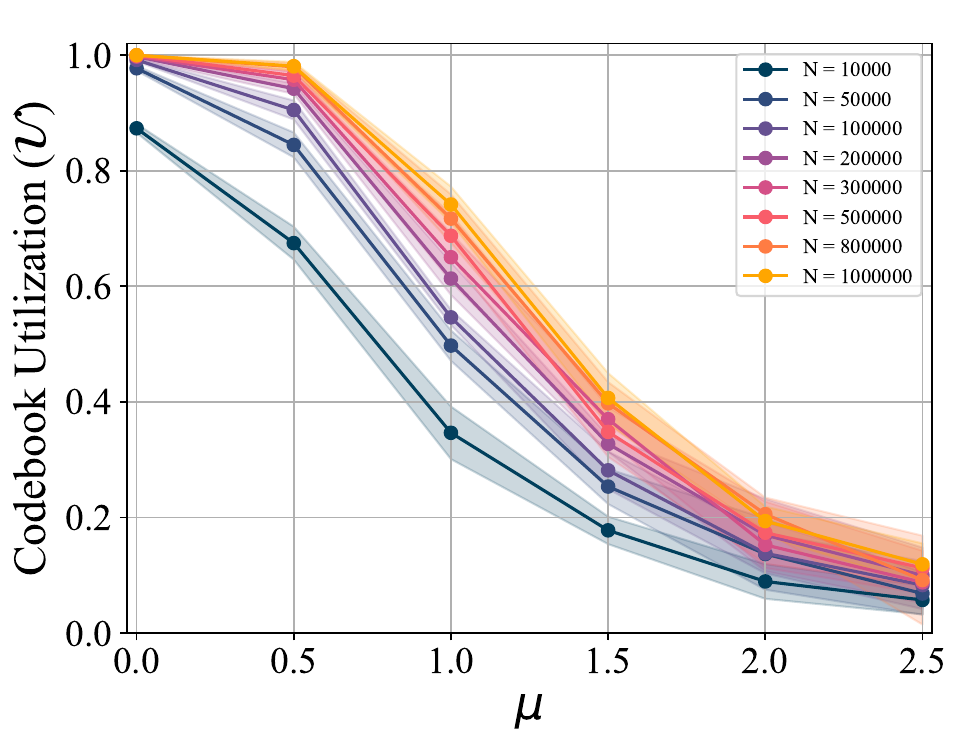}
	}
    \hspace{-0.39cm}
	\subfigure[$\cU$ {w.r.t.} $K$]{ 
		\label{fig:gaussian_codebooksize_sigma_utilization}  
		\includegraphics[width=0.16\textwidth]{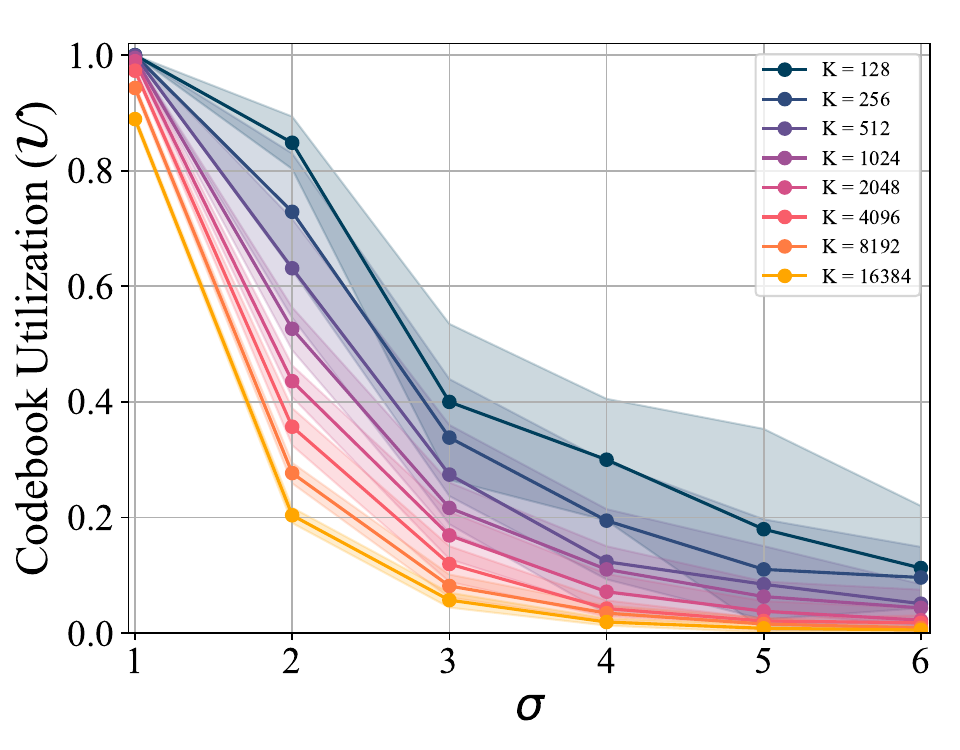}
	}
    \hspace{-0.39cm}
	\subfigure[$\cU$ {w.r.t.} $d$]{
		\label{fig:gaussian_featuredim_sigma_utilization}
		\includegraphics[width=0.16\textwidth]{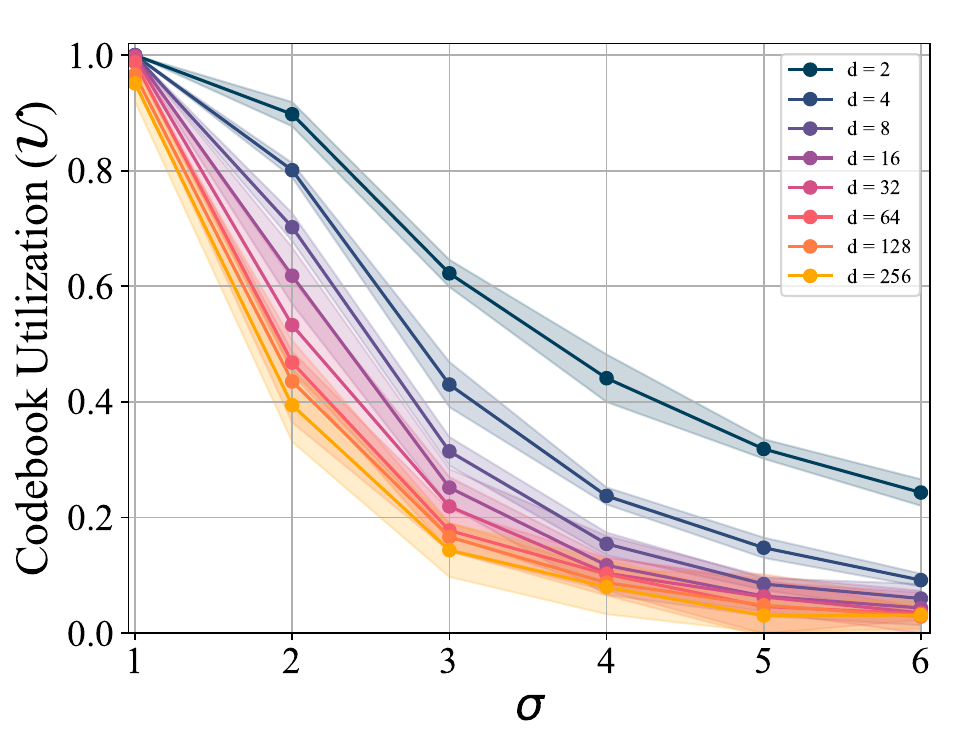}
	}
    \hspace{-0.39cm}
	\subfigure[$\cU$ {w.r.t.} $N$]{
		\label{fig:gaussian_featuresize_sigma_utilization}
		\includegraphics[width=0.16\textwidth]{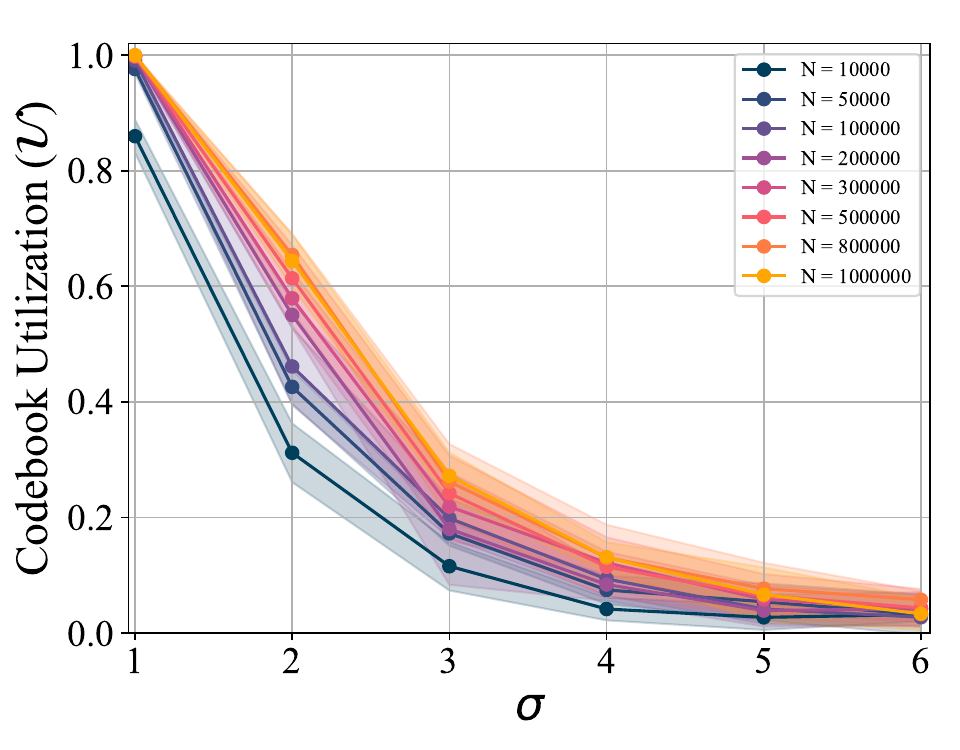}
	}
	\subfigure[$\mathcal{C}$ {w.r.t.} $K$]{ 
		\label{fig:gaussian_codebooksize_mean_perplexity}  
		\includegraphics[width=0.16\textwidth]{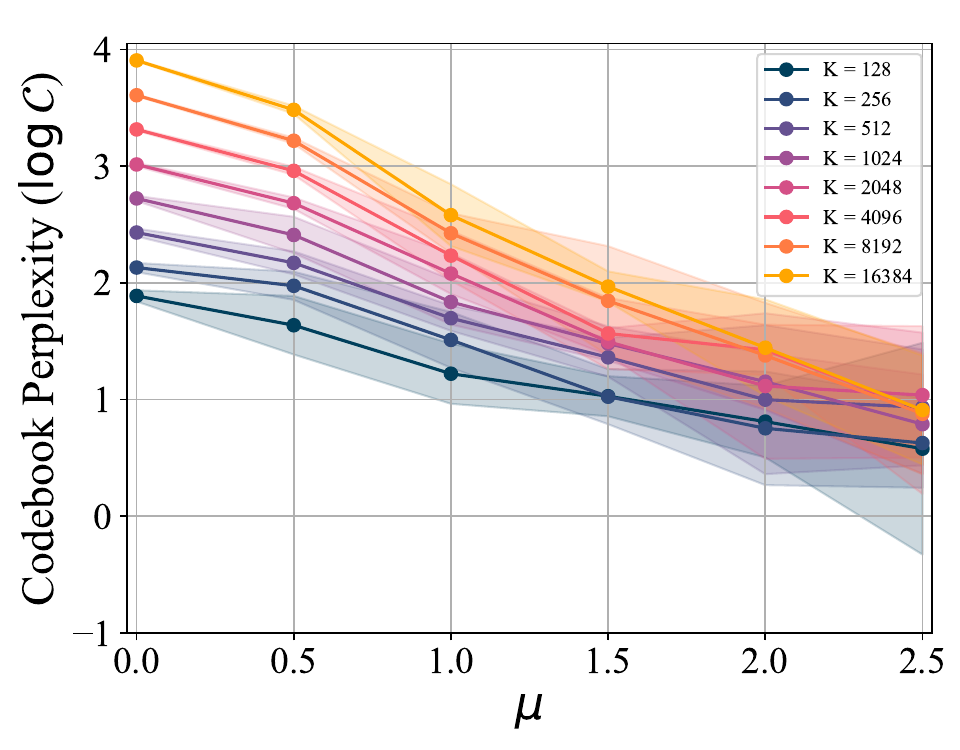}
        }
	\hspace{-0.39cm}
	\subfigure[$\mathcal{C}$ {w.r.t.} $d$]{
		\label{fig:gaussian_featuredim_mean_perplexity}
		\includegraphics[width=0.16\textwidth]{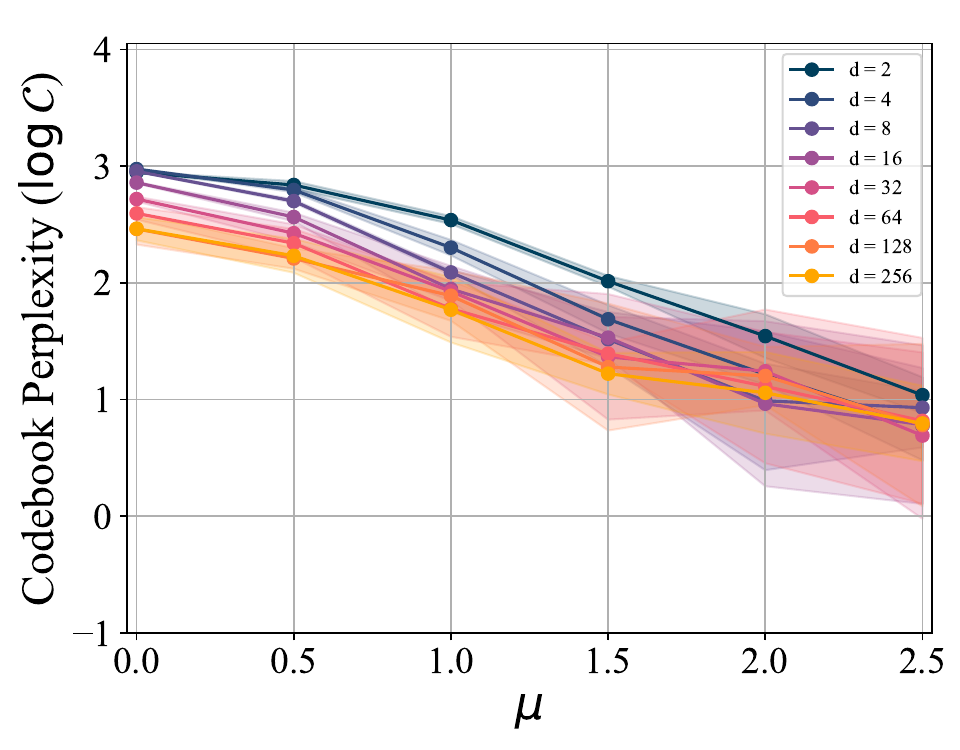}
	}
	\hspace{-0.39cm}
	\subfigure[$\mathcal{C}$ {w.r.t.} $N$]{
		\label{fig:gaussian_featuresize_mean_perplexity}
		\includegraphics[width=0.16\textwidth]{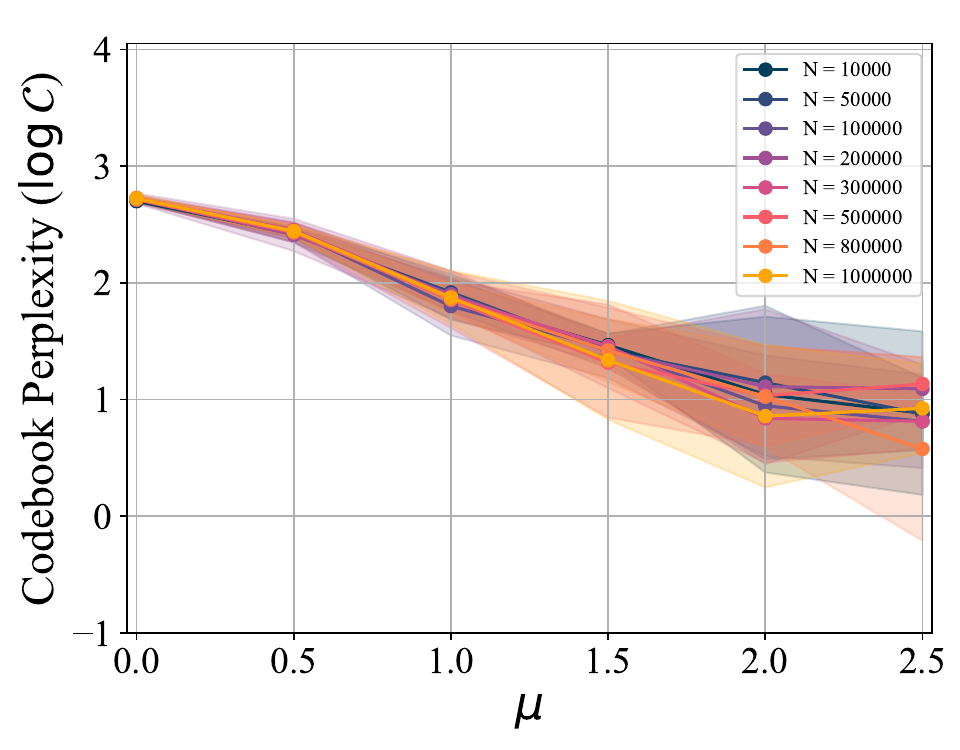}
	}
    \hspace{-0.39cm}
	\subfigure[$\mathcal{C}$ {w.r.t.} $K$]{ 
		\label{fig:gaussian_codebooksize_sigma_perplexity}  
		\includegraphics[width=0.16\textwidth]{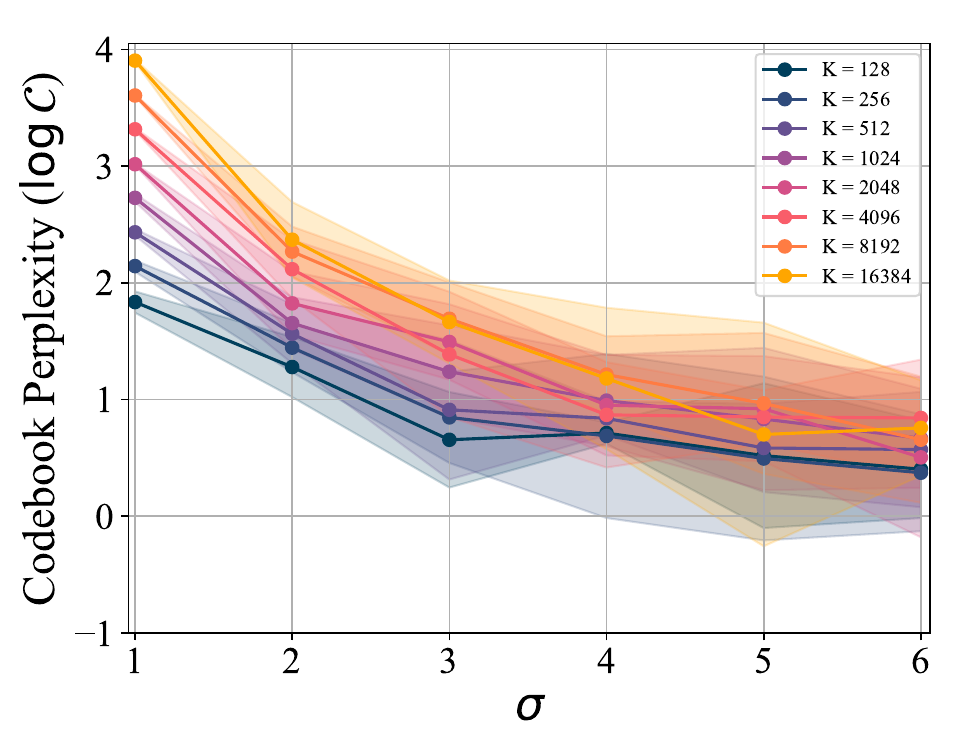}
	}
    \hspace{-0.39cm}
	\subfigure[$\mathcal{C}$ {w.r.t.} $d$]{
		\label{fig:gaussian_featuredim_sigma_perplexity}
		\includegraphics[width=0.16\textwidth]{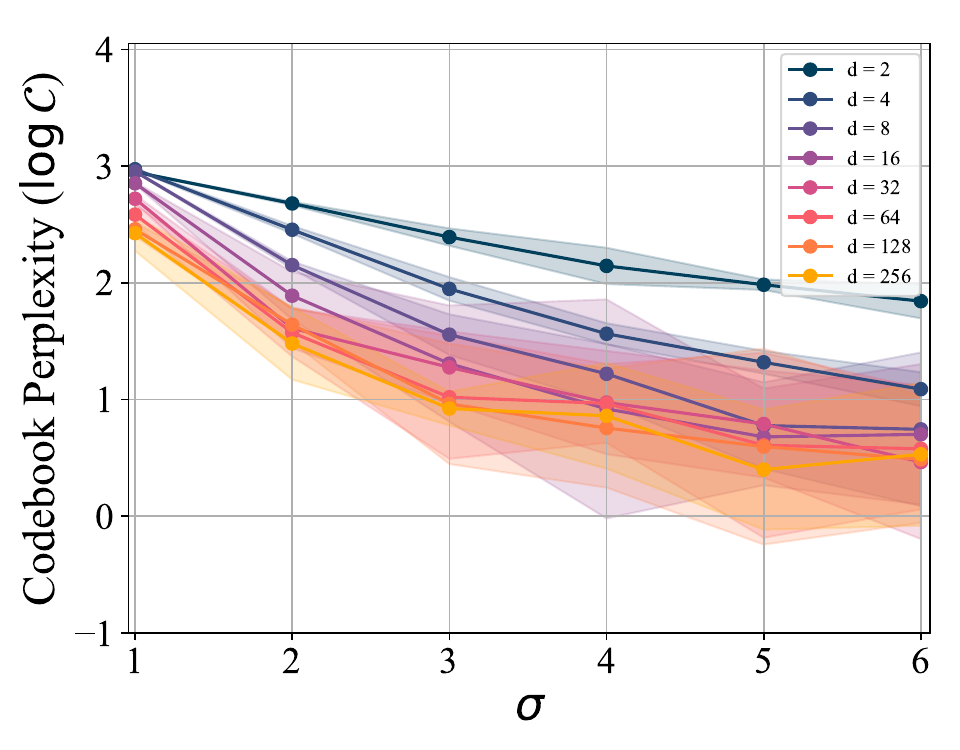}
	}
   \hspace{-0.39cm}
	\subfigure[$\mathcal{C}$ {w.r.t.} $N$]{
		\label{fig:gaussian_featuresize_sigma_perplexity}
		\includegraphics[width=0.16\textwidth]{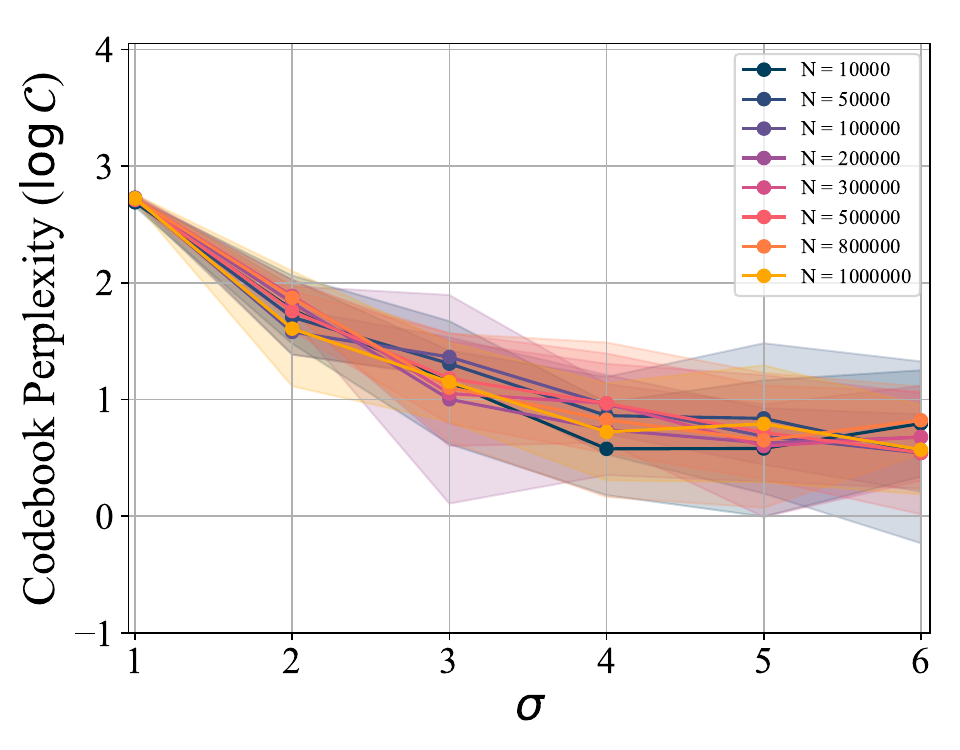}
	}
	\caption{\small{Quantitative analyses of the criterion triple when $\mathcal{P}_A$  and $\mathcal{P}_B$ are Gaussian distributions.}}
	\label{fig:quantitative analysis gaussian distribution}
\end{figure*}

We begin by assuming that the distributions $\mathcal{P}_A$ and $\mathcal{P}_B$ are Gaussian. We generate a set of feature vectors $\{\bz_i\}_{i=1}^N$ from $\mathcal{N}_d(\bm 0,  \bm I)$ and a set of code vectors $\{\be_k\}_{k=1}^K$ from $\mathcal{N}_d(\mu \cdot \m 1, \bm I)$\footnote{$\m 1$ represents the vector of all ones.}, with $\mu$ varying within $\{0.0, 0.5, 1.0, 1.5, 2.0, 2.5\}$. The criterion triple results are presented in Figures~\ref{fig:gaussian_codebooksize_mean_error} to~\ref{fig:gaussian_featuresize_mean_error}, Figures~\ref{fig:gaussian_codebooksize_mean_utilization} to~\ref{fig:gaussian_featuresize_mean_utilization}, and Figures~\ref{fig:gaussian_codebooksize_mean_perplexity} to~\ref{fig:gaussian_featuresize_mean_perplexity}. Across all tested configurations of $K, d, N$, we consistently observe that when $\mu = 0$ — indicating identical distributions between $\mathcal{P}_A$ and $\mathcal{P}_B$ — the criterion triple achieves the lowest $\cE$, highest $\cU$, and largest $\mathcal{C}$. This empirical evidence reinforces the effectiveness of aligning feature and codebook distributions in VQ.

Additionally, we further analyze the criterion triple by varying the covariance matrix. We sample a set of feature vectors $\{\bz_i\}_{i=1}^N$ from the distribution $\mathcal{N}_d(\bm 0, \bm I)$ and a set of code vectors $\{\be_k\}_{k=1}^K$ from $\mathcal{N}_d(\bm 0, \sigma^2 \bm I)$, where $\sigma$  is selected from $\{ 1, 2, 3, 4, 5, 6\}$. The results for the criterion triple are shown in Figures~\ref{fig:gaussian_codebooksize_sigma_error} to~\ref{fig:gaussian_featuresize_sigma_error}, Figures~\ref{fig:gaussian_codebooksize_sigma_utilization} to~\ref{fig:gaussian_featuresize_sigma_utilization}, and Figures~\ref{fig:gaussian_codebooksize_sigma_perplexity} to~\ref{fig:gaussian_featuresize_sigma_perplexity}. When $\sigma = 1$, indicating identical distributions between $\mathcal{P}_A$ and $\mathcal{P}_B$, all three evaluation criteria reach their optimal values: the lowest $\cE$, highest $\cU$, and largest $\mathcal{C}$ across all tested values of $K, d, N$. This result corroborates our earlier findings.

\subsection{Codebook Distribution and Feature Distribution are Unifrom Distributions}
\label{appendix:analyses under uniform distribution}

\begin{figure*}[!h]
    \vspace{-2ex}
	\centering
	\subfigure[$\cE$ {w.r.t.} $K$]{ 
		\label{fig:uniform_codebooksize_mean_error}  
		\includegraphics[width=0.16\textwidth]{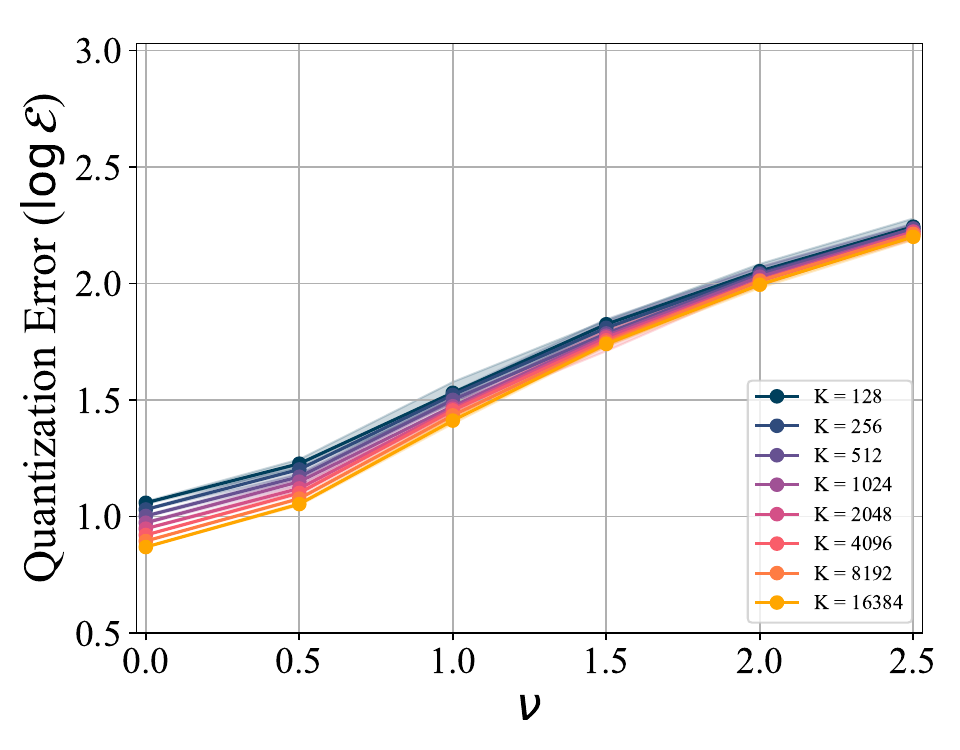}
        }
	\hspace{-0.39cm}
	\subfigure[$\cE$ {w.r.t.} $d$]{
		\label{fig:uniform_featuredim_mean_error}
		\includegraphics[width=0.16\textwidth]{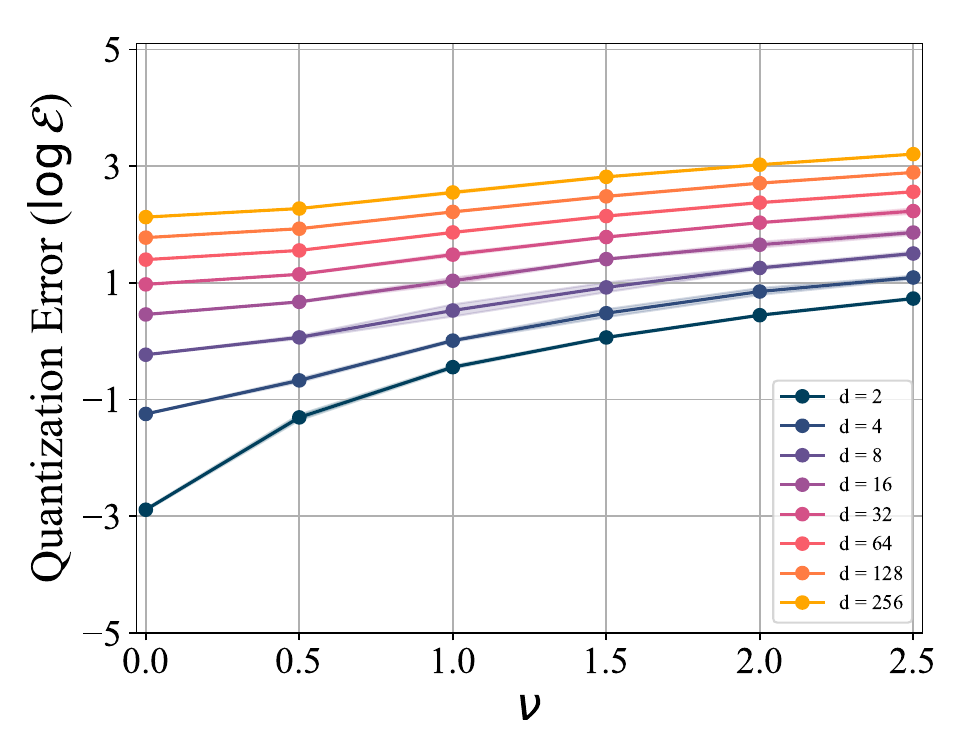}
	}
	\hspace{-0.39cm}
	\subfigure[$\cE$ {w.r.t.} $N$]{
		\label{fig:uniform_featuresize_mean_error}
		\includegraphics[width=0.16\textwidth]{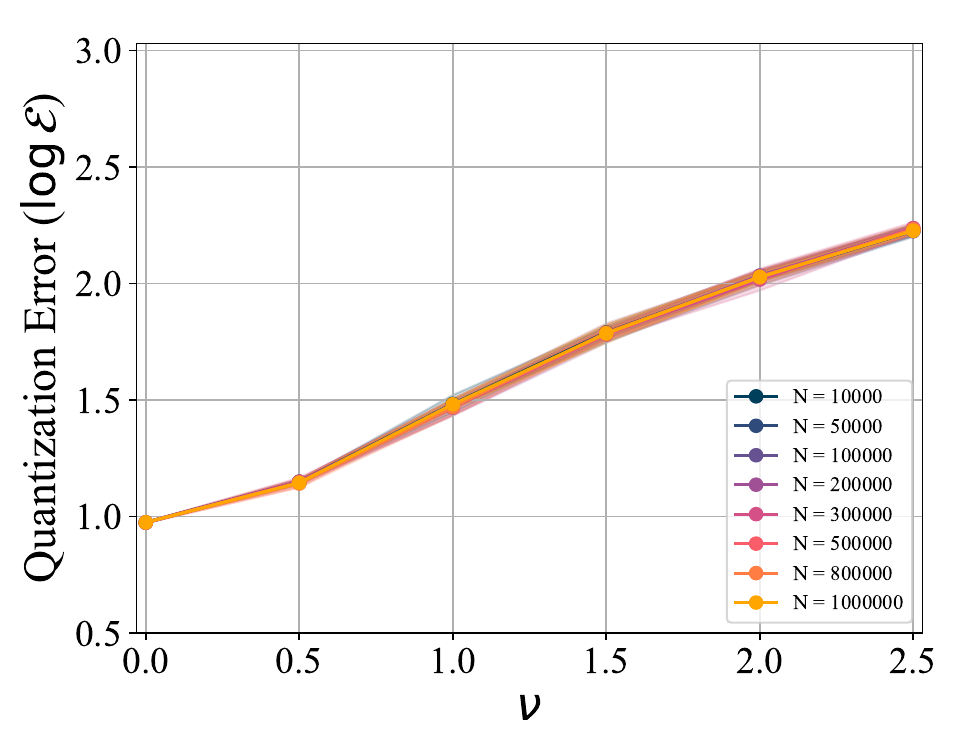}
	}
    \hspace{-0.39cm}
	\subfigure[$\cE$ {w.r.t.} $K$]{ 
		\label{fig:uniform_codebooksize_sigma_error}  
		\includegraphics[width=0.16\textwidth]{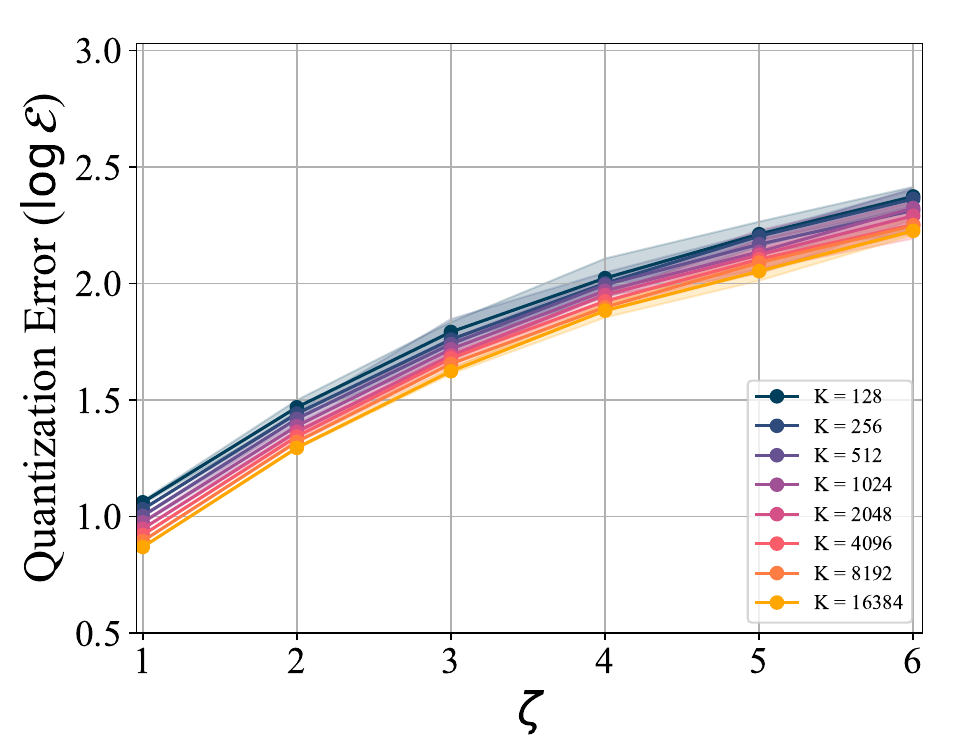}
	}
    \hspace{-0.39cm}
	\subfigure[$\cE$ {w.r.t.} $d$]{
		\label{fig:uniform_featuredim_sigma_error}
		\includegraphics[width=0.16\textwidth]{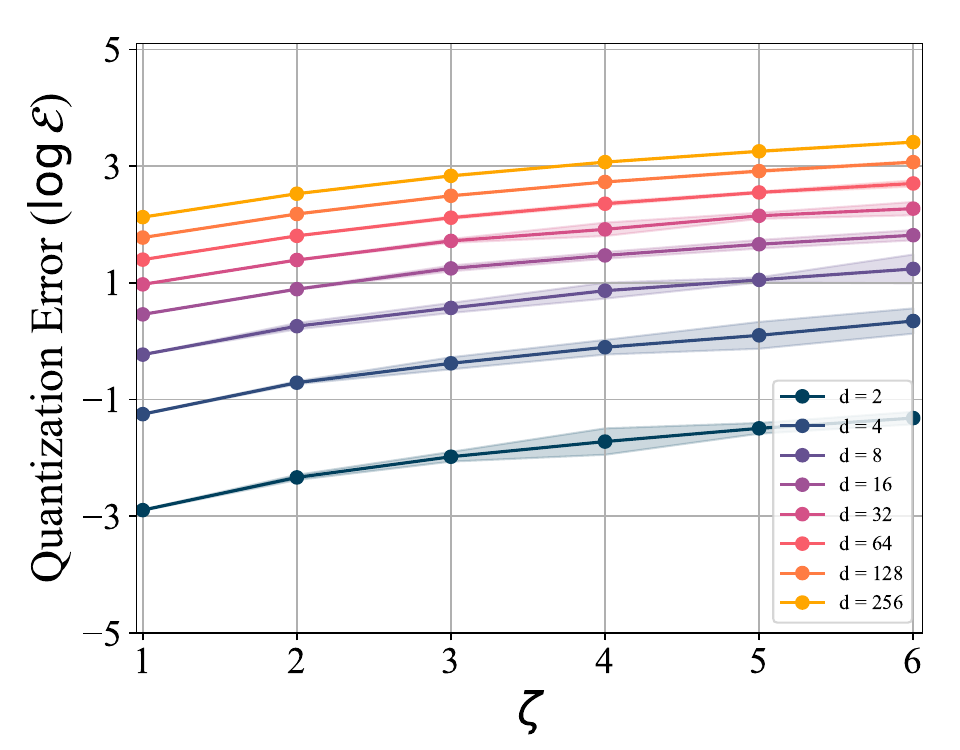}
	}
    \hspace{-0.39cm}
	\subfigure[$\cE$ {w.r.t.} $N$]{
		\label{fig:uniform_featuresize_sigma_error}
		\includegraphics[width=0.16\textwidth]{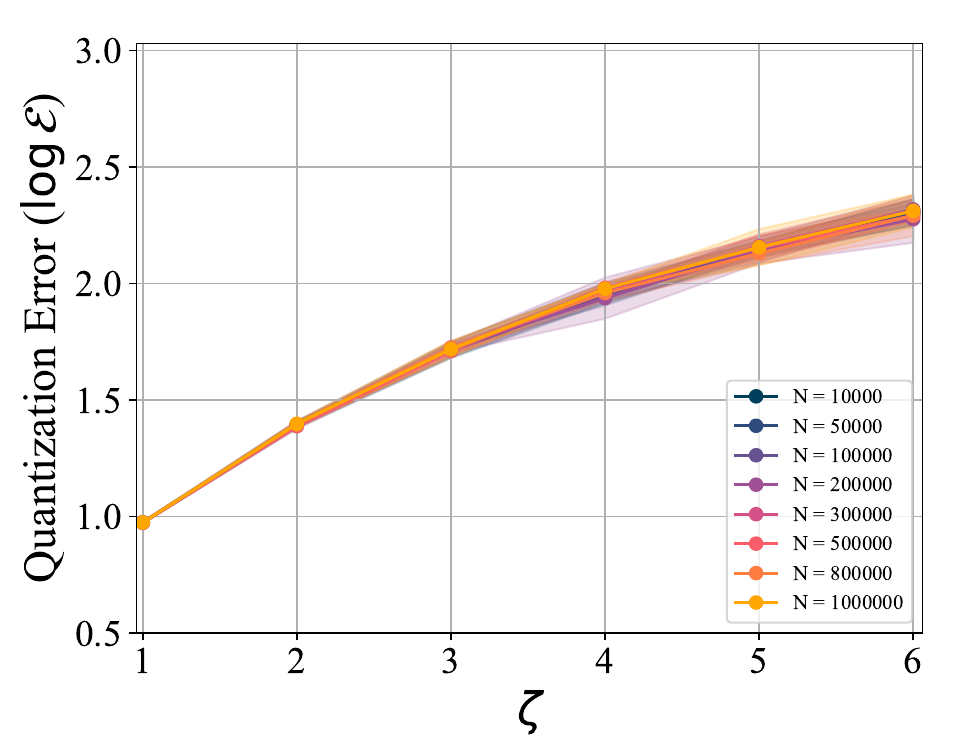}
	}
	\subfigure[$\cU$ {w.r.t.} $K$]{ 
		\label{fig:uniform_codebooksize_mean_utilization}  
		\includegraphics[width=0.16\textwidth]{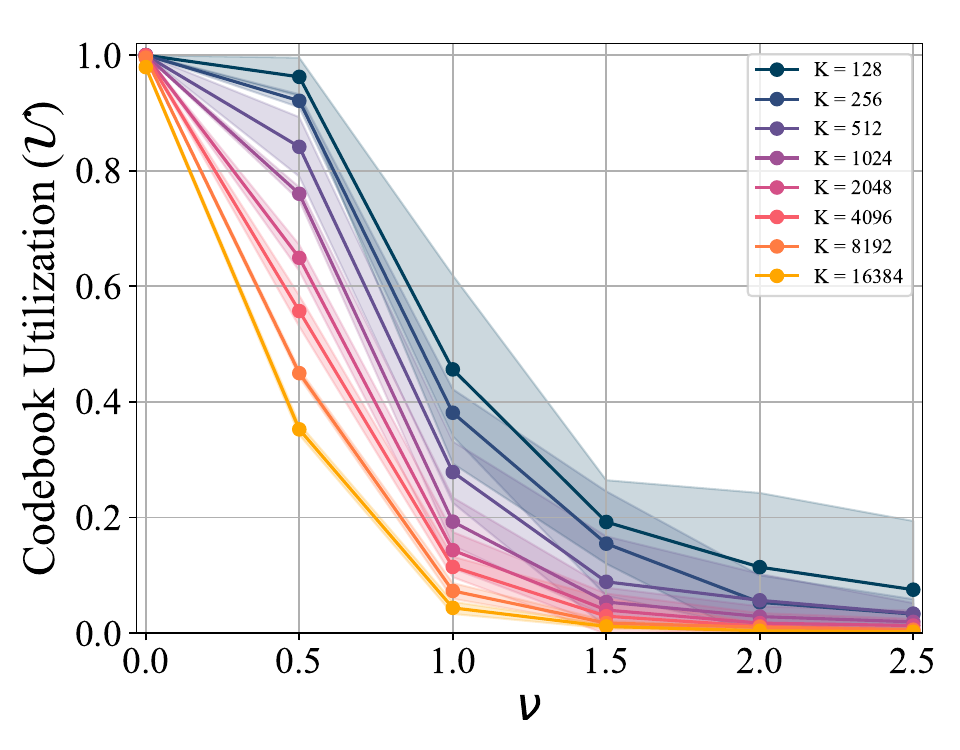}
        }
	\hspace{-0.39cm}
	\subfigure[$\cU$ {w.r.t.} $d$]{
		\label{fig:uniform_featuredim_mean_utilization}
		\includegraphics[width=0.16\textwidth]{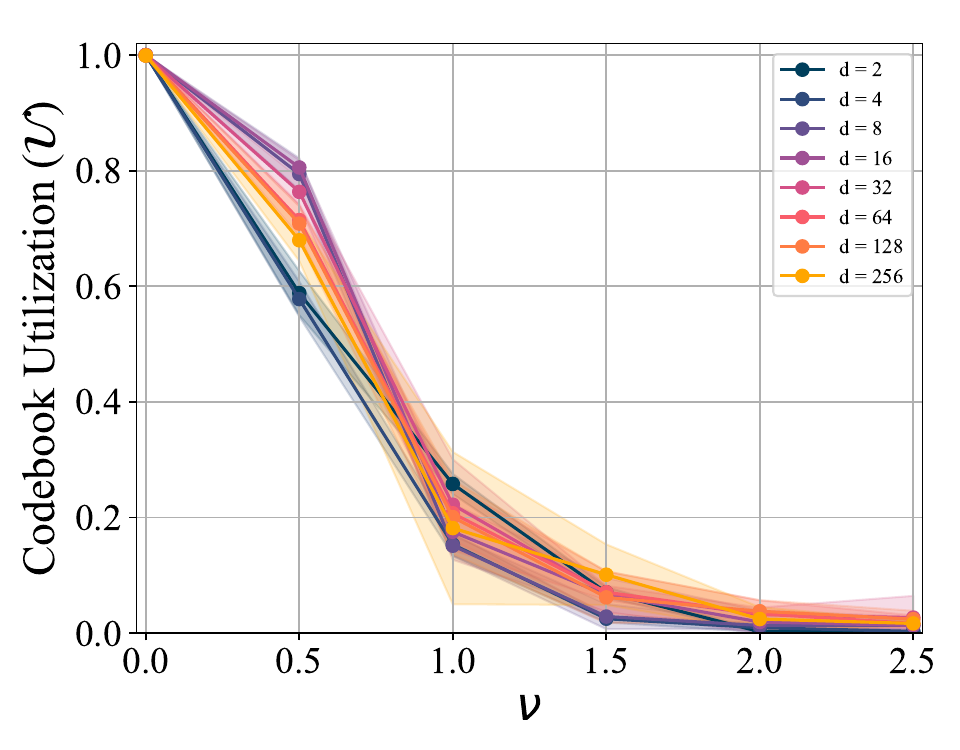}
	}
	\hspace{-0.39cm}
	\subfigure[$\cU$ {w.r.t.} $N$]{
		\label{fig:uniform_featuresize_mean_utilization}
		\includegraphics[width=0.16\textwidth]{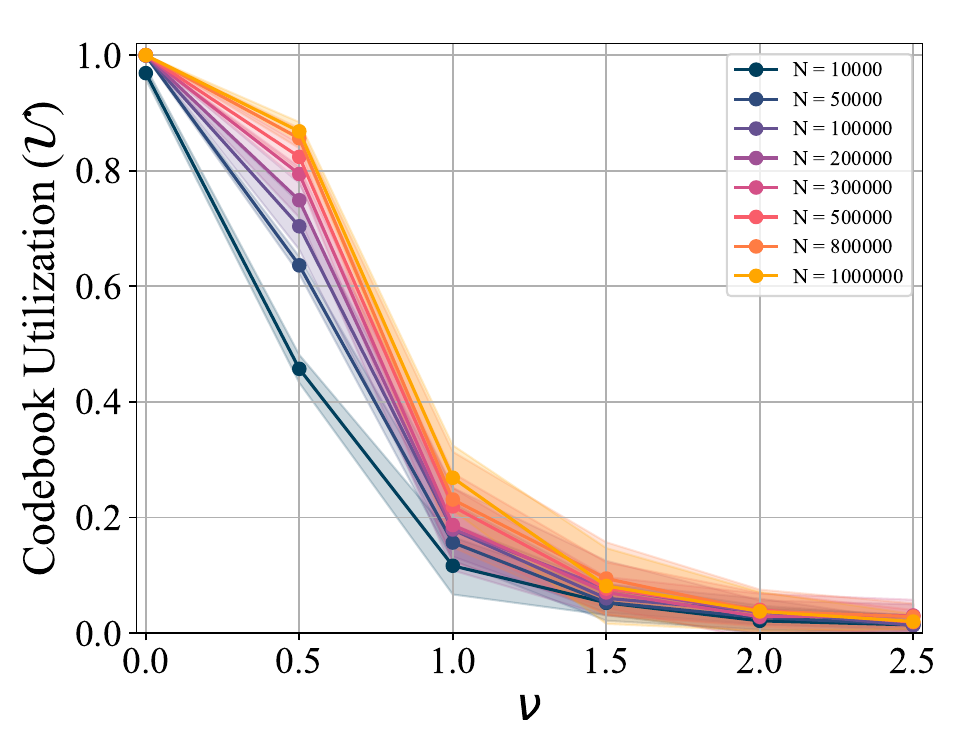}
	}
    \hspace{-0.39cm}
	\subfigure[$\cU$ {w.r.t.} $K$]{ 
		\label{fig:uniform_codebooksize_sigma_utilization}  
		\includegraphics[width=0.16\textwidth]{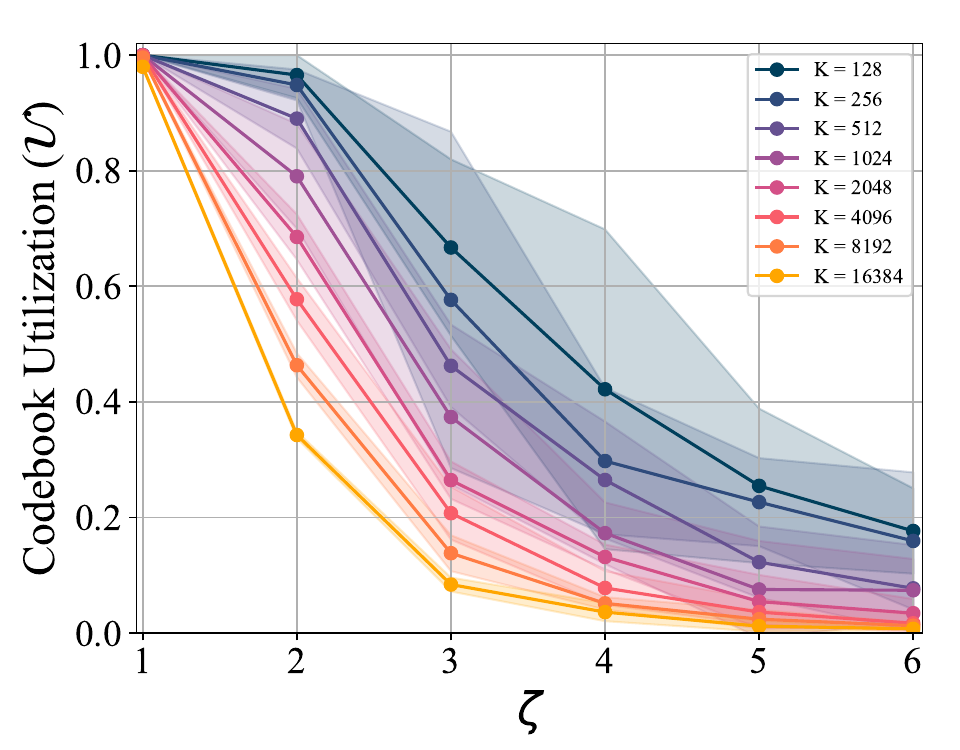}
	}
    \hspace{-0.39cm}
	\subfigure[$\cU$ {w.r.t.} $d$]{
		\label{fig:uniform_featuredim_sigma_utilization}
		\includegraphics[width=0.16\textwidth]{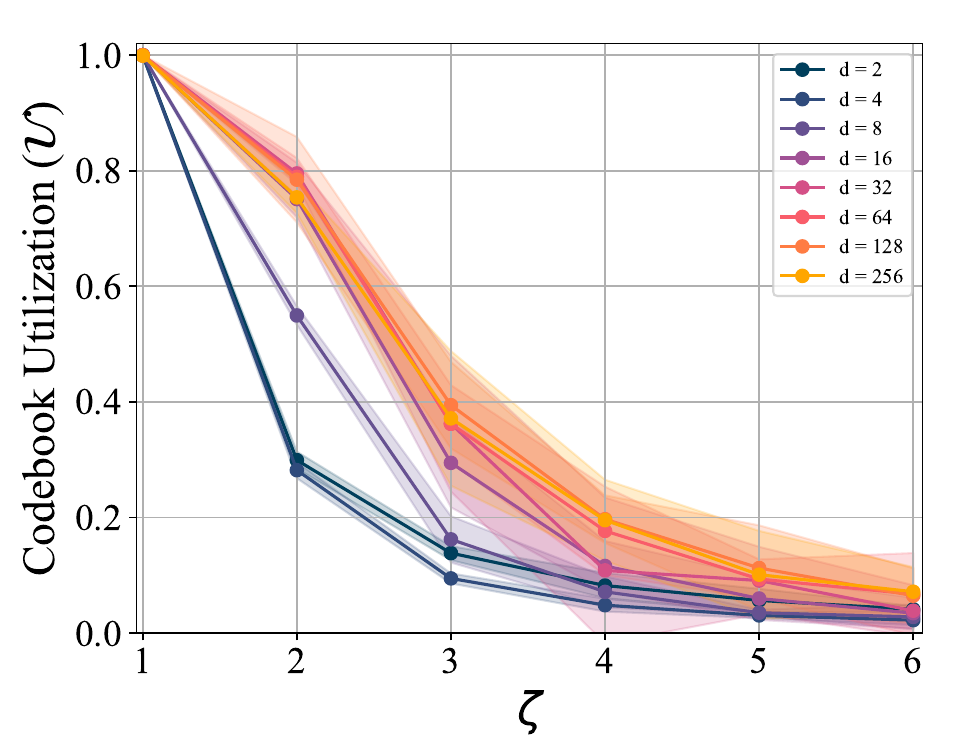}
	}
    \hspace{-0.39cm}
	\subfigure[$\cU$ {w.r.t.} $N$]{
		\label{fig:uniform_featuresize_sigma_utilization}
		\includegraphics[width=0.16\textwidth]{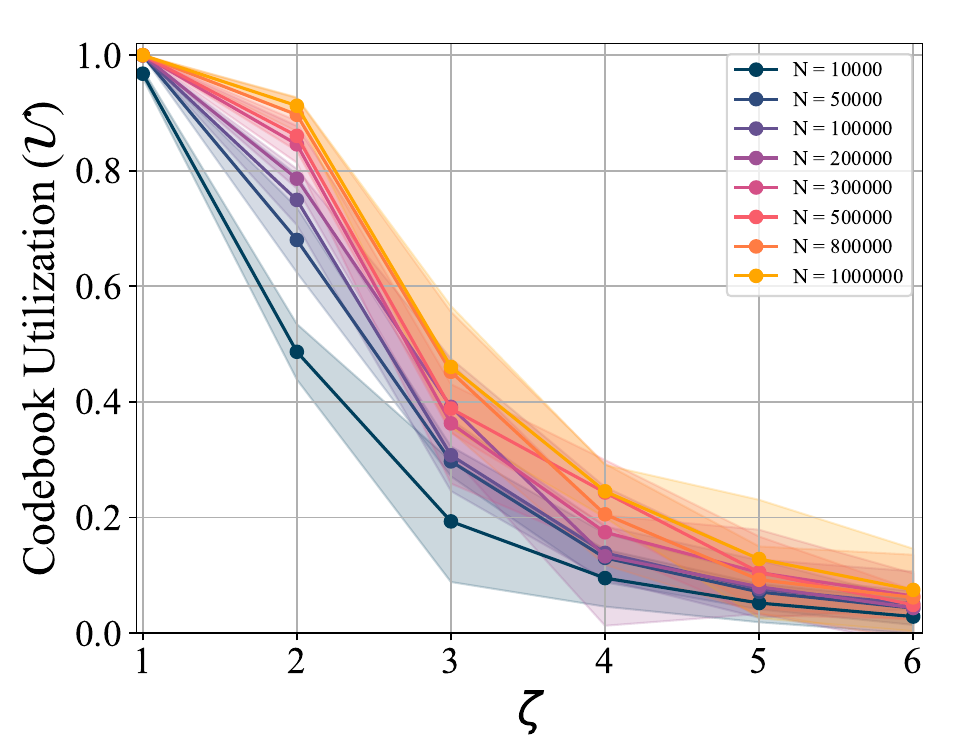}
	}
	\subfigure[$\mathcal{C}$ {w.r.t.} $K$]{ 
		\label{fig:uniform_codebooksize_mean_perplexity}  
		\includegraphics[width=0.16\textwidth]{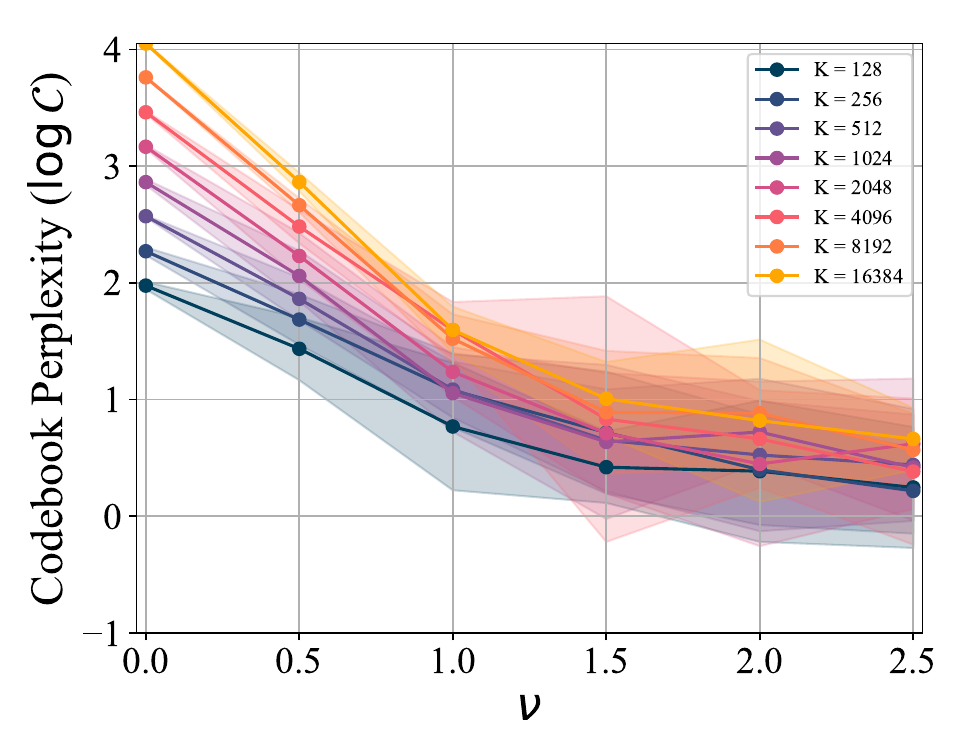}
        }
	\hspace{-0.39cm}
	\subfigure[$\mathcal{C}$ {w.r.t.} $d$]{
		\label{fig:uniform_featuredim_mean_perplexity}
		\includegraphics[width=0.16\textwidth]{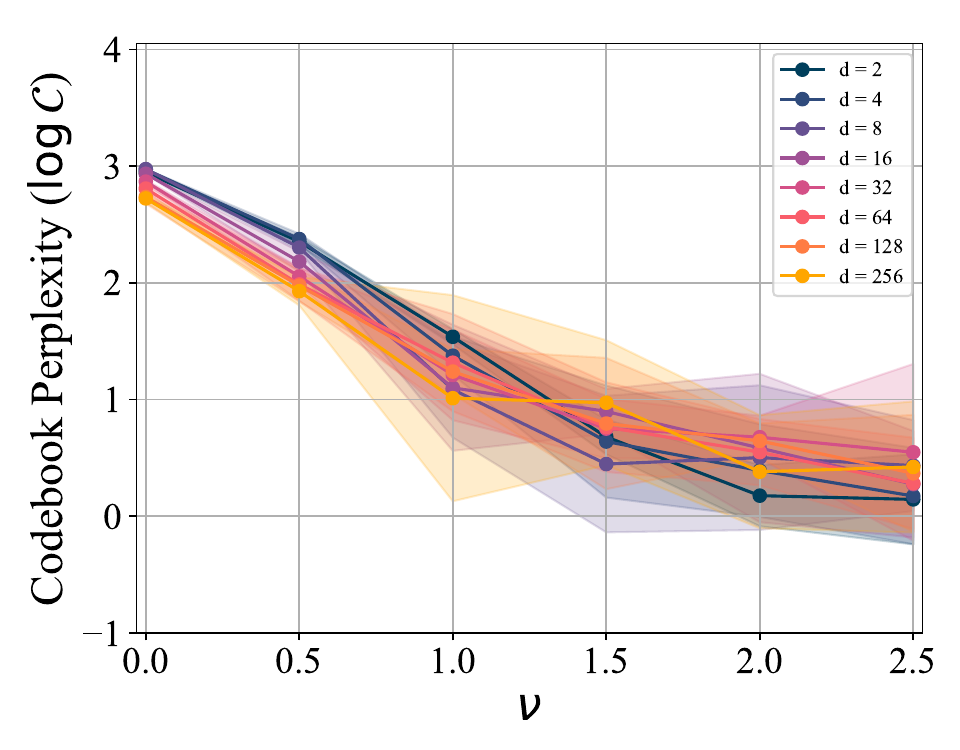}
	}
	\hspace{-0.39cm}
	\subfigure[$\mathcal{C}$ {w.r.t.} $N$]{
		\label{fig:uniform_featuresize_mean_perplexity}
		\includegraphics[width=0.16\textwidth]{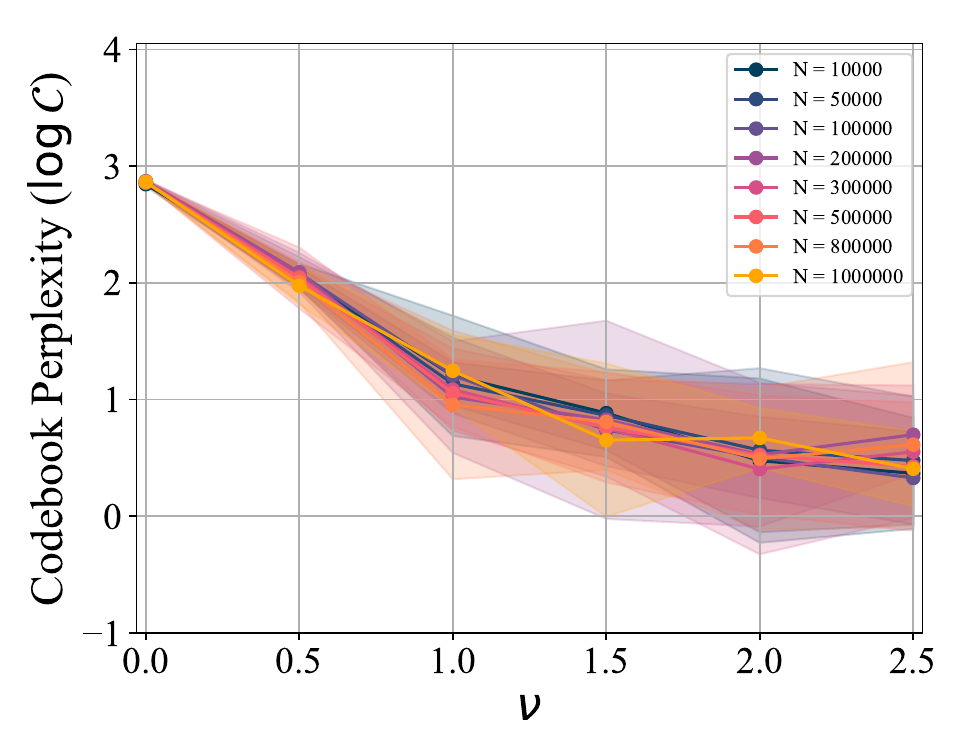}
	}
    \hspace{-0.39cm}
	\subfigure[$\mathcal{C}$ {w.r.t.} $K$]{ 
		\label{fig:uniform_codebooksize_sigma_perplexity}  
		\includegraphics[width=0.16\textwidth]{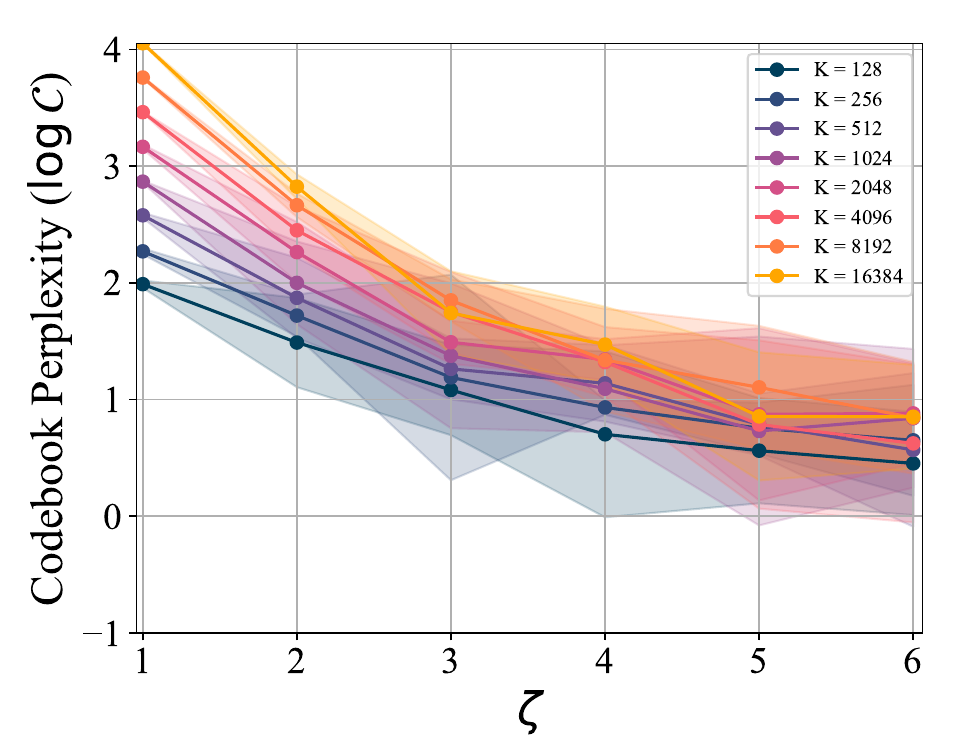}
	}
    \hspace{-0.39cm}
	\subfigure[$\mathcal{C}$ {w.r.t.} $d$]{
		\label{fig:uniform_featuredim_sigma_perplexity}
		\includegraphics[width=0.16\textwidth]{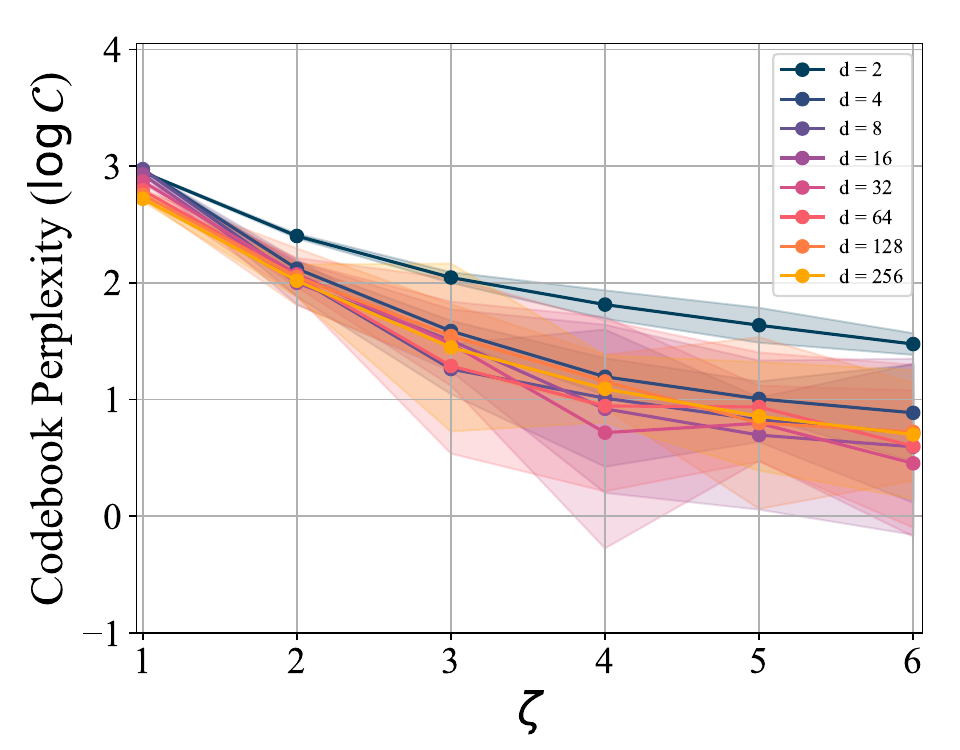}
	}
    \hspace{-0.39cm}
	\subfigure[$\mathcal{C}$ {w.r.t.} $N$]{
		\label{fig:uniform_featuresize_sigma_perplexity}
		\includegraphics[width=0.16\textwidth]{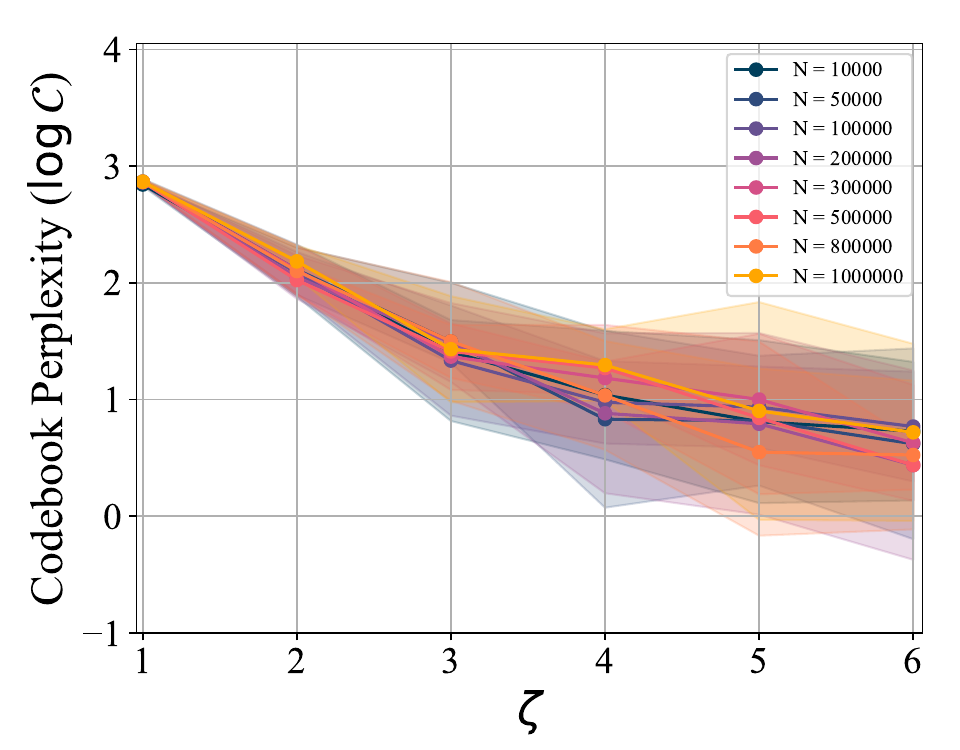}
	}
	\caption{Quantitative analyses of the criterion triple when $\mathcal{P}_A$ and $\mathcal{P}_B$ are uniform distributions.}
	\label{fig:quantitative analysis uniform distribution}
\end{figure*}

The above conclusion holds when $\mathcal{P}_A$ and $\mathcal{P}_B$ are other types of distributions, such as the uniform distribution. As shown in Figure~\ref{fig:quantitative analysis uniform distribution}, we sample a set of feature vectors $\{\bz_i\}_{i=1}^N$ from the distribution $\unif_d(-1, 1)$ and a set of code vectors $\{\be_k\}_{k=1}^K$ from $\unif_d(\nu-1, \nu+1)$, where $\nu$ is selected from the set  $\{ {0.0}, {0.5}, {1.0}, {1.5}, {2.0}, {2.5}\}$ or from $\unif_d(-\zeta, \zeta)$, with $\zeta$ drawn from the set $\{ 1, 2, 3, 4, 5, 6\}$. We observe that when $\mu = 0$ or $\zeta = 1$—indicating that $\mathcal{P}_A$  and $\mathcal{P}_B$ have identical distributions—the performance in terms of the criterion triple is optimal, achieving the lowerest $\cE$, the highest $\cU$, and the largest $\mathcal{C}$ across all tested values of $K, d, N$. Therefore, we conclude that our quantitative analyses are distribution-agnostic and can be generalized to other distributions.

\section{The Significant Impact of Distribution Variance on Quantization Error} 
\label{appendix:Impact of Distribution Variance}

As discussed in Section~\ref{sec:effects of distribution matching} and~\ref{sec:theoretical analysis}, the optimal criterion triple is achieved when the distributions $\mathcal{P}_A$ and $\mathcal{P}_B$ are identical. In this section, we further analyze the criterion triple by the lens of distribution variance under the condition that both distributions are identical. Specifically, we first sample a set of feature vectors $\{\bz_i\}_{i=1}^N$ along with a set of code vectors $\{\be_k\}_{k=1}^K$ from the distribution $\mathcal{N}_d(\m 0, \sigma^2\bm I)$ or the distribution $\unif_d(-\zeta, \zeta)$. We then calculate the evaluation criteria according to their definitions in Section~\ref{sec:distributional formulation}. As demonstrated in Table~\ref{table:distribution variance}, $\sigma$ and $\zeta$ have a substantial impact on $\cE$, while $\cU$ and $\mathcal{C}$ remains largely unaffected. 

\begin{table*}[!h]  
\centering
\vspace{-3ex}
\caption{\small{The criterion triple influence by the distribution variance.}}
\resizebox{0.9\textwidth}{!}{
\begin{tabular}{c|c|c|c|c|c|c|c|c|c|c|}\hline
\multirow{2}{*}{Evaluation Criteria}  & \multicolumn{5}{|c}{$\sigma$} & \multicolumn{5}{|c}{$\zeta$}\\\cline{2-11} 
& 0.0001 & 0.001 & 0.01 & 0.1 & 1.0 & 0.0001 & 0.001 & 0.01 & 0.1 & 1.0 \\\hline
$\cE$  & 1.25e-8 & 1.25e-6 & 1.25e-4 & 1.24e-2 & 1.25 & 3.27e-9 & 3.27e-7 & 3.27e-5 & 3.27e-3 & 0.327 \\
$\cU$ & 0.9934 & 0.9938 & 0.9940 & 0.9934 & 0.9941 & 0.9993 & 0.9986 & 0.9990 & 0.9992 & 0.9989 \\
$\mathcal{C}$ & 7265.3 & 7260.3 & 7267.7 & 7255.0  & 7275.8 & 7380.2 & 7372.2 & 7387.9 & 7397.5 & 7391.6 \\\hline
\end{tabular}}
\label{table:distribution variance}
\end{table*}

This experimental finding suggests that when the distribution variance of the feature vectors is uncontrollable or unknown, reporting a comparison of quantization error among various VQ methods is unreasonable. This is because the improvement in quantization error is predominantly attributed to the reduction in distribution variance rather than the effectiveness of the VQ methods. To evaluate various VQ methods in terms of the criterion triple, we establish an atomic and fair experimental setting in Appendix~\ref{appendix:discussion}, where the feature distributions for all VQ methods are identical.

\begin{figure*}[!t]
    \vspace{-2mm}
    \centering
    \subfigure[$\cE$ w.r.t. $\mu$]{         
        \label{fig:vq quantization error mean}  
		\includegraphics[width=0.22\textwidth]{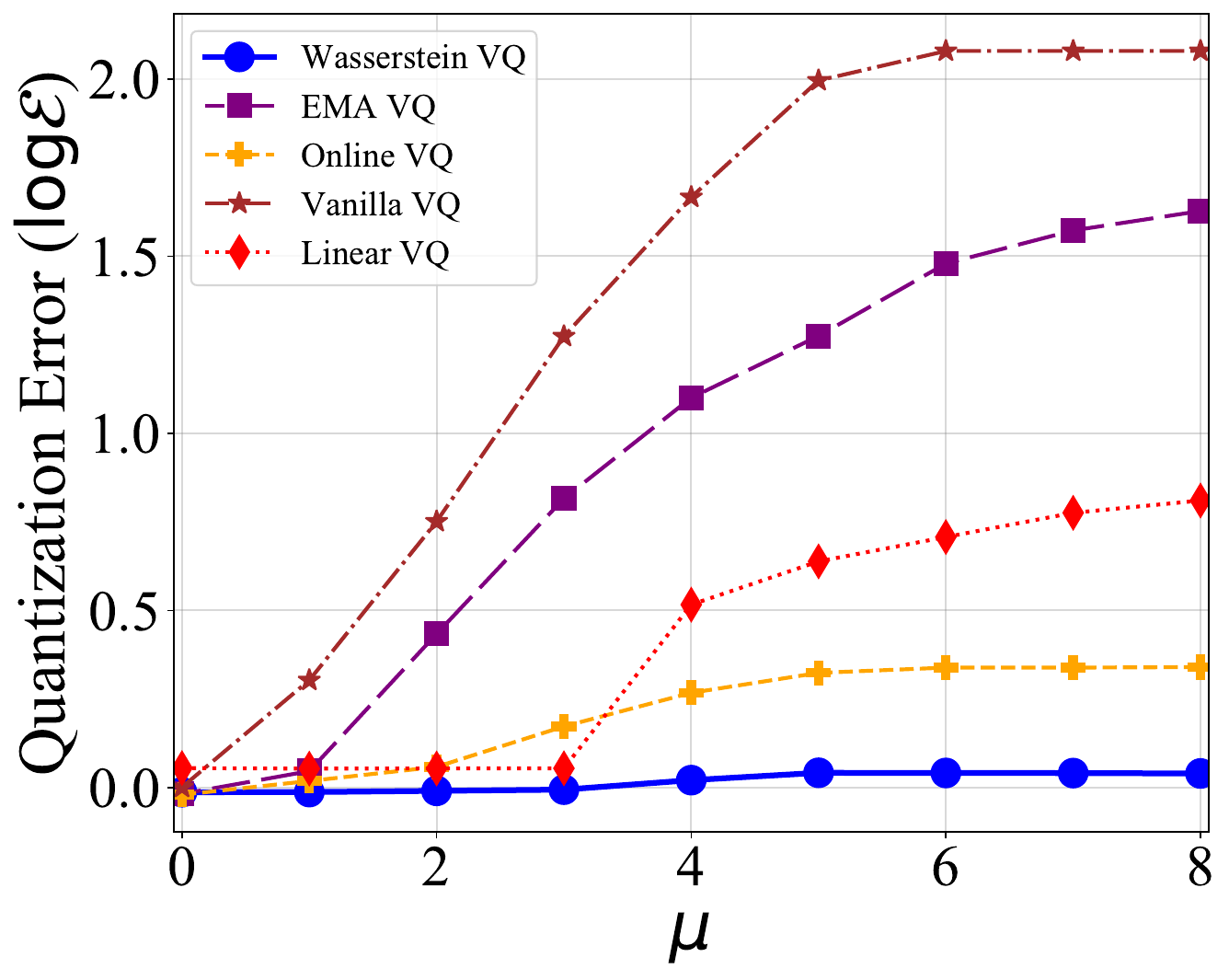}
        }
	\subfigure[$\cU$ w.r.t. $\mu$]{         
        \label{fig:vq codebook utilization mean}  
		\includegraphics[width=0.22\textwidth]{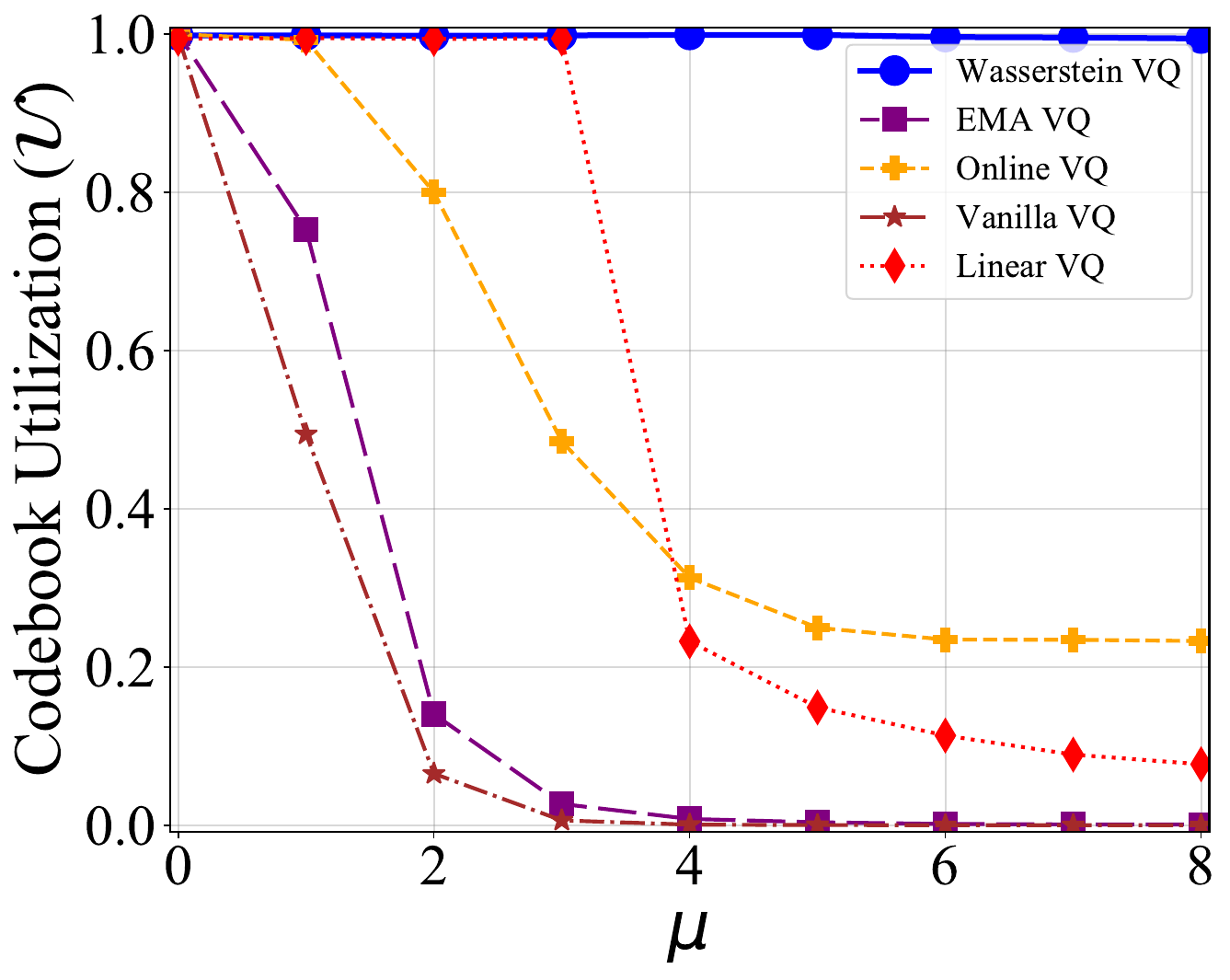}
        }
	\subfigure[$\mathcal{C}$ w.r.t. $\mu$]{         
        \label{fig:vq codebook perplexity mean}  
		\includegraphics[width=0.22\textwidth]{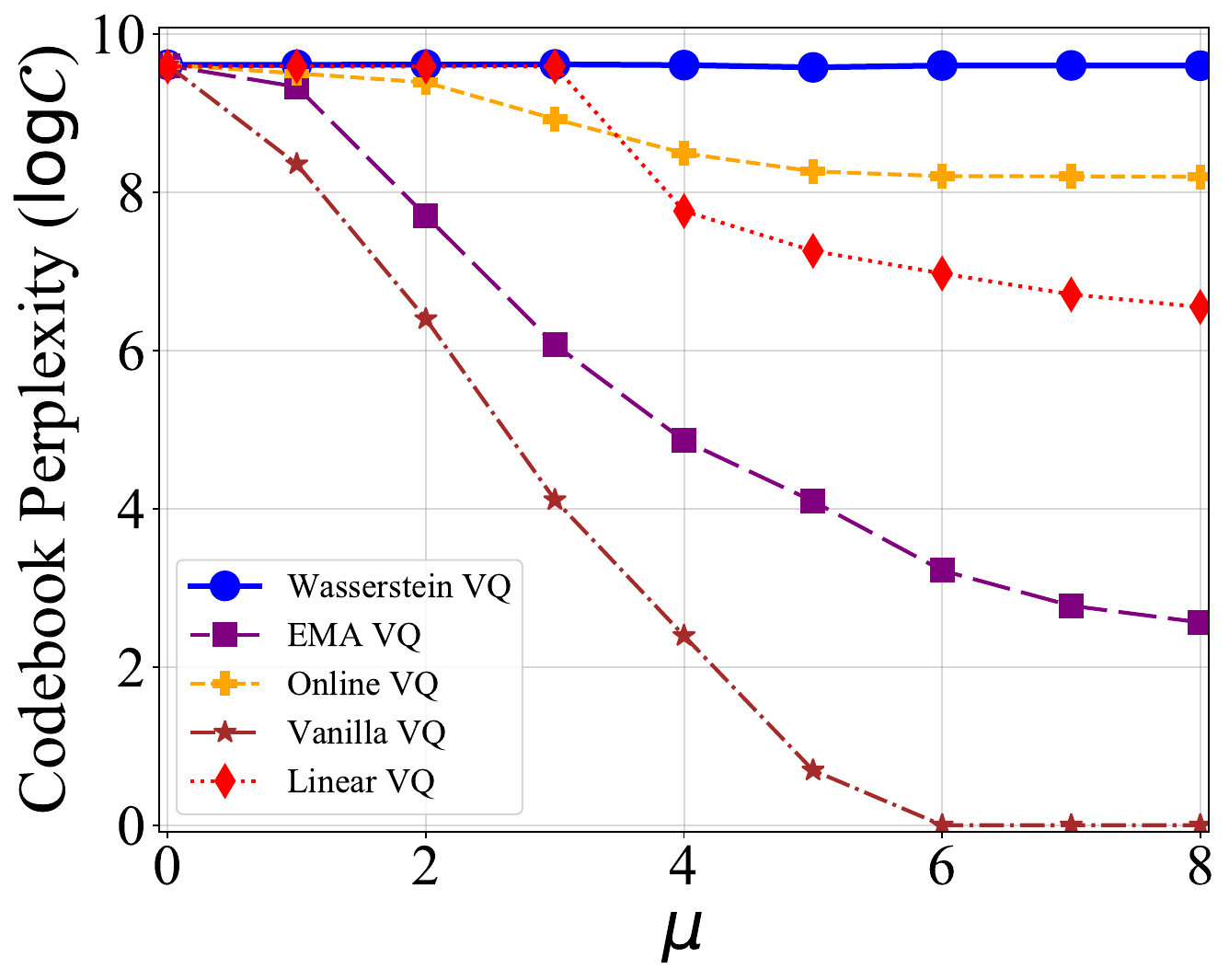}
        }
    \subfigure[$\mathcal{L}_{W}$ w.r.t. $\mu$]{         
        \label{fig:wasserstein distance mean}  
		\includegraphics[width=0.22\textwidth]{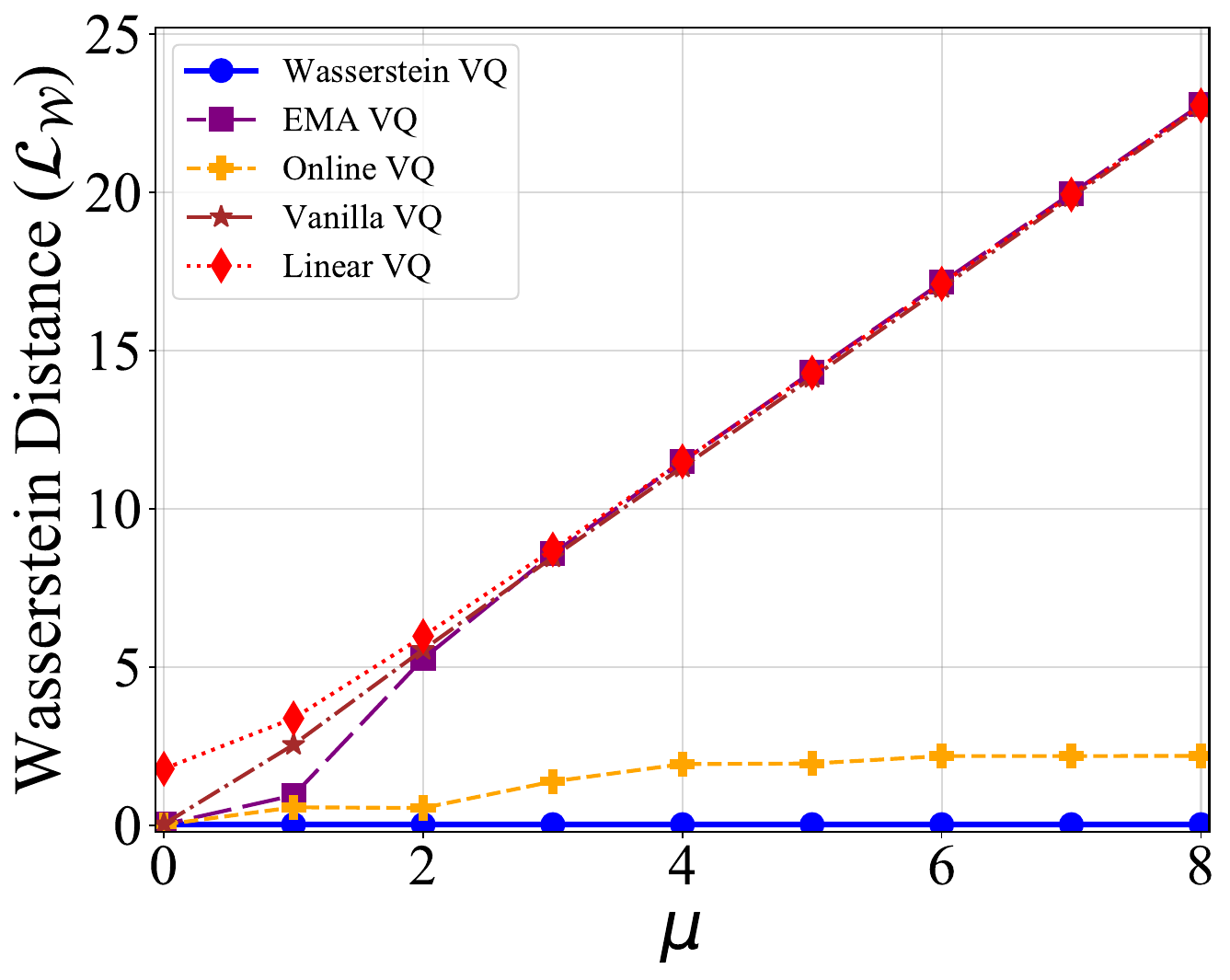}
        }
    \subfigure[$\cE$ w.r.t. $\nu$]{         
        \label{fig:vq quantization error mean(uniform)}  
		\includegraphics[width=0.22\textwidth]{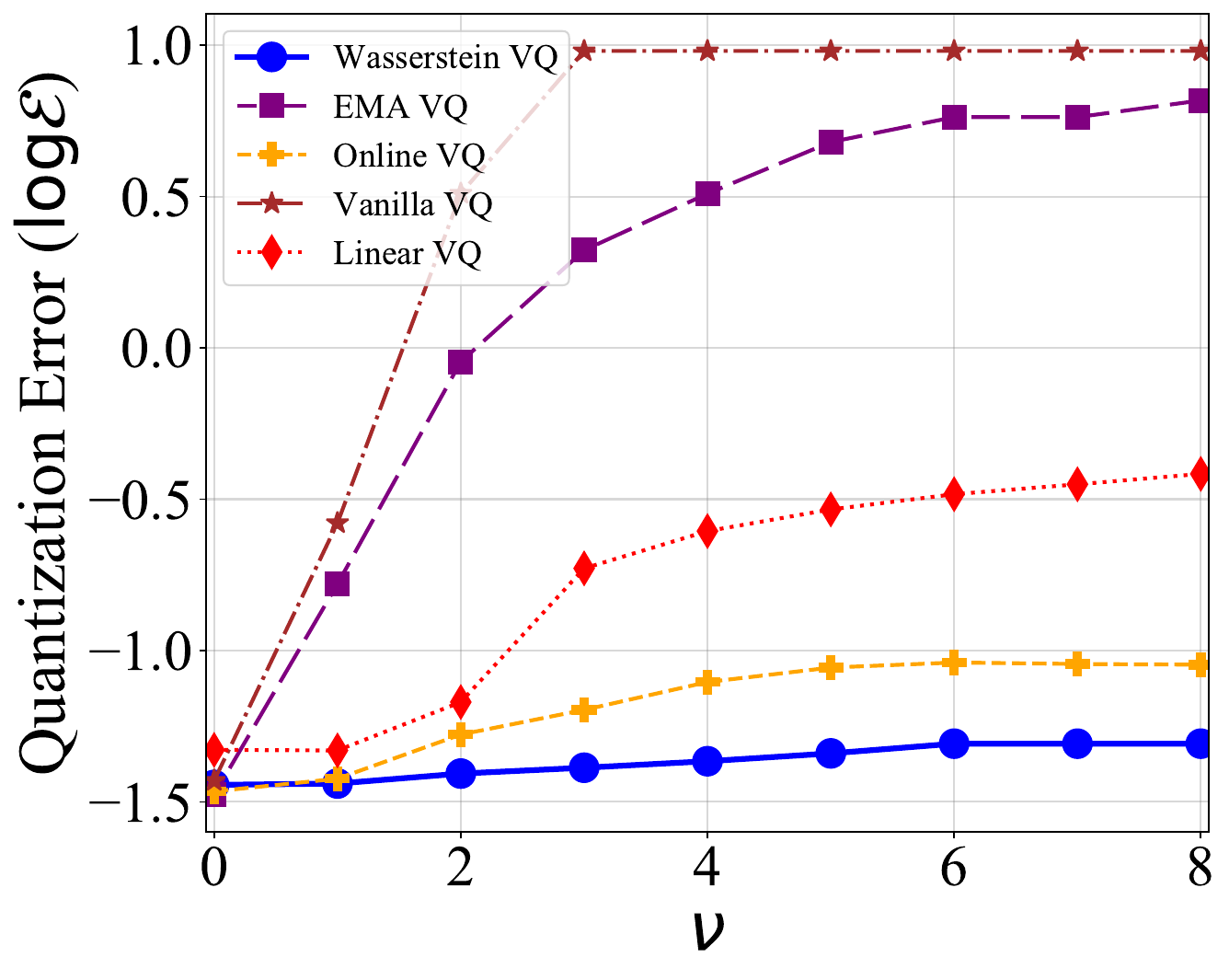}
        }
	\subfigure[$\cU$ w.r.t. $\nu$]{         
        \label{fig:vq codebook utilization mean(uniform)}  
		\includegraphics[width=0.22\textwidth]{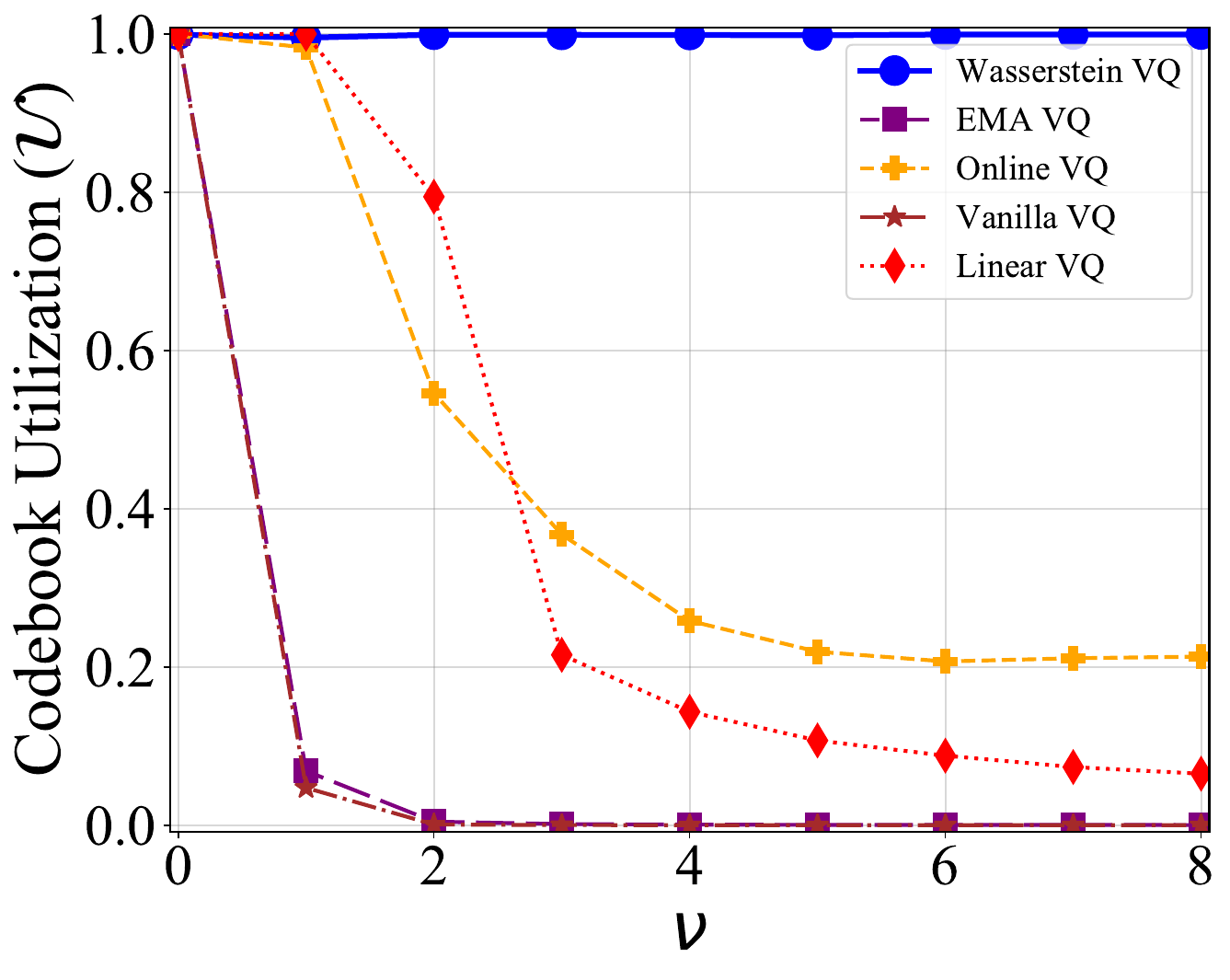}
        }
	\subfigure[$\mathcal{C}$ w.r.t. $\nu$]{         
        \label{fig:vq codebook perplexity mean(uniform)}  
		\includegraphics[width=0.22\textwidth]{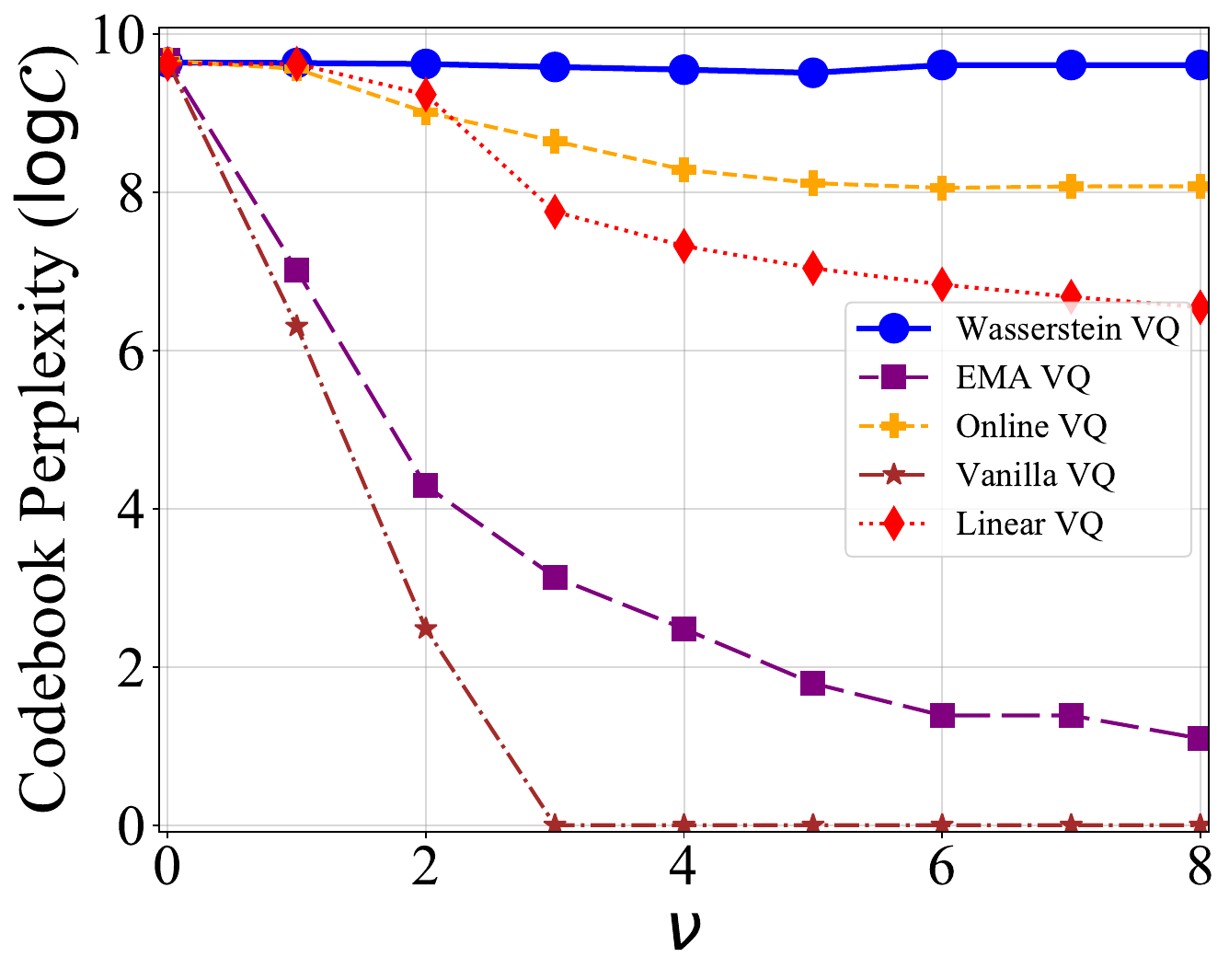}
        }
    \subfigure[$\mathcal{L}_{W}$ w.r.t. $\nu$]{         
        \label{fig:wasserstein distance mean(uniform)}  
		\includegraphics[width=0.22\textwidth]{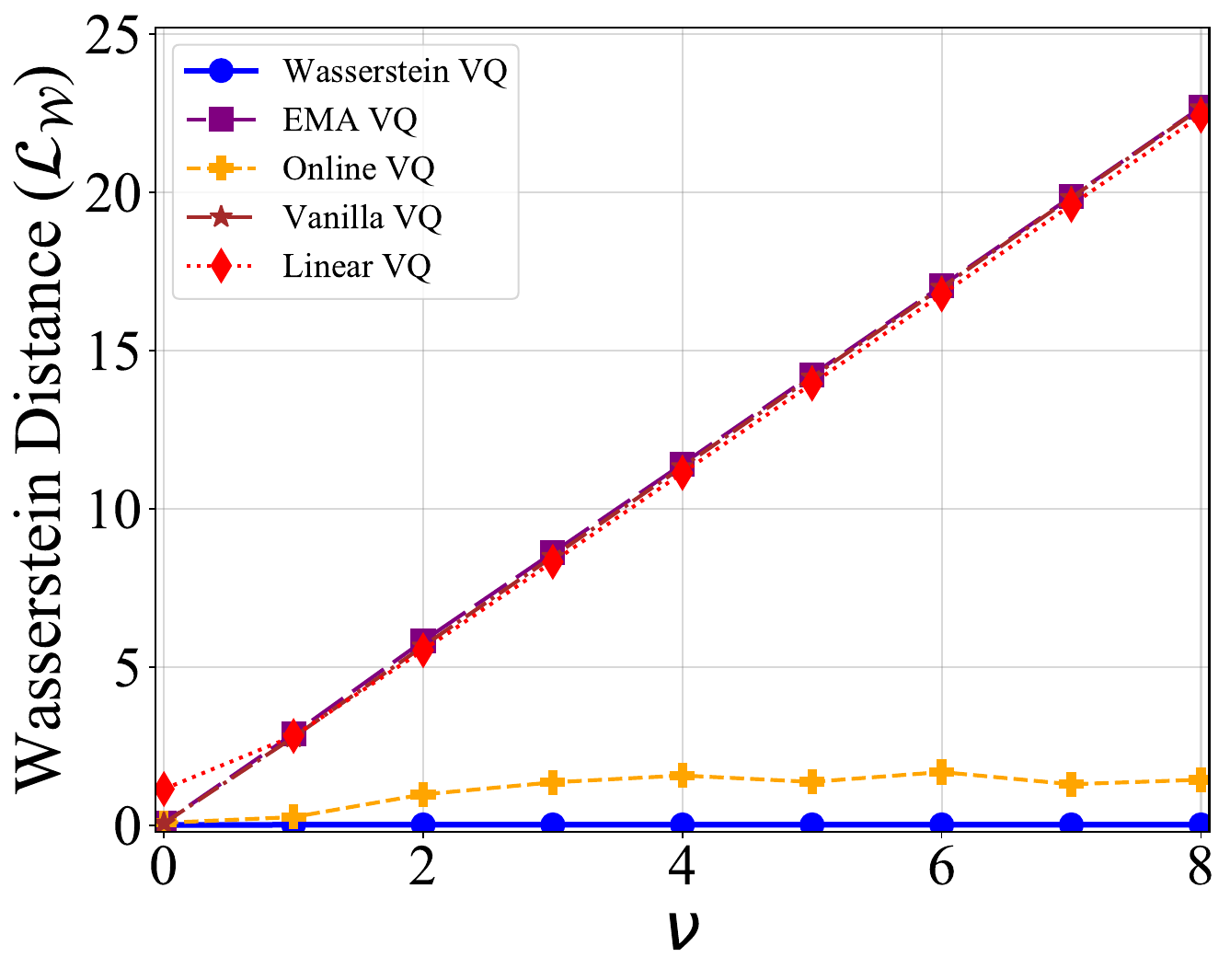}
        }
	\caption{\small{The performance metrics $(\cE, \cU, \cC)$  for various VQ approaches. For panels (a) to (d), the codebook distribution is initialized as a Gaussian distribution, while for panels (e) to (h), the codebook distribution is initialized as a uniform distribution.}}
	\label{fig:analysis on criterion triple (VQ)}
\end{figure*}

\section{A Fair Setting to Evaluate Criterion Triple Evaluation}
\label{appendix:discussion}
The distribution variance has a substantial impact on $\cE$, as detailed in Appendix~\ref{appendix:Impact of Distribution Variance}. Therefore, the comparison of the quantization error among various VQ methods is unreasonable when the variance of the feature vectors is uncontrollable or unknown. This is because any improvement in quantization error is primarily attributed to the variance reduction rather than the inherent effectiveness of the VQ methods. To ensure a fair criterion triple evaluation, we provide a controlled experimental setting.

Specifically, we fix the feature distributions for all VQ methods to the same Gaussian distributions by setting  $\bz_i \sim \mathcal{N}_d(\mu \cdot \m 1, \bm I)$. Additionally, we initialize the codebook distribution as the standard Gaussian distribution across all VQ methods by sampling $\be_k \sim \mathcal{N}_d(\bm 0, \bm I)$. In this experimental setup, the distribution variance is controlled to be the same for all VQ methods.

Our baseline includes Vanilla VQ~\citep{Oord2017NeuralDR}, EMA VQ~\citep{Razavi2019GeneratingDH}, Online VQ~\citep{Zheng2023OnlineCC}, and Linear VQ (a linear layer projection for frozen code vectors)~\citep{Zhu2024ScalingTC,Zhu2024AddressingRC}. In all VQ algorithms, we treat the sampled code vectors as trainable parameters and optimize them using the respective algorithms. See detailed experimental specifications in Appendix~\ref{appendix:details part2}.

As visualized in Figures~\ref{fig:vq quantization error mean} to~\ref{fig:vq codebook perplexity mean}, \emph{Wasserstein VQ} outperforms all baselines in terms of the criterion triple $(\cE, \cU, \mathcal{C})$, especially when the feature distribution and the initialized codebook distribution have large deviations. Although existing VQ methods can perform well with 
$\mu=0$, this scenario is impractical, as the feature distribution is unknown and changes dynamically during training. When there is a large initial distribution gap between the codebook and the features (at $\mu=5$), all existing VQ methods perform poorly. This indicates that the effectiveness of existing VQ methods is heavily dependent on their codebook initialization.

As observed in Figure~\ref{fig:wasserstein distance mean}, the Wasserstein distance of all existing VQ methods is obviously larger compared to that of \emph{Wasserstein VQ} when $\mu \geq 1.0$, indicating that existing methods cannot achieve effective distribution alignment between features and the codebook. Conversely, \emph{Wasserstein VQ} eliminates the reliance on codebook initialization via proposed explicit distributional matching regularization, thereby delivering the best performance in criterion metrics.

We can arrive at the same conclusion when the feature and codebook distributions are uniform, in which feature vectors are generated from $\unif_d(\nu-1, \nu+1)$ and code vectors are initialized from $\unif_d(-1, 1)$. As shown in Figures~\ref{fig:vq quantization error mean(uniform)} to~\ref{fig:wasserstein distance mean(uniform)}, \emph{Wasserstein VQ} performs the best. This suggests that, despite being based on Gaussian assumptions, the effectiveness of our method exhibits a certain degree of distribution-agnostic behavior.

\section{The Details of Synthetic Experiments} \label{appendix:synthetic experiment details}

\subsection{Experimental Details in Section~\ref{sec:effects of distribution matching} } \label{appendix:details part1}

As depicted in Figure~\ref{fig:criterion triple analysis} in Section~\ref{sec:effects of distribution matching}, we conduct a qualitative analyses of the criterion triple. Specifically, we sample a set of feature vectors $\{\bz_i\}_{i=1}^N$ from within the red circle, and a collection of code vectors $\{\be_k\}_{k=1}^K$ from within the green circle, with parameters set to $K=400$, $N=10000$ and $d=2$ for the calculation of the criterion triple $(\cE, \cU, \mathcal{C})$. For the visualization, we select 10\% of the feature vectors and 90\% of the code vectors for plotting.

\subsection{Experimental Details in Appendix~\ref{appendix:supplementary comprehensive quantitative analyses}} \label{appendix:details part3}

As illustrate in Figure~\ref{fig:quantitative analysis gaussian distribution} in Appendix~\ref{appendix:analyses under gaussian distribution}, we undertake comprehensive quantitative analyses centered around the criterion triple $(\cE, \cU, \mathcal{C})$. In these analyses, we assume that $\mathcal{P}_A$ and $\mathcal{P}_B$ are Gaussian distributions, from which we sample a set of feature vectors $\{\bz_i\}_{i=1}^N$ and a collection of code vectors $\{\be_k\}_{k=1}^K$. The default parameters are set to $N=200,000$, $K=1024$, and $d=32$ for all figures unless otherwise specified. For instance, in Figure~\ref{fig:gaussian_codebooksize_mean_error}, $N$ and $d$ are taken at their default values, while the $K$ is varied within the set $\{128, 256, 512, 1024, 2048, 4096, 8192, 16284\}$. Additionally, each synthetic experiment is repeated five times, and the average results are reported, along with the calculation of 95\% confidence intervals. In all figures, mean results are represented by points, while the confidence intervals are shown as shaded areas. Identical parameter settings are employed when $\mathcal{P}_A$ and $\mathcal{P}_B$ are uniform distributions, as illustrated in Figure~\ref{fig:quantitative analysis uniform distribution} in Appendix~\ref{appendix:analyses under uniform distribution}.

\subsection{Experimental Details in Appendix~\ref{appendix:Impact of Distribution Variance}} \label{appendix:details part4}
We set $K=8192, d=8, N=100000$ when calculating the criterion triple $(\cE, \cU, \mathcal{C})$ in Appendix~\ref{appendix:Impact of Distribution Variance}. Each synthetic experiment is repeated five times, and the average results are reported in Table~\ref{table:distribution variance}. 

\subsection{Experimental Details in Appendix~\ref{appendix:discussion}} \label{appendix:details part2}
We provide experimental details of Figure~\ref{fig:analysis on criterion triple (VQ)} in Appendix~\ref{appendix:discussion}. In our experimental setup, we evaluate five distinct VQ algorithms using the criterion triple $(\cE, \cU, \mathcal{C})$. All experiments run on a single NVIDIA A100 GPU, with a codebook size $K$ of 16,384 and dimensionality $d$ of 8 across all algorithms. Each algorithm trains for 2,000 steps, with 50,000 feature vectors sampled from the specified Gaussian distribution at each step. For \emph{Wasserstein VQ}, Vanilla VQ, and VQ + MLP, we use the SGD optimizer for training. For VQ EMA and Online Clustering, we use classical clustering algorithms—$k$-means~\citep{Bradley1998RefiningIP} and $k$-means++\citep{Arthur2007kmeansTA}—to update code vectors. 

\section{Experimental Details in Section~\ref{sec:experiments}}
\label{appendix:experimental details}

\begin{table}[!t]
\centering 
\small
\vspace{-3ex}
\caption{Hyperparameters for the experiments in Table~\ref{tab:vqvae ffhq},~\ref{tab:vqvae imagenet}, ~\ref{tab:vqgan ffhq}, and Table~\ref{tab:vqvae cifar10} and~\ref{tab:vqvae svhn}.}
\label{table:vqvae_hparams}
\begin{tabular}{lcccc}
\toprule
Frameworks & VQ-VAE & VQ-VAE & VQ-VAE  & VQGAN\\
\midrule
\multicolumn{1}{l|}{Dataset} & CIFAR-10/SVHN & FFHQ & ImageNet & FFHQ \\
\multicolumn{1}{l|}{Input size} & $32\times 32\times 3$ & $256\times 256\times 3$ & $256\times 256\times 3$ & $256\times 256\times 3$\\
\multicolumn{1}{l|}{Latent size} & $8\times 8 \times 8$ & $16\times 16 \times 8$ & $16\times 16 \times 8$ & $16\times 16 \times 8$ \\
\multicolumn{1}{l|}{encoder/decoder channels} & $64$ & $64$ & $64$ & $128$\\
\multicolumn{1}{l|}{encoder/decoder channel mult.} & $[1, 1, 2]$ & $[1, 1, 2, 2, 4]$ & $[1, 1, 2, 2, 4]$ & $[1, 1, 2, 2, 4]$ \\
\multicolumn{1}{l|}{Batch size} & 128 & 32 & 32 & 64\\
\multicolumn{1}{l|}{Initial Learning rate $lr$} & $5\times 10^{-5}$ & $5\times 10^{-5}$ & $5\times 10^{-5}$ & $5\times 10^{-4}$\\
\multicolumn{1}{l|}{Codebook Loss Coefficient} & $1.0$ & $1.0$ & $1.0$ & $1.0$\\
\multicolumn{1}{l|}{Perceptual loss Coefficient} & $0$ & $0$ & $0.0$ & $1.0$\\
\multicolumn{1}{l|}{Adversarial loss Coefficient} & $0$ & $0$ & $0.0$ & $0.2$\\
\multicolumn{1}{l|}{Codebook dimensions} & $8$ & $8$ & $8$  & $8$\\
\multicolumn{1}{l|}{Training Epochs} & $50$ & $30$ & $4$ & $200$ \\
\multicolumn{1}{l|}{GPU Resources} & V100 16GB &  A100 40GB &  A100 40GB & 2 A100 80GB \\
\bottomrule
\end{tabular}
\vspace{-3ex}
\end{table}

\paragraph{Data Augmentation} For FFHQ and ImageNet-1k datasets, we follow LLama Gen~\citep{Sun2024AutoregressiveMB} and apply iterative box downsampling to resize images to 256×256 resolution. For CIFAR-10 and SVHN, the images are kept at their original resolution. Details are provided in Table~\ref{table:vqvae_hparams}.

\paragraph{Encoder-Decoder Architecture} For the ImageNet and FFHQ datasets, within both the VQ-VAE and VQGAN frameworks, our proposed Wasserstein VQ and all baseline methods adopt identical encoder-decoder architectures and parameter configurations, following the original VQGAN implementation~\citep{Esser2020TamingTF}. Across all baselines in these frameworks, the encoder—a U-Net~\citep{Ronneberger2015UNetCN}—downscales the input image by a factor of 16. For CIFAR-10 and SVHN datasets, the encoder reduces the input resolution by a factor of 4. Further details are provided in Table~\ref{table:vqvae_hparams}.

\paragraph{Training Details} All experiments employ identical training settings: we use the AdamW optimizer~\citep{Loshchilov2017DecoupledWD} with $\beta_1=0.9$ and $\beta_1=0.95$, an initial learning rate $lr$, and apply a half-cycle cosine decay schedule following a linear warm-up phase. For specific details on training epochs and batch sizes, refer to Table~\ref{table:vqvae_hparams}.

\paragraph{Loss Weight} For all three baselines, $\beta$ is typically set to a value within the range $[0.25, 2]$. In our experiments, $\beta$ is set to a fixed value of $1.0$. For our proposed \emph{Wasserstein VQ} model, we set $\beta$ to a much smaller value,  e.g., $\beta=0.1$ for VQ-VAE and VQGAN. The smaller $\beta$ values enable the Wasserstein distance to dominate the loss function, thereby more effectively narrowing the gap between the distributions.

\section{VQ-VAE Performance on CIFAR-10 and SVHN datasets}
\label{appendix:vqvae cifar-10 and svhn}
Due to space limitations in the main text, we have relocated the VQ-VAE evaluation on CIFAR-10 and SVHN datasets to the appendix. As demonstrated in Table~\ref{tab:vqvae cifar10} and~\ref{tab:vqvae svhn}, our Wasserstein VQ consistently outperforms all baselines across both datasets, achieving superior results on nearly all evaluation metrics regardless of codebook size. Notably, we observe that Wasserstein VQ fails to reach 100\% codebook utilization on SVHN, which may be attributed to the dataset’s limited diversity.

\begin{table*}[t]
  \centering
  \caption{\small{Comparison of VQ-VAEs trained on CIFAR-10 dataset following~\citep{Oord2017NeuralDR}.}}
  \label{tab:vqvae cifar10}
  \resizebox{0.95\textwidth}{!}{
  \begin{tabular}{lccccccc} \hline
    Approach  & Tokens & Codebook Size & $\mathcal{U}$ ($\uparrow$) & $\mathcal{C}$ ($\uparrow$) & PSNR($\uparrow$) & SSIM($\uparrow$) & Rec. Loss ($\downarrow$) \\\hline
    Vanilla VQ & 64 & $8192$ & 2.7\% & 186.9 & 27.15 & 0.83 & 0.0147   \\
    EMA VQ & 64 & $8192$ & 99.7\% & 6416.1 & 29.43 & 0.88 & 0.0095  \\
    Online VQ & 64 & $8192$ & 22.1\% & 995.4 & 28.20 & 0.85 & 0.0123  \\
    \textbf{Wasserstein VQ}  & 64 & $8192$ & \textbf{100.0\%} & \textbf{7781.8}  & \textbf{29.88} & \textbf{0.90} & \textbf{0.0085} \\\hline
    Vanilla VQ & 64 & $16384$ & 1.6\% & 220.3 & 27.36 & 0.84 & 0.0141  \\
    EMA VQ & 64 & $16384$ & 80.8\% & 10557.3 & 29.43 & 0.88 & 0.0093\\
    Online VQ & 64 & $16384$ & 13.4\% & 798.5 & 27.54 & 0.82 & 0.0141 \\
    \textbf{Wasserstein VQ}  & 64 & $16384$ & \textbf{100.0\%} & \textbf{15583.7}  & \textbf{30.19} & \textbf{0.90} & \textbf{0.0080}  \\\hline
    Vanilla VQ & 64 & $32768$ & 0.5\% & 154.8 & 27.10 & 0.83 & 0.0150  \\
    EMA VQ & 64 & $32768$ & 54.4\% & 14427.0 & 29.57 & 0.88 & 0.0091 \\
    Online VQ & 64 & $32768$   & 7.2\% & 1556.0 & 28.84 & 0.87 & 0.0106 \\
    \textbf{Wasserstein VQ} & 64 & $32768$ & \textbf{99.0\%} & \textbf{29845.1}  & \textbf{30.63} & \textbf{0.91} & \textbf{0.0071} \\\hline
  \end{tabular}
}
\end{table*}

\begin{table*}[t]
  \centering
  \vspace{-2ex}
  \caption{\small{Comparison of VQ-VAEs trained on SVHN dataset following~\citep{Oord2017NeuralDR}.}}
  \label{tab:vqvae svhn}
  \resizebox{0.95\textwidth}{!}{%
  \begin{tabular}{lccccccc} \hline
    Approach & Tokens & Codebook Size & $\mathcal{U}$ ($\uparrow$) & $\mathcal{C}$ ($\uparrow$) & PSNR($\uparrow$) & SSIM($\uparrow$) & Rec. Loss ($\downarrow$) \\\hline
    Vanilla VQ  & 64 & $8192$ & 8.1\% & 533.1 & 37.81 & 0.97 & 0.0018 \\
    EMA VQ & 64 & $8192$ & 56.8\% & 3363.0 & 40.38 & \textbf{0.98} & 0.0010 \\
    Online VQ  & 64 & $8192$ & 27.8\% & 1325.1 & 39.04 & 0.97 & 0.0016 \\
    \textbf{Wasserstein VQ}  & 64 & $8192$ & \textbf{88.2\%} & \textbf{6154.5}  & \textbf{41.04} & \textbf{0.98} & \textbf{0.0009} \\\hline
    Vanilla VQ & 64 & $16384$ & 3.4\% & 446.0 & 37.87 & 0.97 & 0.0017 \\
    EMA VQ & 64 & $16384$ & 22.2\% & 2593.8 & 40.19 & \textbf{0.98} & 0.0011 \\
    Online VQ & 64 & $16384$ & 13.5\% & 1090.5 & 39.12 & 0.97 & 0.0014  \\
    \textbf{Wasserstein VQ} & 64 & $16384$ & \textbf{87.5\%} & \textbf{11967.2}  & \textbf{41.49} & \textbf{0.98} & \textbf{0.0008}  \\\hline
    Vanilla VQ & 64 & $32768$ & 1.8\% & 467.5 & 37.87 & 0.97 & 0.0017  \\
    EMA VQ & 64 & $32768$ & 35.8\% & 7662.9 & 40.25 & \textbf{0.98} & 0.0010  \\
    Online VQ & 64 & $32768$ & 7.0\% & 1334.8 & 39.26 & 0.97 & 0.0014 \\
    \textbf{Wasserstein VQ} & 64 & $32768$ & \textbf{88.7\%} & \textbf{24376.3}  & \textbf{41.84} & \textbf{0.98} & \textbf{0.0008}  \\\hline
  \end{tabular}
}
\end{table*}

\section{Analyses on Codebook Size and Dimensionality}
\label{appendix:ablation study}
\begin{table*}[!h]
  \centering
  \caption{\small{Supplementary comparison of VQ-VAEs trained on FFHQ dataset following~\citep{Oord2017NeuralDR} w.r.t codebook size $K$.}}
  \label{tab:vqvae ffhq codebook size}
  \resizebox{0.95\textwidth}{!}{%
  \begin{tabular}{lccccccc} \hline
    Approach & Tokens & Codebook Size & $\mathcal{U}$ ($\uparrow$) & $\mathcal{C}$ ($\uparrow$) & PSNR($\uparrow$) & SSIM($\uparrow$) & Rec. Loss ($\downarrow$)\\\hline
    Vanilla VQ & 256 & 1024 & 51.7\% & 446.2 & 27.64 & 73.0 & 0.0125 \\
    EMA VQ & 256 & 1024 & 74.1\% & 618.9 & 27.66 & 72.7 & 0.0125  \\
    Online VQ & 256 & 1024 & \textbf{100.0\%} & 759.3 & 28.08 & 74.0 & 0.0114  \\
    \emph{Wasserstein VQ} & 256 & 1024 & \textbf{100.0\%} & \textbf{977.4} & \textbf{28.11} & \textbf{74.4} & \textbf{0.0112} \\\hline
    Vanilla VQ & 256 & 2048 & 27.6\% & 453.0 & 27.78 & 73.8 & 0.0121 \\
    EMA VQ & 256 & 2048 & \textbf{100\%} & 1608.0 & \textbf{28.39} & 74.9 & \textbf{0.0107}  \\
    Online VQ & 256 & 2048 & \textbf{100\%} & 1462.6 & 28.34 & 74.6 & 0.0108  \\
    \emph{Wasserstein VQ} & 256 & 2048 & \textbf{100\%} & \textbf{1840.5} & 28.32 & \textbf{75.3} & \textbf{0.0107} \\\hline
    Vanilla VQ & 256 & 4096 & 12.5\% & 435.0 & 27.84 & 73.7 & 0.0119 \\
    EMA VQ & 256 & 4096 & 76.7\% & 2443.1 & 28.49 & 75.0 & 0.0104 \\
    Online VQ & 256 & 4096 & 70.7\% & 1600.0 & 28.25 & 74.1 & 0.0110 \\
    \emph{Wasserstein VQ} & 256 & 4096 & \textbf{100\%} & \textbf{3895.4} & \textbf{28.54} & \textbf{75.1} & \textbf{0.0102} \\\hline
    Vanilla VQ  & 256 & 8192 & 5.6\% & 398.1 & 27.69 & 73.5 & 0.0122  \\
    EMA VQ & 256 & 8192 & 28.9\% & 1839.2 & 28.39 & 74.8 & 0.0106  \\
    Online VQ & 256 & 8192 & 34.9\% & 1474.4 & 28.15 & 73.9 & 0.0113 \\
    \emph{Wasserstein VQ} & 256 & 8192 & \textbf{100\%} & \textbf{7731.5} & \textbf{28.81} & \textbf{76.2} & \textbf{0.0099} \\\hline
    \vspace{-0.4cm}
  \end{tabular}
}
\end{table*}

We investigate the impact of the codebook size $K$  on the performance of VQ by varying across a wide range: $K \in [1024, 2048, 4096, 8192, 16384, 50000, 100000]$. As shown in Table~\ref{tab:vqvae ffhq} and Table~\ref{tab:vqvae ffhq codebook size}, the vanilla VQ model suffers from severe codebook collapse even with a relatively small $K$, such as $K=1024$. In contrast, improved algorithms like EMA VQ and Online VQ can handle smaller codebook sizes effectively, but they still experience codebook collapse when $K$ is very large, e.g., $K\geq 50000$.
Notably, the \emph{Wasserstein VQ} model consistently maintains 100\% codebook utilization, irrespective of the codebook size. This underscores the effectiveness of distributional matching via the quadratic Wasserstein distance in mitigating the issue of codebook collapse.

We further investigate the impact of codebook dimensionality $d$ on VQ performance. Conducting experiments on CIFAR-10 with dimensionality $d$ ranging from 2 to 32, our proposed Wasserstein VQ consistently outperforms all baselines regardless of dimensionality, as shown in Table 9. Notably, we observe the curse of dimensionality phenomenon—performance degrades as dimensionality increases. Vanilla VQ exhibits the most severe degradation, followed by EMA VQ and Online VQ, while our Wasserstein VQ shows only minimal codebook utilization reduction.

\begin{table*}[t]
  \centering
  \caption{\small{Analysis On codebook dimension by the comparison of VQ-VAEs trained on CIFAR-10 dataset following~\citep{Oord2017NeuralDR}. (The codebook size $K$ is fixed to 16384)}}
  \label{tab:vqvae cifar-10}
  \resizebox{0.98\textwidth}{!}{%
  \begin{tabular}{lccccccc} \hline
    Approach  & Tokens & Codebook Dim & $\mathcal{U}$ ($\uparrow$) & $\mathcal{C}$ ($\uparrow$) & PSNR($\uparrow$) & SSIM($\uparrow$) & Rec. Loss ($\downarrow$)\\\hline
    Vanilla VQ & 256 & 2 & 3.8\% & 532.2 & 27.00 & 0.80 & 0.0162 \\
    EMA VQ & 256 & 2 & 97.6\% & \textbf{14460.3} & 27.25 & 0.80 & \textbf{0.0155} \\
    Online VQ & 256 & 2 & 9.0\% & 611.8 & 26.62 & 0.79 & 0.0178 \\
    \emph{Wasserstein VQ} & 256 & 2 & \textbf{99.3\%} & 12278.9  & \textbf{27.30} & \textbf{0.81} & \textbf{0.0155} \\\hline
    Vanilla VQ  & 256 & 4 & 1.3\% & 176.7 & 27.15 & 0.83 & 0.0149 \\
    EMA VQ  & 256 & 4 & 99.8\% & 13153.9 & 29.57 & 0.89 & 0.0092 \\
    Online VQ  & 256 & 4 & 11.1\% & 877.7 & 26.69 & 0.79 & 0.0173 \\
    \emph{Wasserstein VQ}  & 256 & 4 & \textbf{100.0\%} & \textbf{15724.7}  & \textbf{29.93} & \textbf{0.89} & \textbf{0.0087} \\\hline
    Vanilla VQ   & 256 & 8 & 1.6\% & 220.3 & 27.36 & 0.84 & 0.0141 \\
    EMA VQ  & 256 & 8 & 80.8\% & 10557.3 & 29.43 & 0.88 & \textbf{0.0009} \\
    Online VQ & 256 & 8  & 13.4\% & 798.5 & 27.54 & 0.82 & 0.0141 \\
    \emph{Wasserstein VQ} & 256 & 8  & \textbf{100.0\%} & \textbf{15583.7}  & \textbf{30.19} & \textbf{0.90} & 0.0080 \\\hline
    Vanilla VQ & 256 & 16  & 1.1\% & 150.8 & 27.05 & 0.83 & 0.0152 \\
    EMA VQ & 256 & 16 & 32.5\% & 4169.2 & 29.31 & 0.88 & 0.0099 \\
    Online VQ & 256 & 16 & 18.2\% & 2051.0 & 28.29 & 0.85 & 0.0122 \\
    \emph{Wasserstein VQ} & 256 & 16 & \textbf{99.2\%} & \textbf{14832.2}&\textbf{30.27}  & \textbf{0.91} & \textbf{0.0078} \\\hline
    Vanilla VQ & 256 & 32 & 0.7\% & 94.37 & 26.67 & 0.81 & 0.0165 \\
    EMA VQ & 256 & 32 & 7.0\% & 942.7 & 28.24 & 0.85 & 0.0122 \\
    Online VQ & 256 & 32 & 18.8\% & 2278.0 & 28.92 & 0.87 & 0.0104 \\
    \emph{Wasserstein VQ} & 256 & 32 & \textbf{96.4\%} & \textbf{14056.9}  & \textbf{30.39} & \textbf{0.91} & \textbf{0.0076} \\\hline
    \vspace{-0.4cm}
  \end{tabular}
}
\end{table*}

\section{Discussion with VQ-WAE~\citep{Vuong2023VectorQW}}
\label{appendix:VQ-WAE}
VQ-WAE~\citep{Vuong2023VectorQW} introduces an alternative approach to distributional matching by employing Optimal Transport to optimize codebook vectors. Compared with our proposed distributional matching method, there are three key differences.

\textbf{First, regarding theoretical contributions}: VQ-WAE~\citep{Vuong2023VectorQW} claims that achieving optimal transport (OT) between code vectors and feature vectors yields the best reconstruction performance. Their notion of optimality encompasses both the VQ process and the encoder-decoder reconstruction pipeline. While we contend that incorporating complex encoder-decoder functions renders rigorous theoretical analysis intractable, VQ-WAE nevertheless asserts this conclusion. In contrast, our work deliberately excludes encoder-decoder components, focusing solely on the VQ process, which admits rigorous mathematical modeling. Through our proposed criterion triple, we theoretically prove that distributional matching guarantees optimal performance.

\textbf{Second, regarding distribution modeling}: VQ-WAE~\citep{Vuong2023VectorQW} assumes both code vectors and feature vectors follow uniform discrete distributions, whereas our method models them as continuous distributions. Specifically, VQ-WAE~\citep{Vuong2023VectorQW} represents the distributions of feature vectors $\{\bz_i\}_{i=1}^N$ and code vectors $\{\be_k\}_{k=1}^K$ as empirical measures:
\begin{align}
\label{eq:vqwae distribution}
\mathcal{P}_A = \frac{1}{N}\sum_{i=1}^{N} \delta_{\bz_i}, \quad \mathcal{P}_B = \frac{1}{N}\sum_{k=1}^{K} \delta_{\be_k}
\end{align} 
where $\delta_{\bz_i}$ and $\delta_{\be_k}$ denote Dirac delta functions centered at $\bz_i$ and $\be_k$, respectively. To align  $\mathcal{P}_A$ and $\mathcal{P}_B$, VQ-WAE formulates the OT problem as:
\begin{equation}
\label{eq:ot_problem}
\begin{aligned}
&\min_{\mathbf{P} \in \Pi(\mathcal{P}_A, \mathcal{P}_B)} \sum_{i=1}^N \sum_{k=1}^K P_{ik} \|\bz_i - \be_k\|^2, \\
&\text{s.t.} \quad \mathbf{P} \mathbf{1}_K = \frac{1}{N}\mathbf{1}_N, \quad \mathbf{P}^\top \mathbf{1}_N = \frac{1}{K}\mathbf{1}_K, \quad P_{ik} \geq 0 \quad \forall i,k,
\end{aligned}
\end{equation}
where $\mathbf{P}$ is the transport plan, and the feasible set is:
\begin{equation}
\label{eq:coupling_set}
\Pi(\mathcal{P}_A, \mathcal{P}_B) = 
\left\{ 
\mathbf{P} \in \mathbb{R}_+^{N \times K} 
\; \middle| \;
\mathbf{P} \mathbf{1}_K = \frac{1}{N} \mathbf{1}_N, \ 
\mathbf{P}^\top \mathbf{1}_N = \frac{1}{K} \mathbf{1}_K 
\right\}
\end{equation}
In contrast, we simplify the distributional assumption by modeling $\mathcal{P}_A$ and  $\mathcal{P}_B$ as Gaussian distributions.

\begin{table*}[!t]
\centering
\caption{Reconstruction performance ($\downarrow:$ the lower the better and $\uparrow:$ the higher the better). $^{\dagger}$:Results cited from VQ-WAE~\citep{Vuong2023VectorQW}. Codebook size $K$ is fixed to $512$.}
\resizebox{0.98\textwidth}{!}{
\begin{tabular}{lrcrrrrrr}
\toprule 
Dataset & Model & Tokens & SSIM $\uparrow$ & PSNR $\uparrow$ & LPIPS $\downarrow$ & Rec. Loss ($\downarrow$)  & Perplexity $\uparrow$
\tabularnewline 
\midrule 
CIFAR10 & VQ-VAE$^{\dagger}$  & 64 & 70 & 23.14 & 0.35 & & 69.8  \tabularnewline
 & SQ-VAE$^{\dagger}$  & 64 & 80 & 26.11 & 0.23 & & 434.8 \tabularnewline
 & VQ-WAE$^{\dagger}$  & 64 & 80 & 25.93 & 0.23 & & 497.3 \tabularnewline
 & VQ-WAE (Our run) & 64 & 13 & 14.60 & 0.41 & 0.247 & 1.0 \tabularnewline
 & Vanilla VQ & 64 & 83 & 27.19 & 0.03 & 0.015 &192.5\tabularnewline
 & EMA VQ  & 64 & 84 & 27.97 & 0.04 & 0.013 & 436.1 \tabularnewline
 & Online VQ & 64 & 84 & 27.87 & 0.04  & 0.013 &451.4\tabularnewline
 & Wasserstein VQ & 64 & 86 & 28.26 & 0.03  & 0.012 & 481.7 \tabularnewline
\midrule         
SVHN & VQ-VAE$^{\dagger}$ & 64 & 88 & 26.94 & 0.17  & & 114.6 \tabularnewline
 & SQ-VAE$^{\dagger}$  & 64 & 96 & 35.37 & 0.06 & & 389.8 \tabularnewline
 & VQ-WAE$^{\dagger}$  & 64 & 96 & 34.62 & 0.07 &  & 485.1 \tabularnewline
 & VQ-WAE (Our run) & 64 & 25 & 15.87 & 0.26 & 0.2026 &1.0\tabularnewline
 & Vanilla VQ & 64 & 97 & 38.18 & 0.01 & 0.0016 & 407.1\tabularnewline
 & EMA VQ & 64 & 97 & 38.35 & 0.01 & 0.0017 & 408.9 \tabularnewline
 & Online VQ & 64 & 97 & 38.54 & 0.01 & 0.0017 & 421.5\tabularnewline
 & Wasserstein VQ & 64 & 97& 38.25 & 0.01 & 0.0016 & 423.5 \tabularnewline
\midrule 
\end{tabular}}
\label{tab:vq-wae}
\end{table*}

\textbf{Third, regarding computational efficiency}, The OT problem in VQ-WAE is prohibitively complex, whereas our quadratic Wasserstein distance incurs minimal overhead. To mitigate complexity, VQ-WAE employs a Kantorovich potential network. However, upon reproducing their code (no official implementation was released; we derived it from their ICLR 2023 supplementary material\footnote{See \url{https://openreview.net/forum?id=Z8qk2iM5uLI}. We includes the reproduced code and training logs of VQ-WAE in our supplementary materials.}), we observed severe non-convergence—the method degenerated to using a single code vector, failing to achieve distributional matching. Notably, VQ-WAE underperformed all other VQ baselines (Table~\ref{tab:vq-wae}).

In comparison, our quadratic Wasserstein distance (Equation~\ref{eq:wasserstein distance}) requires only low-dimensional matrix operations (e.g., $d=8$), achieving superior performance and effective matching (Figure~\ref{fig:feature codebook visualization}).






\end{document}